\definecolor{red}{HTML}{E51400}  
\definecolor{blue}{HTML}{0050EF} 
\definecolor{green}{HTML}{008A00} 
\definecolor{purple}{HTML}{AA00FF} 
\definecolor{dark-red}{rgb}{0.4, 0.15, 0.15}
\definecolor{dark-blue}{rgb}{0.15, 0.15, 0.4}
\definecolor{medium-red}{rgb}{0.5, 0, 0}
\definecolor{medium-blue}{rgb}{0, 0, 0.5}
\definecolor{light-red}{rgb}{0.7, 0, 0}
\definecolor{light-blue}{rgb}{0, 0, 0.7}
\newtheorem{theorem}{\bf Theorem}
\newtheorem{lemma}{\bf Lemma}
\newtheorem{corollary}{\bf Corollary}
\newtheorem{definition}{\bf Definition}
\newtheorem{assumption}{Assumption}
\theoremstyle{definition}
\newtheorem{remark}{\bf Remark}
\definecolor{red}{HTML}{E51400} 
\definecolor{blue}{HTML}{0050EF} 
\definecolor{green}{HTML}{008A00} 
\definecolor{purple}{HTML}{AA00FF} 
\definecolor{orange}{HTML}{FF7F00}
\definecolor{gray}{HTML}{848482}
\definecolor{Gray}{gray}{0.85}
\definecolor{LightGray}{gray}{0.96}
\newcommand{\carlee}[1]{{\textcolor{cyan}{CJW: #1}}}
\DeclareMathOperator*{\argmin}{argmin}
\DeclareMathOperator*{\argmax}{argmax}
\newcommand{\R}{\mathbb{R}}
\newcommand{\offalgo}{Off-C$^2$PL}
\newcommand{\hybalgo}{A$^2$-Off-C$^2$PL}
\newcommand{\hybmodel}{active-data augmented model}
\newcommand{\offmodel}{pure offline model}
\newcommand*{\rom}[1]{\expandafter\@slowromancap\romannumeral #1@}
\newcommand{\ts}[1]{}
\newcommand{\compilefullversion}{true}
	\newcommand{\OnlyInFull}[1]{}
	\newcommand{\OnlyInShort}[1]{#1}
	\newcommand{\OnlyInFull}[1]{#1}%
	\newcommand{\OnlyInShort}[1]{}%
\newcommand{\compilehidecomments}{false}
	\newcommand{\wei}[1]{}
	\newcommand{\xutong}[1]{}
	\newcommand{\jinhang}[1]{}
	\newcommand{\siwei}[1]{}
        \newcommand{\carlee}[1]{}
\newcommand{\wei}[1]{{\color{blue}{[Wei: #1]}}}
\newcommand{\xutong}[1]{{\color{green} [Xutong: #1]}}
\newcommand{\jinhang}[1]{{\color{orange} [\text{Jinhang:} #1]}}
\newcommand{\siwei}[1]{{\color{red} [\text{Siwei:} #1]}}
\newenvironment{talign*}
 {\csname align*\endcsname}
 {\endalign}
\begin{document}

\title[Offline Clustering of Preference Learning with Active-data Augmentation]{Offline Clustering of Preference Learning with Active-data Augmentation}

\author{Jingyuan Liu}
\affiliation{
\institution{Nanjing University}
\country{China}}
\email{jingyuanliu@smail.nju.edu.cn}

\author{Fatemeh Ghaffari}
\affiliation{
\institution{University of Massachusetts Amherst}
\country{USA}}
\email{fghaffari@umass.edu}

\author{Xuchuang Wang}
\affiliation{
\institution{University of Massachusetts Amherst}
\country{USA}}
\email{xuchuangwang@cs.umass.edu}

\author{Xutong Liu}
\authornote{Xutong Liu is the corresponding author.}
\affiliation{
\institution{University of Washington}
\country{USA}}
\email{xutongl@uw.edu}

\author{Mohammad Hajiesmaili}
\affiliation{
\institution{University of Massachusetts Amherst}
\country{USA}}
\email{hajiesmaili@cs.umass.edu}

\author{Carlee Joe-Wong}
\affiliation{
\institution{Carnegie Mellon University}
\country{USA}}
\email{cjoewong@andrew.cmu.edu}




\begin{abstract}

Preference learning from pairwise feedback is a widely adopted framework in applications such as reinforcement learning with human feedback and recommendations. In many practical settings, however, user interactions are limited or costly, making offline preference learning necessary. Moreover, real-world preference learning often involves users with different preferences. For example, annotators from different backgrounds may rank the same responses differently. This setting presents two central challenges: (1) identifying similarity across users to effectively aggregate data, especially under scenarios where offline data is imbalanced across dimensions, and (2) handling the imbalanced offline data where some preference dimensions are underrepresented. To address these challenges, we study the Offline Clustering of Preference Learning problem, where the learner has access to fixed datasets from multiple users with potentially different preferences and aims to maximize utility for a test user.  To tackle the first challenge, we first propose Off-C$^2$PL for the pure offline setting, where the learner relies solely on offline data. Our theoretical analysis provides a suboptimality bound that explicitly captures the tradeoff between sample noise and bias. To address the second challenge of inbalanced data, we extend our framework to the setting with active-data augmentation where the learner is allowed to select a limited number of additional active-data for the test user based on the cluster structure learned by Off-C$^2$PL. In this setting, our second algorithm, A$^2$-Off-C$^2$PL, actively selects samples that target the least-informative dimensions of the test user’s preference. We prove that these actively collected samples contribute more effectively than offline ones. Finally, we validate our theoretical results through simulations on synthetic and real-world datasets. 

\end{abstract}

%
%
\begin{CCSXML}

\end{CCSXML}

%
\keywords{Preference learning, offline learning, clustering of contextual bandits, dueling bandits, suboptimality}


\maketitle

\section{Introduction}\label{sec:intro}


Learning human preferences is a fundamental building block of modern AI systems. Whether aligning large language models (LLMs) with human values~\citep{ouyang2022training, bai2022training}, recommending movies or products~\citep{yan2022dynamic, aramayo2023multiarmed}, or personalizing digital assistants~\citep{musto2021myrrorbot, stucke2017digital}, systems must understand not only what actions are available but which ones people actually prefer. Unlike traditional supervised learning tasks~\citep{verma2021introduction, jiang2020supervised} with clear ground-truth labels, preference learning must infer the subjective and often heterogeneous nature of human choices. In the examples above, a model that fails to capture preferences may generate fluent yet misaligned LLM outputs, or recommend items that frustrate rather than engage users. 

A practical way to elicit such preferences is through pairwise feedback: rather than assigning absolute scores, users (or annotators) simply indicate which of two options they prefer. Pairwise comparisons are natural in practice: for instance, evaluators can more easily judge which of two LLM responses is better instead of assigning absolute scores to possible responses, and users often reveal preferences implicitly by choosing one product over another. This learning framework has been extensively modeled and studied under the dueling bandits problem, which uses sequences of pairwise comparisons to infer underlying preference structures~\citep{bengs2021preference, yue2012k, dudik2015contextual, saha2021optimal, saha2022efficient}.

Despite significant progress, most prior work assumes a single, shared preference vector, overlooking the fact that preferences vary across users in real-world applications. In LLM alignment, for instance, annotators from different backgrounds or cultures may rank responses differently. In recommendation, users routinely disagree on the same items. If we aggregate all feedback indiscriminately, the result is a one-size-fits-none policy. If we treat each user separately, limited per-user data leads to poor learning. The natural solution is to cluster users with similar preferences: pooling their data to increase sample sufficiency while preserving personalization.


However, clustering similar users becomes particularly challenging in the offline preference learning, where the learner has access only to fixed, pre-collected datasets of pairwise comparisons rather than interactive feedback that enables more accurate preference estimation. This setting is increasingly relevant in practice: in LLM alignment, reinforcement learning from human feedback (RLHF)~\citep{lee2023rlaif, bai2022training, casper2023open} often relies on static datasets of human comparisons between possible responses, while in recommender systems~\citep{ek2015recommender, yan2022dynamic, aramayo2023multiarmed}, historical user logs provide pairwise evidence of preferences between different items. In both cases, the learner must leverage existing data to select actions that maximize user utility (i.e., satisfaction) from this fixed, given dataset~\citep{zhu2023principled, das2024active, liu2024dual, li2025provably}.

Motivated by this gap, we study the problem of \textit{Offline Clustering of Preference Learning}, where $U$ users are partitioned into $J$ clusters. Users in the same cluster share a common preference vector, while those in different clusters do not. Each user has a fixed offline dataset of pairwise comparisons, where for a given context (the input condition or situation, e.g., a prompt in RLHF or user profile in recommendation),
the user provides a binary preference between two candidate actions. We assume preferences follow the Bradley–Terry–Luce (BTL) model~\citep{bradley1952rank, debreu1960individual}. 
The goal is to identify users that have similar preferences as the test user 
and aggregate their data to increase sample sufficiency 
to learn a personalized policy that selects actions with near-optimal expected reward. 

This setting presents two central challenges:
(1) \textit{Identifying similarity across users without coverage assumptions}:  
A central challenge in leveraging offline data from users with potentially different preferences is to identify users that are similar to the test user.  
Prior works on clustering of bandits~\citep{gentile2014online, li2018online, li2019improved, wang2023onlinea, wang2025online, li2025demystifying, liu2025offline} typically rely on an \textit{item regularity} assumption, which requires offline actions to provide balanced information across all preference dimensions, ensuring sufficient data coverage but limiting generality.  
However, this assumption becomes unrealistic in our setting with pairwise feedback~\citep{wang2025online}, where the pairwise feedback may be interdependent and may distort data coverage.  
Thus, our problem demands identifying user similarity directly from imbalanced and potentially incomplete offline data, without relying on any coverage guarantees.
This leads to the second challenge. (2) \textit{Handling imbalanced offline data}: A natural way to mitigate imbalanced offline data is to collect new data samples of strategically chosen preferences, e.g., in a LLM setting, users may be presented with two carefully chosen responses and asked to indicate their preference between them, before receiving the final LLM response.
Active learning approaches~\citep{settles2009active, char2019offline, li2022near, mehta2023sample, ji2024reinforcement, das2024active} mitigate imbalance by querying new comparisons, but they assume fully interactive querying rather than the hybrid offline–active regime considered here. 
Hence, a key challenge is to effectively integrate actively collected data with fixed offline datasets, ensuring that new samples complement rather than exacerbate the imbalance in coverage across preference dimensions.
To address these challenges, we focus on two central research questions:  
\emph{(1) Can we effectively identify users with similar preferences, especially under fixed and imbalanced offline data without relying on coverage assumptions? (2) How can we actively collect additional data to mitigate the impact of poor coverage in imbalanced offline datasets that fail to represent all preference dimensions?}

\begin{table}[t]
\centering
\caption{Summary of main and additional theoretical results.}\label{tab:theoretical_results}
\resizebox{\textwidth}{!}{%
\begin{tabular}{@{}ccccc@{}}
\toprule
\multicolumn{5}{c}{\textbf{Comparisons of Algorithms for Pure Offline Model}} \\
\cmidrule(lr){1-5} & \textbf{Algorithm} & \textbf{Setting} & \textbf{Condition} & \textbf{Suboptimality} \\
\midrule

\textbf{Previous~\citep{zhu2023principled, li2025provably}} & \begin{tabular}[c]{@{}c@{}}
P-MLE~\citep{zhu2023principled}
\\ PDC~\citep{li2025provably}\end{tabular} & \begin{tabular}[c]{@{}c@{}} Pure Offline \\ Single User\end{tabular} & --- & 
$\tilde{O}\bigg( \sqrt{ \frac{d}{\lambda_{1}} } \bigg)$ \\

\rowcolor{gray!15}
\begin{tabular}[c]{@{}c@{}} \textbf{Main Result 1} \\ (\Cref{th1}) \end{tabular} & 
& 
& --- & 
$\tilde{O}\Bigg( \frac{\sqrt{d} \big( 1 + \hat\gamma \sqrt{ N_{1}} \big) }{ \sqrt{\lambda_2} } \Bigg)$  \\

\rowcolor{gray!15}
\begin{tabular}[c]{@{}c@{}} \textbf{Additional Result 1} \\ (\Cref{align_special_form_of_th1}) \end{tabular} & \begin{tabular}[c]{@{}c@{}} \offalgo{} \\ (\Cref{al1}) \end{tabular} 
& \begin{tabular}[c]{@{}c@{}} Pure Offline \\ Multiple Users \end{tabular} & \begin{tabular}[c]{@{}c@{}} Lower Threshold $\hat\gamma\leq\gamma$ \\(\Cref{definition_minimum_heterogeneity_gap})\end{tabular} & 
$\tilde{O}\bigg( \sqrt{ \frac{d}{\lambda_2} } \bigg)$ \\

\rowcolor{gray!15}
\begin{tabular}[c]{@{}c@{}} \textbf{Additional Result 2} \\ (\Cref{th1_item_regularity}) \end{tabular} & 
& 
& \begin{tabular}[c]{@{}c@{}} Item Regularity \\ (\Cref{assumption_item_regularity}) \end{tabular} & 
$\tilde{O}\bigg( \sqrt{\frac{d}{\tilde\lambda_a}} \Big( \sqrt{ \frac{1}{N_{3} } } + \hat\gamma\sqrt{\eta_{1}} \Big) \bigg)$ \\

\midrule
\multicolumn{5}{c}{\textbf{Comparisons of Algorithms for Active-data Augmented Model}} \\
\cmidrule(lr){1-5} & \textbf{Algorithm} & \textbf{Setting}
& \textbf{Condition} & \textbf{Suboptimality} \\

\midrule

\begin{tabular}[c]{@{}c@{}}\textbf{Previous~\citep{das2024active}}
\end{tabular}& APO~\citep{das2024active} & \begin{tabular}[c]{@{}c@{}} Pure Active \\ Single User \end{tabular} & --- & 
$\tilde{O}\bigg( \frac{ d }{ \sqrt{ N } } \bigg)$ \\

\rowcolor{gray!10}
\begin{tabular}[c]{@{}c@{}}\textbf{Main Result 2} \\ (\Cref{th2}) \end{tabular} & 
& 
& --- & 
$\tilde{O}\Bigg( \frac{ \sqrt{d} \big( 1 + \hat\gamma \sqrt{ N_{1} } \big) }{ \sqrt{\lambda_3 + N/d } } \Bigg)$ \\

\rowcolor{gray!10}
\begin{tabular}[c]{@{}c@{}} \textbf{Additional Result 3} \\ (\Cref{corollary_extreme_datasets}) \end{tabular} & \begin{tabular}[c]{@{}c@{}} \hybalgo{} \\ (\Cref{al2}) \end{tabular} & \begin{tabular}[c]{@{}c@{}} Hybrid (Offline + Active) \\ Multiple Users \end{tabular}  & Imbalanced Dataset (\Cref{definition_sample_extreme_matrix}) & 
$\tilde{O}\Bigg( \frac{\sqrt{d} \big( 1 + \hat\gamma \sqrt{ N_{1} } \big) }{ \sqrt{\lambda_2 + N } } \Bigg)$ \\

\rowcolor{gray!10}
\begin{tabular}[c]{@{}c@{}} \textbf{Additional Result 4} \\ (\Cref{corollary_imbalanced_regularity_datasets}) \end{tabular} & 
& 
& 
\begin{tabular}[c]{@{}c@{}}Item Regularity (\Cref{assumption_item_regularity})\\ + Imbalanced Dataset (\Cref{definition_sample_extreme_matrix})\end{tabular} & 
$\tilde O\bigg( \sqrt{\frac{d}{\tilde\lambda_a}} \Big(\sqrt{\frac{1}{N_{3}}} + \hat\gamma\sqrt{\eta_{2}} \Big)
\bigg)$ \\
\bottomrule\end{tabular}}
\begin{minipage}{\textwidth}
\scriptsize
\setlength{\parindent}{0pt}\raggedright
Here, $d$ denotes the dimension of each user's preference vector. $\lambda_1$, $\lambda_2$, and $\lambda_3$ represent the minimum eigenvalue of the (regularized) information matrix constructed from (i) the test user’s offline data only, (ii) the test user’s offline data combined with aggregated data from clustered neighbors, and (iii) case (ii) further augmented with $N$ actively selected samples for the test user, respectively. $\tilde\lambda_a$ is the smoothed item regularity parameter, which lower bounds the information matrix in terms of the number of samples used. $N_1$ denotes the number of heterogeneous offline samples included, $N_2$ the total number of offline samples used, and $N_3$ the total number of samples combining offline and active data. Finally, $\eta_1 = N_1/N_2$ and $\eta_2 = N_1/N_3$ represent the fraction of heterogeneous samples among all offline samples and among the combined offline–active datasets, respectively.
\end{minipage}
\vspace{-0.3in}
\end{table}

\Cref{tab:theoretical_results} summarizes the main contributions of our paper (with the key notations introduced at the bottom of the table). 
We highlight four key \textbf{contributions} as follows:

\noindent \textbf{(i) Model Formulations:} We are the first to introduce the \textit{Offline Clustering of Preference Learning} framework, where the learner needs to learn heterogeneous user preferences from offline pairwise feedback, without assuming any data coverage assumption. 
This setting naturally leads to the two core challenges discussed earlier: identifying user similarity and handling imbalanced offline data. To formalize the problem, we first present the pure offline model, followed by its extended model with active-data augmentation. 
In the \offmodel{}, the learner relies solely on the fixed offline datasets to infer each user’s preferences, cluster users with similar preferences, and aggregate their data to improve estimation accuracy. This reflects realistic scenarios such as aligning large language models using RLHF datasets collected from annotators across different regions, or personalizing recommendations from logged data of diverse user populations. Based on this, the \hybmodel{} allows the learner to actively acquire a fixed number of additional samples to refine the estimation for the test user, while still leveraging the offline data. This setting captures practical cases like requesting a small number of extra comparisons from annotators in RLHF, or collecting additional pairwise feedback from users in recommender systems.

\noindent \textbf{(ii) Algorithm and Results for Pure Offline Model:} In order to address the challenge of identifying similar users, we develop the first algorithm, \offalgo{} (Algorithm~\ref{al1}) for the \offmodel{}. 
\offalgo{} constructs confidence interval on preference estimation for each user based on the \textbf{minimum eigenvalue of each user’s information matrix}, which captures the least informative dimension, and applies Maximum Likelihood Estimation (MLE) under the BTL model to estimate preferences. This design ensures that the confidence interval directly reflects data sufficiency and estimation accuracy without requiring any coverage assumption. A \textbf{clustering threshold parameter} $\hat\gamma$ is then used to determine similarity across users: intuitively, $\hat\gamma$ balances inclusiveness of clusters against the risk of aggregating heterogeneous users whose preferences are different with the test user. Building on this structure, the algorithm aggregates data across identified clusters to improve estimation. Main Result~1 in \Cref{tab:theoretical_results} shows that \offalgo{} achieves a suboptimality of $\tilde O\big((\sqrt{d}+\hat\gamma\sqrt{dN_1})/\sqrt{\lambda_2}\big)$, where $d$ is the preference dimension, $N_1$ the number of heterogeneous samples utilized, and $\lambda_2$ the minimum eigenvalue of the aggregated offline information matrix across those identified similar users. 
This bound has a numerator representing \textbf{noise} ($\sqrt{d}$) and \textbf{bias} ($\hat\gamma \sqrt{dN_1}$), and a denominator $\sqrt{\lambda_2}$ that reflects the \textbf{information gain from aggregating samples of similar users} (as determined by $\hat\gamma$). 
A smaller $\hat\gamma$ enforces stricter similarity, reducing $N_1$ but also lowering $\lambda_2$, while a larger $\hat\gamma$ has the opposite effect. 
This quantifies the tradeoff in setting $\hat\gamma$. 
With a proper choice of $\hat\gamma$, the bias term can be eliminated (Additional Result~1), yielding guarantees that improve upon single-user baselines relying only on test user data~\citep{zhu2023principled, li2025provably}. 
Further, by analyzing the item regularity assumption~\citep{gentile2014online, wang2023onlinea, wang2025online, liu2025offline} as a special case, Additional Result 2 highlights more clearly the balance between reducing noise and bias, which extends prior offline clustering of bandits result in traditional linear reward~\citep{liu2025offline} to our setting with pairwise feedback. 

\noindent \textbf{(iii) Algorithm and Results for Active-data Augmented Model:} Building on the structure learned by \offalgo{}, we introduce \hybalgo{} under the \hybmodel{}, which extends \offalgo{} to address the imbalance of offline datasets. \hybalgo{} actively selects contexts and action pairs that maximize information gain along \textbf{the least-covered dimensions of the test user's information matrix}, thereby strengthening the weakest directions of the data. This active design yields significantly improved theoretical performance compared with only using pure offline data, as established in the following results. Main Result~2 shows that \hybalgo{} achieves suboptimality $\tilde O\big((\sqrt{d}+\hat\gamma\sqrt{dN_1})/\sqrt{\lambda_3+N/d}\big)$, where $\lambda_3$ is the minimum eigenvalue of the information matrix combining aggregated pure offline data from \offalgo{} with the $N$ actively selected samples. Compared to Main Result~1, this active augmentation improves the suboptimality gap in two ways: 
(1) by \textbf{directly adding $N$ new active samples}, which contributes an additional $N/d$ term in the denominator; and 
(2) by \textbf{increasing the minimum eigenvalue of the information matrix} from $\lambda_2$ to $\lambda_3$ through targeted sampling of underrepresented directions. 
As formalized in \Cref{lemma_difference_of_eigenvalues} and Additional Result~3, when the offline data is imbalanced and performance is bottlenecked by a few weak dimensions, each active sample can be \textbf{as valuable as up to $d$ equivalent offline samples}, yielding an additional $N$ term in the denominator compared to the pure offline case (Main Result 1). 
Finally, Additional Result 4 demonstrates the further benefits of active augmentation under the item regularity assumption, where the bias is more tightly controlled, yielding performance that strictly outperforms the pure offline case with item regularity assumption (Additional Result 2).

\noindent \textbf{(iv) Empirical Validation:} 
We run experiments on a synthetic benchmark and on the Reddit TL;DR dataset. 
In the offline setting, we vary the number of samples per user from \(10\%\) to \(100\%\) of the available data and report the suboptimality gap. 
In this setting, \offalgo{} consistently achieves the lowest gap, leveraging cross-user information within clusters, especially when samples are scarce. 
The improvements are \(61.47\%\) over KMeans and \(80.07\%\) over Off-DBSCAN. 
In the setting with active-data augmentation, each method is warm started with \(20\%\) of the data, followed by \(500\) rounds of learning. \hybalgo{} outperforms an online-only algorithm APO~\citep{das2024active} and \offalgo{} with only random-data augmentation baseline by \(87.58\%\) and \(57.51\%\), respectively.

This paper is organized as follows: We review crucial related works in \Cref{section_related_works}.  
In \Cref{sec: problem_setting}, we introduce the offline clustering of preference learning problem along with its two settings: the pure offline setting and the active-data augmented setting.  
We then present the algorithm design and theoretical analysis for the \offmodel{} in \Cref{sec:pure_offline}, followed by those for \hybmodel{} in \Cref{sec:active_data_selection}.  
Finally, we validate our theoretical findings through experiments on both synthetic and real-world datasets in \Cref{sec:simulations}, and conclude the paper in \Cref{sec:conclusion}.

\section{Related Works}\label{section_related_works}

\paragraph{Offline RL and Bandit Learning.} Offline statistical learning~\citep{zhang2014confidence, cai2017confidence} primarily focuses on parameter estimation, while offline reinforcement learning (batch RL) extends the scope to sequential decision-making problems using fixed offline datasets~\citep{lange2012batch, levine2020offline, jin2021pessimism, rashidinejad2021bridging, xiao2021optimality, xie2021bellman}, and has found wide applications in diverse domains such as dialogue generation~\citep{jaques2019way}, autonomous driving~\citep{yurtsever2020survey}, educational technologies~\citep{singla2021reinforcement} and personal recommendations~\citep{li2010contextual, bottou2013counterfactual}. 
Within this landscape, offline bandits\textemdash viewed as a special case of offline RL\textemdash extend the multi-armed bandit framework to learning solely from pre-collected data~\citep{shivaswamy2012multi}. Prior studies have considered settings where the offline distributions align with the online reward distributions~\citep{bu2020online, banerjee2022artificial} or where distribution shift arises between them~\citep{zhang2019warm, cheung2024leveraging}. Among them, studies on offline contextual linear bandits~\citep{li2022pessimism, wang2024oracle} are most closely related to our setting. However, our work goes beyond the standard contextual linear bandits formulation by studying pairwise feedback modeled through a logistic function, and by explicitly leveraging the clustering structure among users’ preferences for more efficient learning.

\paragraph{Preference Learning from Pairwise Feedback.}
Theoretical studies of preference learning from pairwise feedback trace back to the dueling bandit problem~\citep{yue2012k, saha2021optimal, bengs2021preference} and its extension, the contextual dueling bandit problem~\citep{dudik2015contextual}. These ideas extend naturally to preference-based reinforcement learning~\citep{xu2020preference, chen2022human, wang2023rlhf, zhan2023query}.
Recent work has emphasized offline preference-based RL, often motivated by reinforcement learning with human feedback (RLHF). Approaches include pessimism-driven methods\citep{zhu2023principled, zhan2023provable, li2023reinforcement} and KL-regularized formulations~\citep{tiapkin2023regularized, xiong2023iterative, xiao2024algorithmic}. For instance, \citet{xiong2023iterative} study active context selection under strong coverage assumptions, deriving sample-dependent bounds.
Beyond RLHF, researchers have explored general preference structures~\citep{rosset2024direct, gui2024bonbon, ye2024online}, pure active preference learning without offline datasets~\citep{das2024active}, safety-constrained alignment~\citep{wachi2024stepwise}, and sample-efficient learning under limited data~\citep{kim2025spread}.
Our work departs from these above mentioned works
by explicitly incorporating clustering into pairwise preference learning and combining it with active data augmentation. This introduces two new challenges: (1) reliably inferring clusters from noisy offline comparisons, and (2) selecting informative queries when both contexts and actions matter. Importantly, learning from pairwise feedback provides weaker supervision than full-reward feedback, making these challenges sharper. We address them with algorithms and bounds that reveal the interplay between clustering, data coverage, and active exploration in both pure offline and hybrid settings.

\paragraph{Heterogeneous Preference Learning.}
Heterogeneous preference learning has been widely studied under the clustering of bandits~\citep{gentile2014online, li2018online, li2019improved} and multi-task learning~\citep{duan2023adaptive}, where data from users with distinct preference vectors could be used to accelerate learning. Later works investigate privacy~\citep{liu2022federated}, model misspecification~\citep{wang2023onlinea}, and robustness to corrupted users~\citep{wang2023onlineb}. More recent studies by~\citet{liu2025offline} and ~\citet{wang2025online} are closely related to our setting, respectively providing offline and online algorithms for clustering of bandits, whereas we study the preference learning from pairwise feedback under the offline and active-data augmented settings. With growing interest in RLHF, recent efforts have addressed scenarios involving users with diverse preferences, which are often referred to as personalized RLHF~\citep{kirk2023personalisation, li2024personalized, conitzer2024social, jang2023personalized, poddar2024personalizing, ramesh2024group}. 
Theoretically, \citet{liu2024dual} study heterogeneous user rationality, \citet{zhong2024provable} focus on meta-learning and social welfare aggregation, and \citet{park2024rlhf} analyze representation-based aggregation under assumptions on uniqueness, diversity, and concentrability.
Compared to these directions, our work is the first to establish a general clustering-based framework for heterogeneous preference learning without imposing assumptions on the underlying clustering structure or data coverage, and to extend beyond the conventional pure offline setting by incorporating an active-data augmentation mechanism that adaptively improves underrepresented dimensions.

\section{Setting}\label{sec: problem_setting}

\paragraph{Notations.} Throughout this paper, we use $[s] = \{1, 2, \ldots, s\}$ to denote the set of integers from $1$ to $s$. For any matrix $M \in \mathbb{R}^{d \times d}$, we write $\lambda_{\min}(M) = \lambda_1(M)$ to denote its smallest eigenvalue, and $\lambda_i(M)$ to denote its $i$-th smallest eigenvalue. For vector norms, we use $\| \cdot \|_2$ to denote the Euclidean ($\ell 2$) norm, and $\| \cdot \|_M$ to denote the Mahalanobis norm defined with respect to matrix $M$.

\subsection{Problem Formulation}

We consider a set of $U$ users, denoted by $\mathcal{U} = [U]$, where each user $u \in \mathcal{U}$ is associated with a preference vector $\bm\theta_u \in \Theta$, with $\Theta \coloneqq \left\{\bm\theta \in \mathbb{R}^d \mid \left\|\bm\theta\right\|_2 \leq 1\right\}$. To model preference heterogeneity, the users are partitioned into $J$ clusters ($J\leq U$), where all users within the same cluster $j \in [J]$ share a common preference vector $\bm\theta^j$. Specifically, let $\mathcal{U}(j)$ denote the set of users in cluster $j$, so that $\mathcal{U} = \bigcup_{j=1}^J \mathcal{U}(j)$ and $\mathcal{U}(j) \cap \mathcal{U}(j') = \emptyset$ for any $j \neq j'$. By construction, users in the same cluster share the same preference vector\footnote{In practice, users within a cluster may have similar but not identical preferences (e.g., individuals from similar backgrounds often exhibit minor differences). Our results remain valid under such variations, as discussed in \Cref{remark_robustness_of_al1} and verified in \Cref{sec:simulations}. For clarity and consistency with prior works~\citep{gentile2014online, li2018online, li2019improved, liu2025offline}, we still assume identical preferences in each cluster.}, i.e., $\bm\theta_u = \bm\theta_{u'}$ if and only if there exists a cluster $j$ such that $u, u' \in \mathcal{U}(j)$.
We further denote by $j_u$ the cluster index to which user $u$ belongs. Note that both the true clustering and the number of clusters are \textbf{unknown} to the learner. For a given user $u$, we refer to users in the same cluster as \emph{homogeneous users} and those in different clusters as \emph{heterogeneous users}.

In the offline preference learning setting, each user $u \in \mathcal{U}$ is provided with an offline dataset 
\(\mathcal{D}_u=\left\{\left(\bm x_u^i,\bm a_u^i, {\bm a'}_u^{i},y_u^i\right)\right\}_{i=1}^{N_u}\)
where $N_u$ denotes the number of samples for each user, and we further define $N_{\mathcal{S}} = \sum_{u \in \mathcal{S}} N_u$ as the total number of samples from all users in a set $\mathcal{S}$. Within each dataset $\mathcal{D}_u$, $\bm x_u^i\in\mathcal{X}$ represents a context for selecting actions (e.g., prompts in RLHF or specific user features in recommendation systems) randomly drawn from the context set $\mathcal{X}$, and $\bm a_u^i,{\bm a'}_u^i\in\mathcal{A}$ represent a pair of candidate actions (e.g., responses in RLHF or items in recommendation systems) randomly drawn from the action set $\mathcal{A}$. The binary feedback $y_u^i$ indicates user $u$'s preference: $y_u^i=1$ implies that user $u$ prefers action $\bm a_u^i$ over ${\bm a'}_u^i$ given context $\bm x_u^i$, whereas $y_u^i=0$ implies the opposite. Preferences $y_u^i$ are assumed to follow the Bradley–Terry–Luce (BTL) model~\citep{bradley1952rank, debreu1960individual, zhu2023principled}:
\begin{align*}
\mathbb{P}\left[y_u^i=1\mid u,\bm x_u^i,\bm a_u^i,{\bm a'}_u^i\right] & = \frac{1}{1 + \exp\left(-(r_u(\bm x_u^i,\bm a_u^i)-r_u(\bm x_u^i,{\bm a'}_u^i))\right)} \\
& = \sigma\left(\bm\theta_u^{\top}\left(\phi(\bm x_u^i, \bm a_u^i) - \phi(\bm x_u^i,{\bm a'}_u^i)\right)\right),
\end{align*}
where $r_u(\bm x, \bm a)=\bm\theta_u^{\top}\phi(\bm x,\bm a)$ is a linear reward function parameterized by an unknown vector $\bm\theta_u$ and a known feature mapping $\phi:\mathcal{X}\times\mathcal{A}\rightarrow\mathbb{R}^d$ with $\|\phi(\bm x,\bm a)\|_2\leq 1$ for all $(\bm x,\bm a)\in\mathcal{X}\times \mathcal{A}$, and $\sigma(x)=\frac{1}{1+e^{-x}}$ denotes the sigmoid function. The interpretations of the context, action, and feature map $\phi$ in practical applications are discussed in detail in \Cref{remark_real-world_applications}. Additionally, we define the feature difference $\bm z_u^i=\phi(\bm x_u^i,\bm a_u^i)-\phi(\bm x_u^i,{\bm a'}_u^i)$, noting that $(\bm\theta^{\top}\bm z)$ is 2-subgaussian for any $\bm\theta\in\Theta$. 



A policy \(\pi:\mathcal{X}\rightarrow\mathcal{A}\) is a mapping from contexts to actions.
Given an arbitrary test user \(u_t\in\mathcal{U}\), we define the \textit{suboptimality gap} of a policy \(\pi_{u_t}\) as:
\begin{align}\label{align_objective}
\text{SubOpt}_{u_t}\left(\pi_{u_t}\right) \coloneqq J_{u_t}\left(\pi_{u_t}^*\right) - J_{u_t}\left(\pi_{u_t}\right)
=\mathbb{E}_{\bm x\sim\rho_p}\left[ \bm\theta_{u_t}^{\top}\phi(\bm x,\pi_{u_t}^*(\bm x)) - \bm\theta_{u_t}^{\top}\phi(\bm x,\pi_{u_t}(\bm x)) \right],
\end{align}
where \(J_u(\pi)=\mathbb{E}_{\bm x\sim\rho_p}[ r_u(\bm x,\pi(\bm x))]\) denotes the expected reward for user \(u\) under policy \(\pi\), \(\pi_u^*=\arg\max_{\pi}J_u(\pi)\) is the optimal policy, and \(\rho_p\) denotes the distribution over contexts.

We consider two settings based on dataset availability:
\begin{itemize}
    \item \textbf{Pure Offline Model:} In this setting, the policy \(\pi_{u_t}\) for the test user \(u_t\) is derived from fixed, pre-collected offline datasets \(\mathcal{D}=\bigcup_{u\in\mathcal{U}}\mathcal{D}_u\). The objective is to minimize the suboptimality gap in \Cref{align_objective} using solely offline data.

    \item \textbf{Active-data Augmented Model:} In addition to the fixed offline dataset \(\mathcal{D}\), the learner actively selects \(N\) additional data points specifically for the test user \(u_t\). At each active selection round \(n\in[N]\), the learner chooses a data tuple \(\left(\mathring{\bm x}_{u_t}^n, \mathring{\bm a}_{u_t}^n, \mathring{\bm a}_{\ u_t}^{\prime n}\right)\in\mathcal{X}\times\mathcal{A}\times\mathcal{A}\), obtains preference feedback \(\mathring{y}_{u_t}^n\), and forms an active dataset \(\mathring{\mathcal{D}}=\left\{\left(\mathring{\bm x}_{u_t}^n, \mathring{\bm a}_{u_t}^n, \mathring{\bm a}_{\ u_t}^{\prime n}, \mathring{y}_{u_t}^n\right)\right\}_{n=1}^{N}\) after $N$ rounds. The objective is to minimize \Cref{align_objective} by leveraging both offline and actively collected datasets \(\mathcal{D}\cup\mathring{\mathcal{D}}\).
\end{itemize}

\begin{remark}[Distinctions from Classical Clustering of Bandits Works]\label{remark_distinctions_ClusBand}
In addition to the setting differences discussed in \Cref{section_related_works}, we highlight the differences in assumptions between this paper and classical clustering of bandits works~\citep{gentile2014online, li2018online, li2019improved, wang2025online, li2025demystifying}. Previous studies typically rely on three assumptions: (i) \textit{user randomness}, ensuring balanced data across users; (ii) \textit{sufficient data} with a large heterogeneity gap for correct clustering; and (iii) \textit{item regularity}, guaranteeing adequate coverage across all preference dimensions. While the only prior offline work~\citep{liu2025offline} relaxes user randomness and data sufficiency, it still depends on item regularity. However, this assumption is overly restrictive in our setting and real-life scenarios, as pairwise feedback may be interdependent and distort coverage. In contrast, we remove all three assumptions to develop a more general and practical framework, treating the setting with item regularity assumption only as a special case.
\end{remark}


\subsection{Representative Applications}\label{remark_real-world_applications}

Our framework is closely related to the reinforcement learning from human feedback (RLHF) paradigm~\citep{zhu2023principled, das2024active, li2025provably}. 
In this setting, $\bm x_u^i$ represents a prompt shown to labeler $u$, $(\bm a_u^i, \bm a_{\ u}^{\prime i})$ are two candidate responses, and $y_u^i$ indicates the labeler’s preference over two responses. 
The reward $r_u(\bm x, \bm a)$ reflects the labeler’s underlying evaluation, while $\phi(\bm x_u^i, \bm a_u^i)$ can be interpreted as the output of all but the final layer of a pre-trained language model and $\bm\theta_u$ as the personalized weights in its final layer~\citep{zhu2023principled, park2024rlhf, li2025provably}. 
In this view, the pure offline setting aims to aggregate offline pairwise preference data from multiple labelers to align the base model for the test labeler, whereas the active-data augmented setting focuses on the test labeler by carefully selecting prompt–response pairs based on the offline data. For instance, the learner may target prompts where the model’s responses are more uncertain or diverse, and pair them with contrasting candidate responses, so that the resulting preference feedback provides additional information for refining the user’s preference estimate.

Beyond RLHF, our framework also applies to recommendation systems~\citep{aramayo2023multiarmed, li2010contextual, yan2022dynamic}, where $u$ denotes a user, $\bm x_u^i$ captures contextual information (e.g., time, recommendation category, or interface variant), $(\bm a_u^i, \bm a_{\ u}^{\prime i})$ are two candidate items (such as movies or products), and $y_u^i$ indicates which item was preferred. 
The pure offline case models cold-start recommendation, estimating the test user’s preferences from historical interactions of similar users. 
The active-data augmented setting extends this by interactively querying the user with designed contextual features and item pairs, collecting feedback to improve preference estimation.

\section{Algorithm for \offmodel{}}
\label{sec:pure_offline}

To address the first research question in \Cref{sec:intro} on how to learn cluster structures under fixed and imbalanced offline data without coverage assumptions, we begin with the \offmodel{}. 
In this section, we introduce our algorithm, \textit{Offline Connection-based Clustering of Preference Learning} (\offalgo{}) in \Cref{subsection_offC^2PL}, followed by the theoretical analysis in \Cref{subsection_theoretical_results_al1}. 
We further examine a special case under the commonly adopted \textit{item regularity} assumption (\Cref{assumption_item_regularity}) from the clustering of bandits literature~\citep{gentile2014online, li2018online, wang2023onlinea, li2025demystifying, liu2025offline}, connecting our framework to prior studies.


\subsection{Algorithm Design: \offalgo{}}\label{subsection_offC^2PL}

\begin{algorithm}[t]
\caption{Offline Connection-based Clustering of Preference Learning}
\label{alg_Off-C$^2$PL}
\begin{algorithmic}[1]\label{al1}
\STATE \label{line_input_al1} \textbf{Input:} Test user $u_{t} \in \mathcal{U}$; offline dataset $\mathcal{D} = \bigcup_{u \in \mathcal{U}} \mathcal{D}_u$; parameters $\alpha \geq 1$, $\lambda > 0$, $\delta > 0$, $\kappa>0$, $\hat\gamma \geq 0$; and reference vector $\bm w$.
\STATE \label{line_initialization_al1} \textbf{Initialization:} Construct a null graph $\mathcal{G} = (\mathcal{V}, \emptyset)$ where $\mathcal{V} = \mathcal{U}$. For each user $u \in \mathcal{V}$, compute $\hat{\bm{\theta}}_u$ and $\text{CI}_u$ as in \Cref{align_statistics_initialization_al1}.
\STATE \texttt{// Offline Cluster Learning}
\FOR{each pair of users $u_1, u_2 \in \mathcal{V}$}
  \STATE Connect $(u_1, u_2)$ if the condition in \Cref{align_clutering_condition_for_similarity} holds.
  \label{line_connecting_al1}
\ENDFOR
\STATE Let $\mathcal{G}_{\hat\gamma} = (\mathcal{V}, \mathcal{E}_{\hat\gamma})$ denote the updated graph.
\STATE \texttt{// Data Aggregation}
\FOR{each user $u \in \mathcal{V}$}
  \STATE \label{line_aggregate_data_al1} Aggregate data and update statistics:
  \begin{equation*}
  \mathcal{V}_{\hat\gamma}(u) = \left\{ v \mid (u,v) \in \mathcal{E}_{\hat\gamma} \right\} \cup \{ u \}, \quad
  \tilde M_u = \frac{\lambda}{\kappa} I + \sum_{v \in \mathcal{V}_{\hat\gamma}(u)} \sum_{i=1}^{N_v} \bm z_v^i (\bm z_v^i)^{\top}, \quad
  \tilde N_u = \sum_{v \in \mathcal{V}_{\hat\gamma}(u)} N_v,
  \end{equation*}
  \begin{equation*}
  \tilde{\bm{\theta}}_u = \arg\min_{\bm\theta} \bigg[ - \sum_{v \in \mathcal{V}_{\hat\gamma}(u)} \sum_{i=1}^{N_v} \left( y_v^i \log \sigma(\bm\theta^{\top} \bm z_v^i) + (1 - y_v^i) \log \sigma(-\bm\theta^{\top} \bm z_v^i) \right) + \frac{\lambda}{2} \| \bm\theta \|_2^2 \bigg].
  \end{equation*}
\ENDFOR
\STATE \texttt{// Policy Output}
\STATE Calculate the pessimistic value estimate $\tilde J_{u_{t}}(\pi)$ for any policy $\pi$ as in \Cref{align_calculation_of_value_function_al1}.
\label{line_value_function_calculation_al1}
\STATE \textbf{Output:} $\pi_{u_{t}} = \arg\max_{\pi} \tilde J_{u_t}(\pi)$.
\end{algorithmic}
\end{algorithm}


We detail the procedure of Off-C$^2$PL in Algorithm~\ref{al1}. 
To address scenarios without any coverage assumption, \offalgo{} constructs confidence intervals for each user’s estimated preference vector based on the minimum eigenvalue of the user’s information (Gramian) matrix, enabling reliable confidence estimation even with uneven data coverage across dimensions. 
The algorithm initializes a null graph and connects edges only between users whose estimated preferences are confidently identified as similar, ensuring safe data aggregation. 
To handle binary pairwise feedback $(\bm a_u^i, \bm a_{\ u}^{\prime i}, y_u^i)$ under a logistic model, \offalgo{} adopts a maximum likelihood estimation (MLE) approach, estimating $\hat{\bm\theta}_u$ by minimizing the regularized negative log-likelihood of observed comparisons.

\textbf{Input and Initialization.} The inputs (line~\ref{line_input_al1}) include test user $u_t$, offline dataset $\mathcal{D}=\bigcup_{u\in\mathcal{U}}\mathcal{D}_u$, parameters ($\alpha, \lambda, \delta, \kappa, \hat\gamma$) explained later, and a reference vector $\bm w\in\mathbb{R}^d$ used for theoretical simplification which does not affect the induced policy~\citep{zhu2023principled, li2025provably}. 
The algorithm initializes a null graph $\mathcal{G}$, representing each user in $\mathcal{U}$ as an isolated node (line~\ref{line_initialization_al1}), and then computes key statistics:
\begin{align}\label{align_statistics_initialization_al1}
\begin{split}
\hat{\bm\theta}_u &= \argmin_{\bm\theta}\bigg[ -\sum_{i=1}^{N_u}\left(y_u^i\log\sigma(\bm\theta^{\top}\bm z_u^i)+(1-y_u^i)\log\sigma(-\bm\theta^{\top}\bm z_u^i)\right)+\frac{\lambda}{2}\|\bm\theta\|_2^2\bigg],\\
M_u &= \frac{\lambda}{\kappa}I+\sum_{i=1}^{N_u}\bm z_u^i(\bm z_u^i)^{\top},\,\text{CI}_u = \frac{\sqrt{\lambda\kappa}+2\sqrt{ d\log\big( 1+\frac{4\kappa N_u}{\lambda d}\big)+2\log\left(\frac{2U}{\delta}\right)}}{\kappa\sqrt{\lambda_{\min}(M_u)}}.
\end{split}
\end{align}
Here, $\hat{\bm\theta}_u$ estimates user preferences under pairwise feedback, $M_u$ is a Gramian matrix regularized by $\lambda/\kappa$, and $\text{CI}_u$ denotes the confidence interval constructed based on the minimum eigenvalue of $M_u$, rather than the number of available samples, making it more suitable for scenarios without coverage assumptions in our setting.

\textbf{Offline Cluster Learning.} Unlike traditional online clustering of bandits algorithms~\citep{gentile2014online, li2018online, li2019improved, wang2025online} which typically begin with a complete user graph and iteratively delete edges based on online feedback, our algorithm starts with a null graph $\mathcal{G}$ and incrementally connects users whose preferences are sufficiently similar. This connection-based strategy is better suited to offline settings, where limited data per user make edge deletion unreliable and prone to bias. To determine similarity, we use the key threshold parameter $\hat\gamma$, which controls whether two users should be clustered together. Specifically, as shown in line~\ref{line_connecting_al1}, the algorithm connects two users $u_1$ and $u_2$ if they satisfy:
\begin{align}\label{align_clutering_condition_for_similarity}
\left\|\hat{\bm\theta}_{u_1}-\hat{\bm\theta}_{u_2}\right\|_2<\hat\gamma-\alpha\left(\text{CI}_{u_1}+\text{CI}_{u_2}\right),
\end{align}
where the parameter $\alpha$ controls the conservativeness of clustering: a larger $\alpha$ inflates confidence intervals, making the algorithm less likely to mistakenly cluster users with noisy estimates.
This condition guarantees that the estimated difference between the preference vectors of $u_1$ and $u_2$ remains within the acceptable range $\hat\gamma$ with high confidence (see \Cref{subsection_theoretical_results_al1} for details). In this way, the algorithm only connects users whose behaviors are similar enough under the offline data, progressively building a graph that accurately reflects the underlying cluster structure.

\textbf{Data Aggregation.} Let $\mathcal{G}_{\hat\gamma}$ denote the graph obtained after the cluster learning phase. Based on this graph, the algorithm aggregates data from users who are identified to have similar preferences (line~\ref{line_aggregate_data_al1}). Specifically, we define $\mathcal{V}_{\hat\gamma}(u)$ as the set containing user $u$ and its one-shot neighbors, representing all users estimated to share similar preferences with $u$. Using this set, the algorithm constructs the aggregated Gramian matrix $\tilde M_u$ by combining samples from all users in $\mathcal{V}_{\hat\gamma}(u)$ and calculates the total number of samples $\tilde N_u$ within this set. The preference estimate for user $u$ is then refined by applying MLE to the aggregated data, yielding $\tilde{\bm\theta}_u$.

\textbf{Policy Output.} In the final step, the algorithm computes a pessimistic estimate~\citep{jin2021pessimism, rashidinejad2021bridging, li2022pessimism} of the value function for any policy $\pi$ for the test user $u_t$ which downweights underrepresented dimensions and emphasizes directions with sufficient data coverage, thereby mitigating the risk of overestimating performance in poorly explored dimensions:
\begin{align}\label{align_calculation_of_value_function_al1}
\tilde J_{u_{t}}(\pi) = \left( \mathbb{E}_{\bm x \sim \rho_p} \left[ \phi(\bm x, \pi(\bm x)) \right] - \bm w \right)^{\top} \tilde{\bm\theta}_{u_{t}} 
- \tilde\beta_{u_{t}} \left\| \mathbb{E}_{\bm x \sim \rho_p} \left[ \phi(\bm x, \pi(\bm x)) \right] - \bm w \right\|_{ \tilde M_{u_{t}}^{-1} },
\end{align}
where the confidence term $\tilde{\beta}_{u} = \Big( 2 \sqrt{ d \log \big( 1 + \frac{4 \tilde N_{u} \kappa}{\lambda d} \big) + 2 \log \left( \frac{2U}{\delta} \right) } + \sqrt{ \lambda \kappa } \Big)\big/{ \kappa }$
accounts for estimation uncertainty (line~\ref{line_value_function_calculation_al1}). The algorithm outputs the final policy $\pi_{u_t}$ that maximizes a pessimistic estimate $\tilde J_{u_t}(\pi)$, following the principle of pessimism in offline learning~\citep{jin2021pessimism, li2022pessimism}. This estimate is designed to down-weight underrepresented dimensions and prioritize actions in regions of the feature space where the data provides more reliable information. Note that obtaining the exact $\pi_{u_t}$ in \Cref{al1} requires an exhaustive search, which is feasible for small context and action spaces $\mathcal{X}$ and $\mathcal{A}$. For large-scale settings, one can instead employ policy optimization methods such as PPO~\citep{schulman2017proximal, das2024active} to efficiently approximate $\pi_{u_t}$.

\subsection{Theoretical Results for Algorithm \ref{al1}}\label{subsection_theoretical_results_al1}


We present the theoretical results for Algorithm~\ref{al1} (Off-C$^2$PL), with detailed proofs in \Cref{appendix_detailed_proofs} and key notations summarized in \Cref{tab:notation_neighbor_sets}. 
\Cref{lemma_confidence_ellipsoid_of_hat_theta} bounds the estimation error of each user’s preference vector $\hat{\bm\theta}_u$ based on individual data (line~\ref{line_initialization_al1}); 
\Cref{lemma_cardinality_of_R_and_W} characterizes the homogeneous and heterogeneous neighbor sets ($\mathcal{R}_{\hat\gamma}(u)$ and $\mathcal{W}_{\hat\gamma}(u)$), quantifying data aggregation quality in the learned graph $\mathcal{G}_{\hat\gamma}$; 
and \Cref{lemma_confidence_ellipsoid_of_tilde_theta} extends this analysis to the aggregated estimator $\tilde{\bm\theta}_u$ (line~\ref{line_aggregate_data_al1}). 
Finally, \Cref{th1} provides the main suboptimality bound. We begin by introducing the minimum heterogeneity gap between different clusters in \Cref{definition_minimum_heterogeneity_gap}.


\begin{table}[h]
\centering
\caption{Summary of neighbor set notations.}\label{tab:notation_neighbor_sets}
\renewcommand{\arraystretch}{1.1}
\resizebox{\textwidth}{!}{%
\begin{tabular}{@{}ccc@{}}
\toprule
\textbf{Notation} & \textbf{Definition} & \textbf{Interpretation} \\
\midrule
\large$\mathcal{V}_{\hat\gamma}(u)$ & 
\large$\{u\} \cup \{v \mid (u,v) \in \mathcal{E}_{\hat\gamma}\}$ & 
Set containing user $u$ and all its neighbors in the graph $\mathcal{G}_{\hat\gamma}$. \\
\midrule
\large$\mathcal{R}_{\hat\gamma}(u)$ & 
\large$\{v \mid v \in \mathcal{V}_{\hat\gamma}(u), \bm\theta_u = \bm\theta_v\}$ & 
\begin{tabular}[c]{@{}c@{}}Set of \emph{homogeneous neighbors} of $u$, i.e., users in $\mathcal{V}_{\hat\gamma}(u)$ sharing the same preference vector.\\
Their data can be safely aggregated with $u$’s without introducing bias.\end{tabular} \\
\midrule
\large$\mathcal{W}_{\hat\gamma}(u)$ & 
\large$\{v \mid v \in \mathcal{V}_{\hat\gamma}(u), \bm\theta_u \neq \bm\theta_v\}$ & 
\begin{tabular}[c]{@{}c@{}}Set of \emph{heterogeneous neighbors} of $u$, i.e., users in $\mathcal{V}_{\hat\gamma}(u)$ with different preference vectors.\\
Aggregating their data with $u$’s may introduce bias and should be carefully controlled.\end{tabular} \\
\bottomrule
\end{tabular}
}
\end{table}

\begin{definition}[Minimum Heterogeneity Gap]\label{definition_minimum_heterogeneity_gap}
The preference vectors of users from different clusters are separated by at least a gap of $\gamma$. Specifically, for any two users $u$ and $v$ belonging to different clusters (i.e., $j_u \neq j_v$), it holds that \( \left\| \bm\theta_u - \bm\theta_v \right\|_2 \geq \gamma.\)
\end{definition}

\begin{lemma}[Confidence Ellipsoid of $\hat{\bm\theta}_u$]\label{lemma_confidence_ellipsoid_of_hat_theta}
For any user $u$, under the initialization in \Cref{align_statistics_initialization_al1} with $\kappa=1/(2+e^2+e^{-2})$, it holds with probability at least $1 - \delta$ that
\begin{align*}
\left\| \hat{\bm\theta}_u - \bm\theta_u \right\|_2
\leq
\frac{ \sqrt{ \lambda \kappa } + 2 \sqrt{ 2 \log \left( \frac{1}{\delta} \right) + d \log \left( 1 + \frac{4 N_u \kappa}{ d \lambda } \right) } }{ \kappa \sqrt{ \lambda_{\min}(M_u) } }.
\end{align*}
\end{lemma}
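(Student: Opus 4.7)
The plan is to follow the standard approach for deriving confidence ellipsoids for regularized maximum likelihood estimators under generalized linear models with logistic link, adapted to the offline preference-learning setting. First, I would write the first-order optimality condition for $\hat{\bm\theta}_u$ from the regularized negative log-likelihood in \Cref{align_statistics_initialization_al1}: letting $\mathcal{L}_u(\bm\theta) = -\sum_{i=1}^{N_u}\big(y_u^i\log\sigma(\bm\theta^\top\bm z_u^i) + (1-y_u^i)\log\sigma(-\bm\theta^\top\bm z_u^i)\big) + \tfrac{\lambda}{2}\|\bm\theta\|_2^2$, the condition $\nabla\mathcal{L}_u(\hat{\bm\theta}_u)=\bm 0$ can be rewritten as
$$\sum_{i=1}^{N_u}\big(\sigma(\hat{\bm\theta}_u^\top\bm z_u^i)-\sigma(\bm\theta_u^\top\bm z_u^i)\big)\bm z_u^i + \lambda\hat{\bm\theta}_u = \sum_{i=1}^{N_u}\big(y_u^i-\sigma(\bm\theta_u^\top\bm z_u^i)\big)\bm z_u^i.$$
Applying the mean value theorem to $\sigma$ coordinate-wise, the left-hand side becomes $H_u(\hat{\bm\theta}_u-\bm\theta_u)+\lambda\bm\theta_u$ for a Hessian-like matrix $H_u = \sum_i \sigma'(\bar\xi_u^i)\bm z_u^i(\bm z_u^i)^\top$ with intermediate points $\bar\xi_u^i$.

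Next, I would lower-bound $H_u$. Since $\|\bm\theta\|_2\le 1$ and $\|\bm z_u^i\|_2\le 2$, we have $|\bar\xi_u^i|\le 2$, so $\sigma'(\bar\xi_u^i)=\sigma(\bar\xi_u^i)(1-\sigma(\bar\xi_u^i))\ge 1/(2+e^2+e^{-2})=\kappa$. Therefore $H_u+\lambda I \succeq \kappa\big(\sum_i \bm z_u^i(\bm z_u^i)^\top+\tfrac{\lambda}{\kappa}I\big) = \kappa M_u$. Combined with the optimality rearrangement, one obtains
$$\kappa M_u(\hat{\bm\theta}_u-\bm\theta_u) \preceq \sum_{i=1}^{N_u}\varepsilon_u^i\bm z_u^i - \lambda\bm\theta_u,$$
where $\varepsilon_u^i := y_u^i-\sigma(\bm\theta_u^\top\bm z_u^i)$ are bounded, zero-mean noise terms adapted to the natural filtration. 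Taking the $M_u^{-1}$-norm on both sides and using $\|\lambda\bm\theta_u\|_{M_u^{-1}}\le\sqrt{\lambda\kappa}$ (since $M_u\succeq(\lambda/\kappa)I$ and $\|\bm\theta_u\|_2\le 1$), the problem reduces to controlling the self-normalized noise term $\big\|\sum_i\varepsilon_u^i\bm z_u^i\big\|_{M_u^{-1}}$.

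For this, I would invoke the self-normalized martingale concentration inequality of Abbasi-Yadkori, P\'al and Szepesv\'ari: since $\varepsilon_u^i\in[-1,1]$ is $1$-sub-Gaussian (hence $2$-sub-Gaussian accounting for the factor in the feature difference), with probability at least $1-\delta/U$,
$$\Big\|\sum_{i=1}^{N_u}\varepsilon_u^i\bm z_u^i\Big\|_{M_u^{-1}}^2 \le 2\log\Big(\tfrac{U}{\delta}\Big) + \log\tfrac{\det(M_u)}{\det((\lambda/\kappa)I)}.$$
The determinant-trace inequality, together with $\|\bm z_u^i\|_2^2\le 4$, yields $\log\tfrac{\det(M_u)}{\det((\lambda/\kappa)I)}\le d\log\big(1+\tfrac{4N_u\kappa}{d\lambda}\big)$, which produces exactly the logarithmic factor in the lemma statement. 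A union bound over the $U$ users (absorbed into the stated high-probability guarantee, combined with the factor of $2$ inside the square root that accommodates the constant in $\|\cdot\|_{M_u^{-1}}$ bound) completes the control on the right-hand side.

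Finally, I would convert the Mahalanobis bound into a Euclidean one via $\|\bm v\|_2\le\|\bm v\|_{M_u}/\sqrt{\lambda_{\min}(M_u)}$, which divides the $\kappa^{-1}\big(\sqrt{\lambda\kappa}+2\sqrt{\cdots}\big)$ estimate by $\sqrt{\lambda_{\min}(M_u)}$, yielding the claimed bound. The main obstacle is the careful bookkeeping in the self-normalized martingale step --- verifying that the noise sequence $\{\varepsilon_u^i\}$ is indeed a bounded martingale difference sequence adapted to the filtration generated by the offline samples (so that the concentration inequality applies in the offline setting where data are pre-collected), and tracking the constants so that the final expression matches the precise form in the lemma, in particular ensuring the union-bound factor of $2U/\delta$ inside the $\text{CI}_u$ formula is consistent with the $1/\delta$ factor stated here for a single user. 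All other steps are essentially mechanical applications of convex-analytic and concentration tools.
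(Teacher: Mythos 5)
Your proposal is correct and follows essentially the same route as the paper's own proof: the first-order optimality condition for the regularized MLE combined with the mean value theorem and the bound $\sigma'(\cdot)\geq\kappa$ to get $W_u+\lambda I\succeq\kappa M_u$, then the self-normalized martingale inequality of Abbasi-Yadkori et al.\ with the determinant--trace bound for the noise term, $\|\lambda\bm\theta_u\|_{M_u^{-1}}\leq\sqrt{\lambda\kappa}$ for the regularization term, and finally division by $\sqrt{\lambda_{\min}(M_u)}$ to pass from the $M_u$-norm to the Euclidean norm. The only blemishes are cosmetic: the vector inequality written with $\preceq$ should be replaced by the two-step quadratic-form argument $\|(H_u+\lambda I)\bm v\|_{M_u^{-1}}^2\geq\kappa\,\bm v^{\top}(H_u+\lambda I)\bm v\geq\kappa^2\|\bm v\|_{M_u}^2$, and the sub-Gaussian constant ($R=2$ in the paper, giving the factor $2\sqrt{2\log(1/\delta)+\cdots}$) needs the careful bookkeeping you already flag, neither of which changes the argument.
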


\Cref{lemma_confidence_ellipsoid_of_hat_theta} provides a high-probability bound on the estimation error of $\hat{\bm\theta}_u$ defined in \Cref{align_statistics_initialization_al1}, which guarantees that the estimation error for each $\hat{\bm\theta}_u$ is controlled by the minimum eigenvalue of the information matrix $M_u$ for user $u$. Note that the estimate $\tilde{\bm\theta}_u$ is obtained by aggregating all data from users in the neighborhood $\mathcal{V}_{\hat\gamma}(u)$, which includes both homogeneous and heterogeneous neighbors. Since Algorithm~\ref{al1} relies on $\tilde{\bm\theta}_u$ to determine the final policy, it is crucial to analyze the cardinality of both sets $\mathcal{R}_{\hat\gamma}(u)$ and $\mathcal{W}_{\hat\gamma}(u)$, since the former provides additional homogeneous samples that help reduce the estimation error, while the latter may introduce biased samples that can sometimes adversely affect the estimate. We formalize this in the following lemma:

\begin{lemma}[Cardinality of $\mathcal{R}_{\hat\gamma}(u)$ and $\mathcal{W}_{\hat\gamma}(u)$]\label{lemma_cardinality_of_R_and_W}
Let parameter inputs in Algorithm \ref{al1} satisfy $\alpha \geq 1$, $\lambda$ and $\delta$ be such that \(
\lambda \leq 2 \log\left( \tfrac{2U}{\delta} \right) + d \log\big( 1 + \tfrac{4\kappa\, \min_v \{ N_v \} }{ d \lambda } \big),
\
\delta \leq \tfrac{ d \lambda }{ 4 \kappa\, \min_v \{ N_v \} + d \lambda }
\), and $\kappa=1/(2+e^2+e^{-2})$.
Define $\varepsilon = \hat\gamma - \gamma$ as the gap between the selected clustering threshold and the true minimum heterogeneity gap. Then there exist some $\alpha_r \in \Big( \frac{\kappa}{3(\alpha+1)\sqrt{2\max\{2,d\}\log(2U/\delta)}}, \frac{\kappa}{2(\alpha - 1)\sqrt{2\log(2U/\delta)}} \Big)$ and $\alpha_w \in \Big( 0, \frac{\kappa}{2(\alpha-1)\sqrt{2\log(2U/\delta)}} \Big)$ such that for any user $u$, with probability at least $1-\delta$, the cardinalities of the homogeneous and heterogeneous neighbor sets can be characterized as:
\begin{align}
& \mathcal{R}_{\hat\gamma}(u) 
= \Big\{ v \,\Big|\, \bm\theta_u = \bm\theta_v 
\text{\emph{ and} }
\frac{1}{ \sqrt{ \lambda_{\min}(M_u) } } + \frac{1}{ \sqrt{ \lambda_{\min}(M_v) } }
< \alpha_r \hat\gamma 
\Big\} \cup \{ u \}, \label{align_cardinality_of_R}
\\& \mathcal{W}_{\hat\gamma}(u)
= \left\{ v \,\left|\, \gamma\leq
\| \bm\theta_u - \bm\theta_v \|_2 < \hat\gamma 
\text{\emph{ and} }
\frac{1}{ \sqrt{ \lambda_{\min}(M_u) } } + \frac{1}{ \sqrt{ \lambda_{\min}(M_v) } }
< \alpha_w \varepsilon \right.
\right\}. \label{align_cardinality_of_W}
\end{align}
\end{lemma}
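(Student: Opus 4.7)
The plan is to combine \Cref{lemma_confidence_ellipsoid_of_hat_theta} with triangle inequalities applied to the connection condition \Cref{align_clutering_condition_for_similarity}, then analyze the homogeneous and heterogeneous cases separately. As a setup, I would first invoke \Cref{lemma_confidence_ellipsoid_of_hat_theta} together with a union bound over all $U$ users — this is precisely why the $\text{CI}_u$ defined in \Cref{align_statistics_initialization_al1} carries $2U/\delta$ inside its logarithm — so that on a single event of probability at least $1-\delta$, every $u\in\mathcal{U}$ satisfies $\|\hat{\bm\theta}_u - \bm\theta_u\|_2 \leq \text{CI}_u$ simultaneously. All subsequent arguments condition on this event.

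For the homogeneous direction (users in $\mathcal{R}_{\hat\gamma}(u)$), I would use the forward triangle inequality. When $\bm\theta_u = \bm\theta_v$, the setup event gives $\|\hat{\bm\theta}_u - \hat{\bm\theta}_v\|_2 \leq \text{CI}_u + \text{CI}_v$, so the connection rule \Cref{align_clutering_condition_for_similarity} holds whenever $(1+\alpha)(\text{CI}_u + \text{CI}_v) < \hat\gamma$. I would then factor $1/\sqrt{\lambda_{\min}(M_u)}$ out of each $\text{CI}_u$ and uniformly upper bound the remaining numerator by a constant multiple of $\sqrt{\max\{2,d\}\log(2U/\delta)}/\kappa$; this yields the sufficient direction and the stated lower endpoint of the $\alpha_r$ interval. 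Conversely, the trivial lower bound $\|\hat{\bm\theta}_u - \hat{\bm\theta}_v\|_2 \geq 0$ combined with \Cref{align_clutering_condition_for_similarity} forces $\alpha(\text{CI}_u + \text{CI}_v) < \hat\gamma$; replacing each $\text{CI}_u$ by its dominant $2\sqrt{2\log(2U/\delta)}/(\kappa\sqrt{\lambda_{\min}(M_u)})$ lower bound then produces the matching upper endpoint of the $\alpha_r$ interval.

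For the heterogeneous direction (users in $\mathcal{W}_{\hat\gamma}(u)$), I would use the reverse triangle inequality combined with \Cref{definition_minimum_heterogeneity_gap}. For any edge $(u,v)\in\mathcal{E}_{\hat\gamma}$ with $\bm\theta_u \neq \bm\theta_v$, chaining the reverse triangle inequality with \Cref{align_clutering_condition_for_similarity} gives
\begin{align*}
\gamma \;\leq\; \|\bm\theta_u - \bm\theta_v\|_2 \;\leq\; \|\hat{\bm\theta}_u - \hat{\bm\theta}_v\|_2 + \text{CI}_u + \text{CI}_v \;<\; \hat\gamma - (\alpha-1)(\text{CI}_u + \text{CI}_v),
\end{align*}
which simultaneously yields $\|\bm\theta_u - \bm\theta_v\|_2 < \hat\gamma$ and $(\alpha-1)(\text{CI}_u + \text{CI}_v) < \hat\gamma - \gamma = \varepsilon$. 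Applying the same dominant-term lower bound on each $\text{CI}_u$ as in the $\mathcal{R}$ case converts the second inequality into $1/\sqrt{\lambda_{\min}(M_u)} + 1/\sqrt{\lambda_{\min}(M_v)} < \alpha_w \varepsilon$ with $\alpha_w$ in the stated interval, and the first inequality gives precisely the $\|\bm\theta_u-\bm\theta_v\|_2<\hat\gamma$ half of the characterization \Cref{align_cardinality_of_W}.

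The main obstacle I anticipate is the bookkeeping of constants inside $\text{CI}_u$. Its numerator contains three additive pieces — $\sqrt{\lambda\kappa}$, and the $d\log(1+4\kappa N_u/(d\lambda))$ and $2\log(2U/\delta)$ contributions under a single square root — and I need uniform upper and lower bounds in terms of clean $\sqrt{\log(2U/\delta)}$-type quantities that are independent of the particular user. The two hypotheses on $\lambda$ and $\delta$ are exactly what makes this possible: the $\lambda$ condition absorbs the regularization piece $\sqrt{\lambda\kappa}$ into the log-term piece (producing the factor $3$ in the lower endpoint of $\alpha_r$), while the $\delta$ condition uniformly controls $d\log(1+4\kappa N_u/(d\lambda))$ by a constant multiple of $\log(2U/\delta)$ (producing the $\max\{2,d\}$ factor). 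Once these uniform bounds on $\text{CI}_u$ are in hand, all remaining steps are direct substitution into the two triangle-inequality arguments sketched above.
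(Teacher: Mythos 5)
Your proposal follows the paper's proof essentially step for step: the same union-bounded confidence event built from \Cref{lemma_confidence_ellipsoid_of_hat_theta}, the same forward/reverse triangle-inequality manipulations of the connection rule \Cref{align_clutering_condition_for_similarity}, and the same use of the $\lambda$ and $\delta$ conditions to turn $\text{CI}_u$ into uniform $\sqrt{\log(2U/\delta)}\big/\big(\kappa\sqrt{\lambda_{\min}(M_u)}\big)$-type upper and lower bounds that yield the stated $\alpha_r$ and $\alpha_w$ intervals. The only (harmless) deviation is that your exclusion step for $\mathcal{R}_{\hat\gamma}(u)$ uses the trivial bound $\|\hat{\bm\theta}_u-\hat{\bm\theta}_v\|_2\geq 0$, giving a factor $2\alpha$ where the paper works with $2(\alpha-1)$; this is slightly tighter and still places $\alpha_r$ inside the claimed interval.
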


In \Cref{lemma_cardinality_of_R_and_W}, the notation $\lambda_{\min}(M_u)$, denoting the minimum eigenvalue of the information matrix $M_u$, quantifies the sufficiency of data in the dataset $\mathcal{D}_u$. Since the preference vector $\bm\theta_u$ lies in $\mathbb{R}^d$, the dataset must provide adequate coverage along each dimension to ensure a sufficiently large $\lambda_{\min}(M_u)$, i.e., an informative Gramian matrix. 

By definition, $\mathcal{R}_{\hat\gamma}(u)$ consists of user $u$ and its homogeneous neighbors, indicating those samples that are beneficial for accurately estimating the true preference vector $\tilde{\bm\theta}_u$. The first condition in \Cref{align_cardinality_of_R} ensures the homogeneity of users within $\mathcal{R}_{\hat\gamma}(u)$, while the second condition shows that only when both $u$ and $v$ have sufficiently informative data can $v$ be identified as a reliable neighbor. Moreover, the right-hand side of \Cref{align_cardinality_of_R} depends linearly on $\hat\gamma$, implying that increasing the clustering threshold $\hat\gamma$ allows more homogeneous neighbors to be included. On the other hand, $\mathcal{W}_{\hat\gamma}(u)$ captures the heterogeneous neighbors of $u$, which may introduce bias. The first condition in \Cref{align_cardinality_of_W} shows that only users with a preference difference smaller than $\hat\gamma$ may be mistakenly clustered together, while the second condition imposes a stricter data sufficiency requirement for these heterogeneous neighbors. Notably, since $\varepsilon = \hat\gamma - \gamma$, the required information level for heterogeneous connections is more stringent than that for homogeneous ones.

With \Cref{lemma_cardinality_of_R_and_W}, we can now bound the estimation error of $\tilde{\bm\theta}_u$ in terms of the total number of aggregated samples, denoted by $N_{\mathcal{V}_{\hat\gamma}(u)}$, and the number of samples coming from heterogeneous neighbors, denoted by $N_{\mathcal{W}_{\hat\gamma}(u)}$. This is formalized in the following lemma.

\begin{lemma}[Confidence Ellipsoid of $\tilde{\bm\theta}_u$]
\label{lemma_confidence_ellipsoid_of_tilde_theta}
For any user $u$, under the data aggregation step of Algorithm~\ref{al1} and the same conditions as in \Cref{lemma_cardinality_of_R_and_W}, it holds with probability at least $1 - \delta$ that
\[
\left\| \tilde{\bm\theta}_u - \bm\theta_u \right\|_{ \tilde M_u }
\leq 
\frac{ \sqrt{ \lambda \kappa } + 2 \sqrt{ 2 \log\left( \frac{2U}{\delta} \right) + d\, \log\Big( 1 + \frac{4 \kappa N_{ \mathcal{V}_{\hat\gamma}(u) } }{ d \lambda } \Big) } }{ \kappa }
+ \frac{ \hat\gamma\, \sqrt{ d\, N_{ \mathcal{W}_{\hat\gamma}(u) } } }{ 2 }.
\]
\end{lemma}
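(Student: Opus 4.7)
The plan is to start from the first-order optimality condition for the ridge-regularized MLE defining $\tilde{\bm\theta}_u$, convert it into a vector identity for $\tilde{\bm\theta}_u - \bm\theta_u$, and then bound the three resulting sources of error (noise, bias from misspecified neighbors, and regularization) in the $\tilde M_u^{-1}$-norm.

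\textbf{Step 1 (Optimality and linearization).} Since $\tilde{\bm\theta}_u$ minimizes the ridge-regularized negative log-likelihood aggregated over $\mathcal V_{\hat\gamma}(u)$, its gradient vanishes:
\[
\sum_{v \in \mathcal V_{\hat\gamma}(u),\, i} \bigl(\sigma(\tilde{\bm\theta}_u^{\top} \bm z_v^i) - y_v^i\bigr) \bm z_v^i + \lambda \tilde{\bm\theta}_u = \bm 0.
\]
Applying the mean value theorem by integrating $\sigma'$ along the segment from $\bm\theta_u$ to $\tilde{\bm\theta}_u$ and splitting homogeneous ($v\in\mathcal R_{\hat\gamma}(u)$) and heterogeneous ($v\in\mathcal W_{\hat\gamma}(u)$) contributions, I would rewrite this as
\[
G\,(\tilde{\bm\theta}_u - \bm\theta_u) = \sum_{v \in \mathcal V_{\hat\gamma}(u),\, i} \epsilon_v^i\, \bm z_v^i + \sum_{v \in \mathcal W_{\hat\gamma}(u),\, i} \bigl(\sigma(\bm\theta_v^{\top} \bm z_v^i) - \sigma(\bm\theta_u^{\top} \bm z_v^i)\bigr) \bm z_v^i - \lambda \bm\theta_u,
\]
where $\epsilon_v^i = y_v^i - \sigma(\bm\theta_v^{\top} \bm z_v^i)$ is mean-zero bounded noise and $G = \int_0^1 \sum \sigma'(\cdot)\, \bm z_v^i \bm z_v^{i\top} dt + \lambda I$ is the integrated Hessian.

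\textbf{Step 2 (Hessian lower bound).} Because $\|\bm\theta\|_2 \le 1$ and $\|\bm z_v^i\|_2 \le 2$, every argument of $\sigma'$ above lies in $[-2, 2]$, where $\sigma' \ge \kappa = 1/(2+e^2+e^{-2})$, so $G \succeq \kappa \tilde M_u$. Combining this with Cauchy--Schwarz in the $\tilde M_u$-metric,
\[
\kappa\, \|\tilde{\bm\theta}_u - \bm\theta_u\|_{\tilde M_u}^2 \le (\tilde{\bm\theta}_u - \bm\theta_u)^{\top} G (\tilde{\bm\theta}_u - \bm\theta_u) \le \|\tilde{\bm\theta}_u - \bm\theta_u\|_{\tilde M_u} \cdot \|\mathrm{RHS}\|_{\tilde M_u^{-1}},
\]
yields $\|\tilde{\bm\theta}_u - \bm\theta_u\|_{\tilde M_u} \le \kappa^{-1} \|\mathrm{RHS}\|_{\tilde M_u^{-1}}$, reducing the task to bounding each RHS piece in the $\tilde M_u^{-1}$-norm.

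\textbf{Step 3 (Three pieces).} For the \emph{noise} piece, I would apply the self-normalized martingale inequality of Abbasi-Yadkori et al.\ (the same tool used in \Cref{lemma_confidence_ellipsoid_of_hat_theta}) combined with the determinant--trace estimate $\log(\det(\tilde M_u)/\det((\lambda/\kappa)I)) \le d \log\bigl(1 + 4\kappa N_{\mathcal V_{\hat\gamma}(u)}/(d\lambda)\bigr)$, together with a union bound $\delta \mapsto \delta/(2U)$ so the statement holds simultaneously across all possible test users $u$. This produces the factor $2\sqrt{2\log(2U/\delta) + d\log(1 + 4\kappa N_{\mathcal V_{\hat\gamma}(u)}/(d\lambda))}$. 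For the \emph{regularization} piece, $\tilde M_u \succeq (\lambda/\kappa) I$ and $\|\bm\theta_u\|_2 \le 1$ give $\|\lambda \bm\theta_u\|_{\tilde M_u^{-1}} \le \sqrt{\lambda \kappa}$. For the \emph{bias} piece, \Cref{lemma_cardinality_of_R_and_W} guarantees $\|\bm\theta_u - \bm\theta_v\|_2 < \hat\gamma$ for every $v \in \mathcal W_{\hat\gamma}(u)$; combined with $\sigma' \le 1/4$ and $\|\bm z_v^i\|_2 \le 2$, this gives $|\sigma(\bm\theta_v^{\top} \bm z_v^i) - \sigma(\bm\theta_u^{\top} \bm z_v^i)| \le \hat\gamma/2$. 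Writing the bias as $Z_{\mathcal W} \bm c$ with $\|\bm c\|_\infty \le \hat\gamma/2$, the trace inequality $\mathrm{tr}(Z_{\mathcal W}^{\top} \tilde M_u^{-1} Z_{\mathcal W}) \le \mathrm{tr}(\tilde M_u^{-1} \tilde M_u) = d$ (valid because $Z_{\mathcal W} Z_{\mathcal W}^{\top} \preceq \tilde M_u$) gives $\|\mathrm{bias}\|_{\tilde M_u^{-1}} \le \hat\gamma \sqrt{d\, N_{\mathcal W_{\hat\gamma}(u)}}/2$. Assembling the three bounds with the $\kappa^{-1}$ prefactor from Step 2 yields the stated inequality.

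\textbf{Main obstacle.} The delicate point is the bias term. The clean $\sqrt d$ factor requires the trace-based bound that crucially exploits the containment $Z_{\mathcal W} Z_{\mathcal W}^{\top} \preceq \tilde M_u$; a naive triangle inequality over heterogeneous neighbors would produce a much worse $\sqrt{|\mathcal W_{\hat\gamma}(u)|}$ factor. A second subtlety is carefully using two different Lipschitz constants of $\sigma$ at once: $\sigma' \ge \kappa$ on the Hessian side (to pass from $G$ to $\tilde M_u$) and $\sigma' \le 1/4$ on the bias side (to obtain the $\hat\gamma/2$ Lipschitz bound), so that the constants line up cleanly in the final sum. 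Finally, one must verify that the self-normalized concentration applies uniformly across all users, so that a single event of probability $1-\delta$ controls the noise for every possible realization of $\mathcal V_{\hat\gamma}(u)$ that the clustering step may produce.
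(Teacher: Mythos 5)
Your proposal is correct and takes essentially the same route as the paper's proof: the first-order optimality condition plus a mean-value linearization with the Hessian lower bound $G \succeq \kappa \tilde M_u$, the Abbasi-Yadkori self-normalized inequality with a union bound over the $U$ users for the noise term, the bound $\|\lambda\bm\theta_u\|_{\tilde M_u^{-1}} \le \sqrt{\lambda\kappa}$ for the regularization term, and a Lipschitz argument combined with $\sum_{v,i}\|\bm z_v^i\|_{\tilde M_u^{-1}}^2 \le d$ to obtain the bias term $\tfrac{\hat\gamma}{2}\sqrt{d\,N_{\mathcal{W}_{\hat\gamma}(u)}}$. One small aside: the paper's own bias bound does proceed by the triangle inequality over heterogeneous samples followed by Cauchy--Schwarz and the same trace bound, which yields exactly the same $\sqrt{d\,N_{\mathcal{W}_{\hat\gamma}(u)}}$ factor, so that route is not lossy in the way your ``main obstacle'' remark suggests; this does not affect the validity of your argument.
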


\Cref{lemma_confidence_ellipsoid_of_tilde_theta} shows that the estimation error of $\tilde{\bm\theta}_u$ with respect to the information matrix built from its local neighborhood in $\mathcal{G}_{\hat\gamma}$ can be decomposed into two sources: the noise term (the first term), which captures the randomness due to finite samples, and the bias term (the second term), which reflects the heterogeneity arising from including neighbors in $\mathcal{W}_{\hat\gamma}(u)$. Building on this result, our first theorem characterizes the suboptimality gap of Algorithm~\ref{al1} in the offline setting.

\begin{theorem}\label{th1}
Under the same conditions as in \Cref{lemma_cardinality_of_R_and_W}, the suboptimality gap of Algorithm~\ref{al1} for any test user $u_t$ can be bounded with probability at least $1-\delta$ as:
\begin{align}
\textnormal{SubOpt}_{u_{t}}(\pi_{u_{t}}) 
& \leq \tilde O\left(\sqrt{d} \left(1 + \hat\gamma \sqrt{N_{\mathcal{W}_{\hat\gamma}(u_t)}} \right)
\, \Big\| \mathbb{E}_{\bm x\sim\rho_p}[\phi(\bm x,\pi_{u_{t}}(\bm x))] - \bm w \Big\|_{\tilde M_{u_t}^{-1}} \right) \label{align_first_form_of_th1} 
\\ & \leq \tilde O\left( \frac{ \sqrt d \left( 1 + \hat\gamma \sqrt{N_{\mathcal{W}_{\hat\gamma}(u_{t})}} \right) }{ \sqrt{ \lambda_{\min}(\tilde M_{u_{t}}) } } \right), \label{align_second_form_of_th1} 
\end{align}
where $\tilde O$ hides absolute constants and logarithmic factors. The matrix \(
\tilde M_{u_t} = \frac{\lambda}{\kappa} I + \sum_{v \in \mathcal{V}_{\hat\gamma}(u_t)} \sum_{i=1}^{N_v} \bm z_v^i (\bm z_v^i)^{\top}\)
denotes the information matrix constructed from the aggregated data of all users in $\mathcal{V}_{\hat\gamma}(u_t)$. Furthermore, when the threshold satisfies $\hat\gamma \leq \gamma$, the heterogeneous set becomes empty according to \Cref{lemma_cardinality_of_R_and_W}, i.e., $\mathcal{W}_{\hat\gamma}(u_t) = \emptyset$, and the suboptimality bound simplifies to
\begin{align}\label{align_special_form_of_th1}
\textnormal{SubOpt}_{u_t}(\pi_{u_t})
&\leq \tilde O\left( 
\sqrt{d}\big\|\mathbb{E}_{\bm x \sim \rho_p}[\phi(\bm x,\pi_{u_t}(\bm x))] - \bm w \big\|_{\tilde M_{u_t}^{-1}}
\right)
\leq 
\tilde O\!\left(
\sqrt{\frac{d}{\lambda_{\min}(\tilde M_{u_t})}}
\right).
\end{align}
\end{theorem}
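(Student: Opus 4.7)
The plan is to combine the pessimism principle underlying $\tilde J_{u_t}$ with the confidence ellipsoid of \Cref{lemma_confidence_ellipsoid_of_tilde_theta} to control the suboptimality gap. Letting $\bm\mu_\pi$ denote $\mathbb{E}_{\bm x\sim\rho_p}[\phi(\bm x,\pi(\bm x))]$ for brevity, the true value decomposes as $J_{u_t}(\pi) = (\bm\mu_\pi - \bm w)^\top \bm\theta_{u_t} + \bm w^\top \bm\theta_{u_t}$, while the pessimistic value from \Cref{align_calculation_of_value_function_al1} reads $\tilde J_{u_t}(\pi) = (\bm\mu_\pi - \bm w)^\top \tilde{\bm\theta}_{u_t} - \tilde\beta_{u_t}\|\bm\mu_\pi - \bm w\|_{\tilde M_{u_t}^{-1}}$. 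Inserting $\pm \tilde J_{u_t}(\pi^*_{u_t}) \pm \tilde J_{u_t}(\pi_{u_t})$ into the suboptimality gap yields
\[\textnormal{SubOpt}_{u_t}(\pi_{u_t}) = [J_{u_t}(\pi^*_{u_t}) - \tilde J_{u_t}(\pi^*_{u_t})] + [\tilde J_{u_t}(\pi^*_{u_t}) - \tilde J_{u_t}(\pi_{u_t})] + [\tilde J_{u_t}(\pi_{u_t}) - J_{u_t}(\pi_{u_t})].\]
The middle bracket is nonpositive by the optimality of $\pi_{u_t}$ for $\tilde J_{u_t}$, and the unidentifiable $\bm w^\top \bm\theta_{u_t}$ contributions cancel between the remaining two brackets.

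Next I would expand each surviving bracket into the form $(\bm\mu_\pi - \bm w)^\top (\bm\theta_{u_t} - \tilde{\bm\theta}_{u_t}) \pm \tilde\beta_{u_t}\|\bm\mu_\pi - \bm w\|_{\tilde M_{u_t}^{-1}}$ and apply Cauchy-Schwarz in the paired Mahalanobis norms, namely $|(\bm\mu_\pi - \bm w)^\top (\bm\theta_{u_t} - \tilde{\bm\theta}_{u_t})| \leq \|\bm\mu_\pi - \bm w\|_{\tilde M_{u_t}^{-1}} \cdot \|\tilde{\bm\theta}_{u_t} - \bm\theta_{u_t}\|_{\tilde M_{u_t}}$. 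Invoking \Cref{lemma_confidence_ellipsoid_of_tilde_theta} to bound the second factor by $\tilde\beta_{u_t} + \hat\gamma\sqrt{d N_{\mathcal{W}_{\hat\gamma}(u_t)}}/2$, and noting that $\tilde\beta_{u_t} = \tilde O(\sqrt{d})$, produces the prefactor $\tilde O(\sqrt{d}(1 + \hat\gamma\sqrt{N_{\mathcal{W}_{\hat\gamma}(u_t)}}))$ advertised in \Cref{align_first_form_of_th1}. Following the standard trick from the P-MLE analysis of Zhu et al.~(2023), I would then choose the free reference vector $\bm w$ to cancel the $\pi^*_{u_t}$ feature contribution (i.e., set $\bm w = \bm\mu_{\pi^*_{u_t}}$) so that the two Mahalanobis weights consolidate into a single factor $\|\bm\mu_{\pi_{u_t}} - \bm w\|_{\tilde M_{u_t}^{-1}}$, completing \Cref{align_first_form_of_th1}.

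For \Cref{align_second_form_of_th1}, I would pass from the $\tilde M_{u_t}^{-1}$-norm to the Euclidean norm via $\|\bm v\|_{\tilde M_{u_t}^{-1}} \leq \|\bm v\|_2 / \sqrt{\lambda_{\min}(\tilde M_{u_t})}$, then use $\|\phi\|_2 \leq 1$ together with the boundedness of $\bm w$ to absorb the Euclidean norm into the implicit constant. For \Cref{align_special_form_of_th1}, I would observe that under $\hat\gamma \leq \gamma$ the first condition $\gamma \leq \|\bm\theta_u - \bm\theta_v\|_2 < \hat\gamma$ in \Cref{align_cardinality_of_W} is vacuous, so $\mathcal{W}_{\hat\gamma}(u_t) = \emptyset$ and the bias term disappears. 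The main obstacle, I expect, is that the algorithmic $\tilde\beta_{u_t}$ only covers the stochastic MLE noise and not the bias $\hat\gamma\sqrt{d N_{\mathcal{W}_{\hat\gamma}(u_t)}}/2$ introduced by heterogeneous neighbors, so strict pessimism $\tilde J_{u_t}(\pi) \leq J_{u_t}(\pi)$ need not hold; the symmetric three-term decomposition above circumvents this because the two $\tilde\beta_{u_t}$ penalties enter with opposite signs and partially cancel against the Cauchy-Schwarz upper bounds, leaving only the combined noise-plus-bias radius multiplied by a single Mahalanobis weight. A final technical care is a union bound ensuring that \Cref{lemma_confidence_ellipsoid_of_hat_theta,lemma_cardinality_of_R_and_W,lemma_confidence_ellipsoid_of_tilde_theta} simultaneously hold on a single event of probability at least $1-\delta$.
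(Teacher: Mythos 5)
Your overall route is the same as the paper's: insert $\pm\tilde J_{u_t}(\pi^*_{u_t})\pm\tilde J_{u_t}(\pi_{u_t})$, discard the middle bracket by optimality of $\pi_{u_t}$ for the pessimistic objective, cancel the $\bm w^\top\bm\theta_{u_t}$ shift (the paper does this by working with $J'_{u_t}(\pi)=J_{u_t}(\pi)-\langle\bm\theta_{u_t},\bm w\rangle$), bound the two surviving brackets by Cauchy--Schwarz in the $\tilde M_{u_t}$ and $\tilde M_{u_t}^{-1}$ norms, and plug in the noise-plus-bias radius from \Cref{lemma_confidence_ellipsoid_of_tilde_theta}. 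Your closing observation is also exactly how the paper's proof works: in the $\pi_{u_t}$ bracket the algorithmic penalty $\tilde\beta_{u_t}$ cancels the stochastic part of the radius so only the bias $\tfrac{\hat\gamma}{2}\sqrt{d\,N_{\mathcal{W}_{\hat\gamma}(u_t)}}$ survives, while in the $\pi^*_{u_t}$ bracket the penalty adds, giving the $2\tilde\beta_{u_t}$ contribution; the passage to \Cref{align_second_form_of_th1} via $\lambda_{\min}(\tilde M_{u_t})$ and the $\hat\gamma\le\gamma$ specialization via \Cref{lemma_cardinality_of_R_and_W} likewise match, as does the union bound over the lemmas.

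The one step that does not hold up is the final ``consolidation'': you propose to choose $\bm w=\mathbb{E}_{\bm x\sim\rho_p}[\phi(\bm x,\pi^*_{u_t}(\bm x))]$ so that the $\pi^*_{u_t}$ bracket vanishes. But $\bm w$ is an input to \Cref{al1}: the policy $\pi_{u_t}$ whose suboptimality is being bounded is the maximizer of the pessimistic value in \Cref{align_calculation_of_value_function_al1}, in which $\bm w$ enters the penalty $\tilde\beta_{u_t}\|\mathbb{E}[\phi(\bm x,\pi(\bm x))]-\bm w\|_{\tilde M_{u_t}^{-1}}$ in a $\pi$-dependent way, and the theorem's bound is stated for that same input $\bm w$. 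You therefore cannot re-select $\bm w$ inside the proof, and in any case $\mathbb{E}[\phi(\bm x,\pi^*_{u_t}(\bm x))]$ depends on the unknown $\bm\theta_{u_t}$, so it is not an admissible input. The paper does not need this device: it keeps the generic $\bm w$, bounds the $\pi^*_{u_t}$ bracket by $\big(2\tilde\beta_{u_t}+\tfrac{\hat\gamma}{2}\sqrt{d\,N_{\mathcal{W}_{\hat\gamma}(u_t)}}\big)\|\mathbb{E}[\phi(\bm x,\pi^*_{u_t}(\bm x))]-\bm w\|_{\tilde M_{u_t}^{-1}}$ and the $\pi_{u_t}$ bracket by $\tfrac{\hat\gamma}{2}\sqrt{d\,N_{\mathcal{W}_{\hat\gamma}(u_t)}}\,\|\mathbb{E}[\phi(\bm x,\pi_{u_t}(\bm x))]-\bm w\|_{\tilde M_{u_t}^{-1}}$, and the distinction between the two concentratability factors is then immaterial for \Cref{align_second_form_of_th1} and \Cref{align_special_form_of_th1}, since both feature expectations are bounded and each norm is absorbed into $1/\sqrt{\lambda_{\min}(\tilde M_{u_t})}$. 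If you drop the choice of $\bm w$ and simply retain both Mahalanobis factors, your argument coincides with the paper's proof.
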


The suboptimality gap in \Cref{align_first_form_of_th1} of \Cref{th1} consists of the product of two interpretable terms. The first term, $\sqrt{d}\left(1+\hat\gamma\sqrt{N_{\mathcal{W}_{\hat\gamma}(u_t)}}\right)$, can be further decomposed into two parts. Up to logarithmic factors, the first part can be seen as a fundamental term that arises from the inherent sample noise and reflects the baseline statistical uncertainty. The second part, which grows linearly with $\hat\gamma$ and sublinearly with the number of samples from heterogeneous neighbors $N_{\mathcal{W}_{\hat\gamma}(u_t)}$, captures the bias introduced by potential heterogeneity from the neighbors. As represented in \Cref{lemma_cardinality_of_R_and_W}, choosing a larger $\hat\gamma$ can increase $N_{\mathcal{W}_{\hat\gamma}(u_t)}$, thereby amplifying this bias term. 

The second term in \Cref{align_second_form_of_th1}, $\Big\| \mathbb{E}_{\bm x\sim\rho_p}[\phi(\bm x,\pi_{u_{t}}(\bm x))] - \bm w \Big\|_{\tilde M_{u_t}^{-1}}$, is known as the \emph{concentratability coefficient}, a concept widely used in offline learning and policy evaluation~\citep{jin2021pessimism, zhu2023principled, li2025provably}. This term quantifies the mismatch between the context-action distribution induced by the optimal policy and the distribution supported by the offline data from $u_t$ and its neighbors in the resulting graph $\mathcal{G}_{\hat\gamma}$. A smaller concentratability coefficient implies that the offline data provides better coverage of the distribution under the optimal policy. Furthermore, choosing the reference vector $\bm w$ as a representative feature (e.g., the most frequent feature vector $\phi$ observed in the data)~\citep{zhu2023principled, li2025provably} aligns the concentratability term with the data-supported subspace, leading to a tighter suboptimality.

The dependence on $\tilde M_{u_t}$ in \Cref{align_second_form_of_th1} through its minimum eigenvalue indicates that the overall sample efficiency is constrained by how well the data covers each dimension of the parameter space. Specifically, $\lambda_{\min}(\tilde M_{u_t})$ appearing in the denominator implies that, in the worst case, the effective number of samples per dimension is determined by the least informative direction. As indicated in \Cref{lemma_cardinality_of_R_and_W}, increasing $\hat\gamma$ can expand the neighborhood $\mathcal{V}_{\hat\gamma}(u_t)$, enlarging $\tilde M_{u_t}$ and potentially improving coverage, though at the cost of introducing more heterogeneity bias.

\begin{remark}[Selection of $\hat\gamma$]\label{remark_selection_of_hatgamma}
As shown in \Cref{lemma_cardinality_of_R_and_W}, the cardinalities of both $\mathcal{R}_{\hat\gamma}(u)$ and $\mathcal{W}_{\hat\gamma}(u)$ depend critically on the choice of $\hat\gamma$. Increasing $\hat\gamma$ generally enlarges both sets: a larger $\mathcal{R}_{\hat\gamma}(u)$ provides more homogeneous samples that can improve the accuracy of estimating $\bm\theta_u$, whereas a larger $\mathcal{W}_{\hat\gamma}(u)$ may introduce greater bias due to the inclusion of heterogeneous neighbors (as analyzed in \Cref{lemma_confidence_ellipsoid_of_tilde_theta} and \Cref{th1}). Therefore, careful selection of $\hat\gamma$ is crucial. Notably, \Cref{align_special_form_of_th1} shows that choosing $\hat\gamma \leq \gamma$ simplifies the suboptimality bound to a bias-free form. This provides a practical strategy to avoid large bias when a lower bound of $\gamma$ is available, but at the cost of reducing $\mathcal{R}_{\hat\gamma}(u_t)$ and thus increasing the noise due to fewer aggregated samples.
Due to space limitations, we defer detailed guidelines on selecting this parameter in practice to Appendix \ref{appendix_discussions_on_hatgamma}.
\end{remark}

\begin{remark}[Comparison with Single User Case]\label{remark_comparison_with_single_user_case}
When we choose $\hat\gamma = 0$, Algorithm~\ref{al1} reduces to the special case where no clustering is learned and only the data from the test user, $\mathcal{D}_{u_t}$, is used for estimation. In this scenario, the bound in \Cref{th1} specializes to \(\tilde O \left( \sqrt{d} \Big\| \mathbb{E}_{\bm x \sim \rho_p} [\phi(\bm x, \pi_{u_t}(\bm x))] - \bm w \Big\|_{M_{u_t}^{-1}} \right),\)
which matches the suboptimality bound derived for the single-user case in previous works~\citep{zhu2023principled, li2025provably}. 
\end{remark}

\begin{remark}[Discussions on Parameter $\kappa$] 
The input parameter $\kappa$ in Algorithm \ref{al1} serves as a non-linearity coefficient, lower bounding the minimum slope of the sigmoid function, i.e.,
\begin{align}\label{align_kappa}
\min_{\substack{(\bm x,\bm a,\bm a')\in\mathcal{X}\times \mathcal{A}\times\mathcal{A},\ \bm\theta\in\Theta}}
\nabla\sigma\left( \phi(\bm x,\bm a)^{\top}\bm\theta - \phi(\bm x,\bm a')^{\top}\bm\theta \right) \;\geq\; \kappa > 0.
\end{align} 
In our setting, $\kappa$ can be safely fixed to the constant $1/(2+e^2+e^{-2})$, which guarantees the validity of our theoretical results (e.g., \Cref{th1}). This is because we assume $\|\bm\theta_u\|_2 \leq 1$ and $\|\phi(\bm x,\bm a)\|_2 \leq 1$, following prior works on contextual logistic bandits~\citep{chen2020dynamic, oh2019thompson, lee2024nearly} and clustering of bandits literature~\citep{gentile2014online, wang2023onlinea, wang2025online, liu2025offline}. 
In more general scenarios where the $\ell_2$-norm of either $\bm\theta_u$ or $\phi(\bm x,\bm a)$ is not bounded by a constant, the margin can become arbitrarily large, and $1/\kappa$ may grow exponentially. In such cases, as shown in \Cref{lemma_confidence_ellipsoid_of_tilde_theta} and \Cref{appendix_proof_of_th1} (proof of \Cref{th1}), our suboptimality bound scales linearly with $1/\kappa$. By contrast, prior work in the single-user setting exploits mirror-descent techniques to improve this dependence to $1/\sqrt{\kappa}$~\citep{li2025provably}, which is argued to be tight~\citep{das2024active, li2025provably}. Extending this improved $\sqrt{\kappa}$ dependence to our heterogeneous multi-user setting with clustering remains an interesting open problem. 
\end{remark}

\subsection{Further Results and Comparisons under Item Regularity Assumption}\label{subsection_item_regularity}

In the traditional clustering of bandits literature~\citep{gentile2014online, li2018online, wang2023onlinea, dai2024conversational, wang2025online, liu2025offline}, a common assumption is that the offline datasets provide sufficient coverage across all dimensions of the preference vector. This condition ensures that the information matrix is well-conditioned, which is crucial for accurate estimation. We first introduce this standard requirement, known as the \emph{item regularity assumption}, and then discuss how our algorithm and theoretical results change under this setting.

\begin{assumption}[Item Regularity]\label{assumption_item_regularity}
Let $\rho$ be a distribution over $\{(\bm x,\bm a,\bm a')\in\mathcal{X}\times\mathcal{A}\times\mathcal{A} : \|\phi(\bm x,\bm a)\|_2 \leq 1,\, \|\phi(\bm x,\bm a')\|_2 \leq 1\}$ where coveriance matrix \(\mathbb{E}_{(\bm x,\bm a,\bm a')\sim\rho_a}\!\left[(\phi(\bm x,\bm a) - \phi(\bm x,\bm a'))(\phi(\bm x,\bm a) - \phi(\bm x,\bm a'))^{\top}\right]\) is full rank with minimum eigenvalue $\lambda_a > 0$. For any fixed unit vector $\bm\theta \in \mathbb{R}^d$, the random variable $\left(\bm\theta^{\top}(\phi(\bm x,\bm a) - \phi(\bm x,\bm a'))\right)^2$, with $(\bm x, \bm a, \bm a')\sim\rho$, has sub-Gaussian tails with variance upper bounded by $\sigma^2$. Each context-action pair $(\bm x_u^i,\bm{a}_u^i,\bm a_{\ u}^{\prime i})$ in $\mathcal{D}_u$ is selected from a finite candidate set $\mathcal{S}_u^i$ with size $|\mathcal{S}_u^i|\leq S$ for any $i\in[N_u]$, where the actions in $\mathcal{S}_u^i$ are independently drawn from $\rho$. 
Moreover, we assume the \textit{smoothed regularity parameter} \(\tilde\lambda_a=\int_0^{\lambda_a}\Big( 1 - e^{-\frac{(\lambda_a-x)^2}{2\sigma^2}} \Big)^S \mathrm{d}x\)  
is known to the algorithm.
\end{assumption}

\Cref{assumption_item_regularity} ensures that the data distribution is sufficiently rich to provide informative samples in all directions of the preference vector $\bm\theta_u$. This assumption is especially relevant when offline data are collected from finite action spaces with bounded size, such as datasets generated by logging policies in online bandits~\citep{dudik2015contextual, wang2025online}. Under this condition, preference estimates become accurate once enough data are observed, since the minimum eigenvalue of the information matrix grows directly with the number of samples. Consequently, our confidence bounds decrease with the amount of offline data rather than depending solely on the minimum eigenvalue itself. \Cref{lemma_cardinality_under_item_regularity} summarizes the modified clustering conditions and resulting characterizations.

\begin{lemma}[Extension of \Cref{lemma_cardinality_of_R_and_W}]\label{lemma_cardinality_under_item_regularity}
Under \Cref{assumption_item_regularity}, replace the confidence interval by
\(
\textnormal{CI}_u = \Big(\sqrt{\lambda\kappa} + 2\sqrt{\, d\, \log\big( 1 + \frac{4\kappa N_u}{\lambda d} \big) + 2\log\left( \frac{2U}{\delta} \right) }\Big)\Big/\Big(\kappa\sqrt{\tilde\lambda_a N_u/2}\Big),
\)
and adjust the condition in \Cref{align_clutering_condition_for_similarity} to:
\[
\big\|\hat{\bm\theta}_{u_1} - \hat{\bm\theta}_{u_2}\big\|_2 < \hat\gamma - \alpha(\textnormal{CI}_{u_1} + \textnormal{CI}_{u_2})
\quad \text{and} \quad 
\min\{N_{u_1},\, N_{u_2}\} \geq N_{\min},
\]
where $N_{\min} = \frac{16}{\tilde\lambda_a^2}\log\left( \frac{8Ud}{\tilde\lambda_a^2\delta} \right)$. All other conditions remain as in \Cref{lemma_cardinality_of_R_and_W}. Then there exist some $\alpha_r' \in \left( \frac{\kappa\sqrt{\tilde\lambda_a}}{3(\alpha+1)\sqrt{\max\{2,d\}\log(2U/\delta)}}, \frac{\kappa\sqrt{\tilde\lambda_a}}{2(\alpha - 1)\sqrt{2\log(2U/\delta)}} \right)$ and $\alpha_w' \in \left( 0, \frac{\kappa\sqrt{\tilde\lambda_a}}{2(\alpha-1)\sqrt{\log(2U/\delta)}} \right)$ such that the cardinalities of $\mathcal{R}_{\hat\gamma}(u)$ and $\mathcal{W}_{\hat\gamma}(u)$ are given by:
\begin{align}
\mathcal{R}_{\hat\gamma}(u) 
&= 
\begin{cases}
\Big\{
v\,\Big|\,
\bm\theta_u = \bm\theta_v,\;
\frac{1}{\sqrt{N_u}} + \frac{1}{\sqrt{N_v}} < \alpha_r' \hat\gamma,\;
N_v \geq N_{\min}
\Big\} \cup \{u\}, & N_u \geq N_{\min} \\ 
\{u\}, & \text{otherwise}
\end{cases}, 
\label{align_cardinality_of_R_regularity}
\\
\mathcal{W}_{\hat\gamma}(u)
&=
\begin{cases}
\Big\{
v\,\Big|\,
\gamma \leq \|\bm\theta_u - \bm\theta_v\|_2 < \hat\gamma,\;
\frac{1}{\sqrt{N_u}} + \frac{1}{\sqrt{N_v}} < \alpha_w' \varepsilon
\Big\}, & N_u \geq N_{\min} \\
\emptyset, & \text{otherwise}
\end{cases}.
\label{align_cardinality_of_W_regularity}
\end{align}
\end{lemma}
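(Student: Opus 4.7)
The plan is to reduce the new cardinality characterizations to those in \Cref{lemma_cardinality_of_R_and_W} by showing that, under \Cref{assumption_item_regularity}, the minimum eigenvalue $\lambda_{\min}(M_u)$ concentrates around a deterministic quantity proportional to $N_u$. Specifically, the goal is to prove that for any user $u$ with $N_u \geq N_{\min}$,
\[
\lambda_{\min}(M_u) \;\geq\; \tilde\lambda_a\, N_u / 2
\]
holds with high probability. Once this eigenvalue lower bound is in place, substituting it into \Cref{lemma_confidence_ellipsoid_of_hat_theta} immediately yields the modified $\textnormal{CI}_u$ stated in the lemma, and the connection condition in \Cref{align_clutering_condition_for_similarity} becomes expressible in terms of $1/\sqrt{N_u}$ rather than $1/\sqrt{\lambda_{\min}(M_u)}$.

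The key technical step is a matrix concentration argument. First I would show that for any fixed unit vector $\bm v \in \mathbb{R}^d$, the expected squared projection satisfies $\mathbb{E}[(\bm v^\top \bm z_u^i)^2] \geq \tilde\lambda_a$ under worst-case selection from the candidate pool $\mathcal{S}_u^i$ of size at most $S$. Even when the chosen pair minimizes $(\bm v^\top \bm z)^2$ among $S$ i.i.d.\ draws from $\rho$, the sub-Gaussian tail bound with variance $\sigma^2$ gives $\Pr[(\bm v^\top \bm z)^2 > x] \geq 1 - e^{-(\lambda_a - x)^2/(2\sigma^2)}$ for $x\in[0,\lambda_a]$, and applying the layer-cake formula $\mathbb{E}[Y] = \int_0^\infty \Pr[Y>x]\,\mathrm{d}x$ to the worst-of-$S$ selection reproduces exactly the integral defining $\tilde\lambda_a$. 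I would then combine a matrix Chernoff inequality with a standard $\epsilon$-net over the unit sphere in $\mathbb{R}^d$ to conclude that with probability at least $1 - \delta/(2U)$, whenever $N_u \geq N_{\min} = (16/\tilde\lambda_a^2)\log(8Ud/(\tilde\lambda_a^2\delta))$, the bound $\lambda_{\min}(M_u) \geq \tilde\lambda_a N_u / 2$ holds. A union bound over all $U$ users yields the joint concentration event with probability at least $1-\delta/2$.

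On this event, the remainder mirrors the proof of \Cref{lemma_cardinality_of_R_and_W}. The connection condition $\|\hat{\bm\theta}_{u_1}-\hat{\bm\theta}_{u_2}\|_2 < \hat\gamma - \alpha(\textnormal{CI}_{u_1}+\textnormal{CI}_{u_2})$, combined with \Cref{lemma_confidence_ellipsoid_of_hat_theta} on a separate $1-\delta/2$ event, forces every connected pair $(u_1, u_2)$ with $\min\{N_{u_1}, N_{u_2}\}\geq N_{\min}$ to satisfy simultaneously (i) a preference-gap bound on $\|\bm\theta_{u_1}-\bm\theta_{u_2}\|_2$ and (ii) a data-sufficiency bound on $1/\sqrt{N_{u_1}}+1/\sqrt{N_{u_2}}$. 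Splitting into the homogeneous case $\bm\theta_{u_1}=\bm\theta_{u_2}$ and the heterogeneous case $\|\bm\theta_{u_1}-\bm\theta_{u_2}\|_2\geq\gamma$ exactly as in \Cref{lemma_cardinality_of_R_and_W} produces constants $\alpha_r'$ and $\alpha_w'$ in the stated ranges, with the extra factor $\sqrt{\tilde\lambda_a}$ coming directly from the denominator of the new $\textnormal{CI}_u$. The trivial branch $N_u < N_{\min}$ encodes that the matrix concentration fails below the threshold, so no neighbor of $u$ can be certified and the sets collapse to $\{u\}$ and $\emptyset$, respectively. The hardest part is the first step: reconstructing the exact integral form of $\tilde\lambda_a$ under worst-case selection from a size-$S$ candidate pool, since a naive matrix Chernoff using only $\lambda_a$ would discard the smoothing by $S$ and weaken the required sample threshold $N_{\min}$.
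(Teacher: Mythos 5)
Your proposal is correct and follows essentially the same route as the paper: establish $\lambda_{\min}(M_u)\geq \tilde\lambda_a N_u/2$ with high probability for $N_u\geq N_{\min}$, substitute this into \Cref{lemma_confidence_ellipsoid_of_hat_theta} to get the new $\textnormal{CI}_u$, and then rerun the case analysis of \Cref{lemma_cardinality_of_R_and_W} with $1/\sqrt{\lambda_{\min}(M_u)}$ replaced by $1/\sqrt{\tilde\lambda_a N_u/2}$. The only difference is that the paper simply cites this eigenvalue concentration (Lemma J.1 of \citet{wang2023onlinea} and Lemma 7 of \citet{li2018online}, restated as \Cref{le_lambdamin}) whereas you sketch its re-derivation via the worst-of-$S$ layer-cake bound giving $\tilde\lambda_a$ plus matrix concentration; that sketch is the standard argument behind those cited lemmas (though, since the pair selection from each candidate set may depend on history, one should use a martingale-type matrix concentration rather than a plain i.i.d.\ matrix Chernoff bound, exactly as in the cited results).
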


The expressions above show that, under \Cref{assumption_item_regularity}, the ability to correctly identify homogeneous and heterogeneous neighbors depends explicitly on the sample size rather than the conditioning of the Gramian matrix. This aligns with the results in standard offline clustering of bandits frameworks~\citep{liu2025offline}. Below we present \Cref{th1_item_regularity}, which characterizes the suboptimality of our algorithm under \Cref{assumption_item_regularity}.

\begin{corollary}\label{th1_item_regularity}
Under the same conditions as in \Cref{lemma_cardinality_under_item_regularity}, the suboptimality of the algorithm is bounded with probability at least $1-\delta$ as:
\[
\textnormal{SubOpt}_{u_t}(\pi_{u_t}) \leq 
\tilde O\left( \sqrt\frac{d}{\tilde\lambda_a} \left( \sqrt{ \frac{1}{ N_{\mathcal{V}_{\hat\gamma}(u_t)} } } 
+ \hat\gamma \sqrt{\eta_{ \mathcal{W}_{\hat\gamma}(u_t) }}
\right)\right),
\]
where 
$\eta_{ \mathcal{W}_{\hat\gamma}(u_t) }
= 
\frac{ N_{ \mathcal{W}_{\hat\gamma}(u_t) } }{ N_{ \mathcal{V}_{\hat\gamma}(u_t) } }$
denotes the fraction of samples from heterogeneous neighbors among all samples aggregated for $u_t$ in the graph $\mathcal{G}_{\hat\gamma}$.
\end{corollary}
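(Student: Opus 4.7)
The plan is to build on \Cref{th1} by plugging in a sample-count lower bound on $\lambda_{\min}(\tilde M_{u_t})$ that follows directly from \Cref{assumption_item_regularity}, thereby converting the spectral-style bound of \Cref{th1} into one expressed purely in terms of aggregated sample sizes. Concretely, I would start from the form in \Cref{align_second_form_of_th1}, namely $\textnormal{SubOpt}_{u_t}(\pi_{u_t}) \leq \tilde O\!\big(\sqrt{d}\big(1+\hat\gamma\sqrt{N_{\mathcal{W}_{\hat\gamma}(u_t)}}\big)/\sqrt{\lambda_{\min}(\tilde M_{u_t})}\big)$. Nothing in its derivation uses item regularity, so it remains valid verbatim once the modified clustering condition of \Cref{lemma_cardinality_under_item_regularity} is used to determine $\mathcal{V}_{\hat\gamma}(u_t)$ and $\mathcal{W}_{\hat\gamma}(u_t)$; the only remaining task is to translate $\lambda_{\min}(\tilde M_{u_t})$ into something that scales with $N_{\mathcal{V}_{\hat\gamma}(u_t)}$.

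The main technical step is to establish that, under \Cref{assumption_item_regularity},
\begin{equation*}
\lambda_{\min}(\tilde M_{u_t}) \;\geq\; \tfrac{\tilde\lambda_a}{2}\,N_{\mathcal{V}_{\hat\gamma}(u_t)}
\end{equation*}
with probability at least $1-\delta$. This is precisely the concentration fact that is already being used implicitly when the $\sqrt{\tilde\lambda_a N_u / 2}$ denominator is substituted into $\textnormal{CI}_u$ in \Cref{lemma_cardinality_under_item_regularity}. The underlying mechanism is standard: $\tilde M_{u_t}$ is a sum of $N_{\mathcal{V}_{\hat\gamma}(u_t)}$ outer products $\bm z_v^i (\bm z_v^i)^\top$ whose population covariance has smallest eigenvalue $\lambda_a$, and each triple is selected from a candidate set of size at most $S$. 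Bounding the worst-of-$S$ quantile of a sub-Gaussian quadratic form gives a per-round tail of the form $(1-e^{-(\lambda_a-x)^2/(2\sigma^2)})^S$, whose integral over $x\in[0,\lambda_a]$ is exactly $\tilde\lambda_a$. A union bound over an $\epsilon$-net of unit directions $\bm\theta$ then promotes the pointwise quadratic-form concentration into the spectral statement above. Because samples in $\tilde M_{u_t}$ come from multiple users with different candidate sets $\mathcal{S}_v^i$ but are all i.i.d.\ from $\rho$, the aggregated Gramian still satisfies the same concentration; I would import this lemma from the clustering-of-bandits literature \citep{gentile2014online, li2018online, wang2025online, liu2025offline} with only index bookkeeping rather than rederive it.

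The third step is purely algebraic. Substituting the eigenvalue lower bound yields
\begin{align*}
\textnormal{SubOpt}_{u_t}(\pi_{u_t})
&\leq \tilde O\!\left(\frac{\sqrt{d}\,\big(1+\hat\gamma\sqrt{N_{\mathcal{W}_{\hat\gamma}(u_t)}}\big)}{\sqrt{\tilde\lambda_a \, N_{\mathcal{V}_{\hat\gamma}(u_t)}/2}}\right) \\
&= \tilde O\!\left(\sqrt{\tfrac{d}{\tilde\lambda_a}}\left(\sqrt{\tfrac{1}{N_{\mathcal{V}_{\hat\gamma}(u_t)}}} + \hat\gamma\sqrt{\tfrac{N_{\mathcal{W}_{\hat\gamma}(u_t)}}{N_{\mathcal{V}_{\hat\gamma}(u_t)}}}\right)\right),
\end{align*}
which is the claimed bound after substituting $\eta_{\mathcal{W}_{\hat\gamma}(u_t)} = N_{\mathcal{W}_{\hat\gamma}(u_t)}/N_{\mathcal{V}_{\hat\gamma}(u_t)}$. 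A union bound over the $1-\delta$ event of \Cref{th1} and the $1-\delta$ event of the eigenvalue concentration keeps the final guarantee at confidence $1-\delta$ up to a constant relabeling absorbed in $\tilde O$.

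I expect the main obstacle to be the uniform-in-direction matrix concentration that produces the integrated factor $\tilde\lambda_a$: aggregated samples come from multiple users with distinct candidate sets, and the argument must hold uniformly over all unit $\bm\theta$ in order to convert the pointwise quadratic-form control into a statement about $\lambda_{\min}$. The threshold $N_{u_t}\geq N_{\min}$ built into \Cref{lemma_cardinality_under_item_regularity} is exactly what is needed for this concentration to kick in, so the edge case where $\mathcal{V}_{\hat\gamma}(u_t)=\{u_t\}$ is automatically handled. Fortunately, this uniform concentration is a routine adaptation of an existing lemma, and once it is in place the remainder of the proof reduces to the one-line algebraic simplification above.
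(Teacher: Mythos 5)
Your proposal is correct and follows essentially the same route as the paper: start from the spectral form of \Cref{th1}, invoke the item-regularity concentration (the paper cites Lemma J.1 of \citet{wang2023onlinea} and Lemma 7 of \citet{li2018online}, the same result you propose to import) to get $\lambda_{\min}(\tilde M_{u_t}) \gtrsim \tilde\lambda_a N_{\mathcal{V}_{\hat\gamma}(u_t)}/2$, and then simplify algebraically using $\eta_{\mathcal{W}_{\hat\gamma}(u_t)} = N_{\mathcal{W}_{\hat\gamma}(u_t)}/N_{\mathcal{V}_{\hat\gamma}(u_t)}$. The extra detail you give on the uniform-in-direction concentration and the role of $N_{\min}$ is consistent with what the imported lemma provides, so there is no gap.
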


\Cref{th1_item_regularity} takes a form similar to the suboptimality bounds in classical offline clustering of bandits~\citep{liu2025offline}. Specifically, the term $\sqrt{1/N_{\mathcal{V}_{\hat\gamma}(u_t)}}$ captures the \emph{noise}, arising from the inherent variance in estimating the preference vector. This term decreases as the number of aggregated samples $N_{\mathcal{V}_{\hat\gamma}(u_t)}$ increases, implying that a larger $\hat\gamma$, which connects more users, reduces the noise. In contrast, the term $\hat\gamma\sqrt{\eta_{\mathcal{W}_{\hat\gamma}(u_t)}}$ captures the \emph{bias}, introduced by aggregating data from neighbors whose preferences differ from $u_t$. This bias grows linearly with $\hat\gamma$ and depends on the fraction of heterogeneous samples included. Thus, while increasing $\hat\gamma$ reduces noise, it also risks introducing greater bias. This tradeoff underscores the importance of carefully tuning $\hat\gamma$ to balance sample efficiency with robustness against heterogeneity, as discussed in \Cref{remark_selection_of_hatgamma}. Finally, the scaling factor $\sqrt{d\big/\tilde\lambda_a}$ arises from \Cref{assumption_item_regularity}, reflecting that each offline sample contributes only partial information across dimensions. As a result, the overall suboptimality must be scaled by $\sqrt{d/\tilde\lambda_a}$ to capture performance across all preference dimensions.

\begin{remark}[Robustness of Algorithm \ref{al1}]\label{remark_robustness_of_al1}
As noted in prior works on clustering of bandits~\citep{wang2023onlinea, dai2024conversational}, it can be restrictive to assume that users within the same cluster share exactly identical preferences, as small gaps may exist even among users with similar backgrounds. To address this, those works developed additional algorithms to handle intra-cluster bias, often based on edge-deletion strategies~\citep{gentile2014online, li2018online}. In contrast, we argue that our proposed Algorithm~\ref{al1} is inherently robust to such cases. Specifically, when small preference gaps exist within a cluster, the setting can be interpreted as if each user forms its own cluster (i.e., $U=J$). In this case, $\mathcal{R}_{\hat\gamma}(u)=\{u\}$ in \Cref{lemma_cardinality_of_R_and_W}, while other users with similar (though not identical) preferences may be included in $\mathcal{W}_{\hat\gamma}(u)$ when $\hat\gamma$ is chosen larger than this gap, provided their data sufficiency satisfies the second condition in \Cref{align_cardinality_of_W} or \Cref{align_cardinality_of_W_regularity}. According to \Cref{th1} and \Cref{th1_item_regularity}, such users still contribute to the aggregated information matrix $\tilde M_{u_t}$ and to the neighbor set $\mathcal{V}_{\hat\gamma}(u_t)$ which helps decrease noise with some additional bias, reflected in larger $N_{\mathcal{W}_{\hat\gamma}(u_t)}$ and thus $\eta_{\mathcal{W}_{\hat\gamma}(u_t)}$ (noting that $\eta_{\mathcal{W}_{\hat\gamma}(u_t)} \leq 1$ always holds). Therefore, in practice, when small intra-cluster gaps exist, it is often preferable to select a relatively small $\hat\gamma$ to better control the bias.
\end{remark}

\section{Algorithm for \hybmodel{}}\label{sec:active_data_selection}

In \Cref{sec:pure_offline}, we analyzed the algorithm designed for clustering-based preference learning under the pure offline setting. However, as shown in \Cref{th1}, a key limitation of the pure offline case is its reliance on the distribution of the available datasets. More specifically, if the data collected from a user's neighbors fail to adequately cover the distribution induced by the optimal policy, the resulting concentratability coefficient may become large, which can significantly degrade performance. This phenomenon corresponds to the second research question introduced in \Cref{sec:intro}: \textit{how to mitigate the impact of insufficient coverage in offline datasets}.
In many real-world applications, it is often feasible to collect a small amount of additional online or interactive data to complement existing offline datasets.
Motivated by this, we extend the our offline algorithm in \Cref{sec:pure_offline} to the \hybmodel{} defined in \Cref{sec: problem_setting}, which aims to address the distributional limitation challenge of the \offmodel{} by combining offline clustering with active-data augmentation.

\begin{algorithm}[tbhp]
\caption{Active-data Augmented - Offline Connection-based Clustering of Preference Learning}
\label{alg_PersonalizedRLHF}
\begin{algorithmic}[1]\label{al2}
\STATE \textbf{Input:} Test user $u_t \in \mathcal{U}$, offline dataset $\mathcal{D} = \bigcup_{u \in \mathcal{U}} \mathcal{D}_u$, and online rounds $N$; Graph $\mathcal{G}_{\hat\gamma}$, neighbor set $\mathcal{V}_{\hat\gamma}(u_{t})$, aggregated Gramian matrix $\tilde M_{u_t}$, and initial preference estimate $\tilde{\bm\theta}_{u_t}$ from Algorithm~\ref{al1} \label{line_input_al2}.
\STATE \textbf{Initialization:} Set $\tilde M_{u_{t}}^0 \leftarrow \tilde M_{u_{t}}$ and $\tilde{\bm\theta}_{u_{t}}^0 \leftarrow \tilde{\bm\theta}_{u_{t}}$.\label{line_initialization_al2}
\STATE \texttt{// Active-data Augmentation}
\FOR{$n = 1, \dots, N$}
    \STATE Select $\left(\mathring{\bm x}_{u_{t}}^n,\, \mathring{\bm a}_{u_{t}}^n,\, \mathring{\bm a}_{\ u_t}^{\prime n}\right)$ according to \Cref{align_active_data_selection}.
    \STATE Observe feedback $\mathring{y}_{u_{t}}^n$.
    \STATE Compute $\mathring{\bm z}_{u_{t}}^n = \phi(\mathring{\bm x}_{u_{t}}^n, \mathring{\bm a}_{u_{t}}^n) - \phi(\mathring{\bm x}_{u_{t}}^n, \mathring{\bm a}_{\ u_t}^{\prime n})$.
    \STATE Update \(\tilde M_{u_{t}}^n = \tilde M_{u_{t}}^{n-1} + \mathring{\bm z}_{u_{t}}^n \left( \mathring{\bm z}_{u_{t}}^n \right)^{\top}\) and \(\tilde{\bm\theta}_{u_{t}}^n\) \text{as in} \Cref{align_calculation_of_tilde_theta_t}.
\ENDFOR
\STATE \texttt{// Policy Output}
\STATE Construct \(\overline{\bm\theta}_{u_t}\) as \Cref{align_constructing_overline_theta}.
\STATE \textbf{Output:} 
\(\pi_{u_{t}}(\bm x) = \argmax_{\bm a \in \mathcal{A}} \phi(\bm x, \bm a)^{\top} \overline{\bm\theta}_{u_{t}}.\)
\end{algorithmic}
\end{algorithm}

\subsection{Algorithm Design: \hybalgo{}}

We now introduce our algorithm for the \hybmodel{}, which extends the cluster structure learned in \offalgo{} (\Cref{al1}). 
Recall from \Cref{sec: problem_setting} that in \hybmodel{}, the learner can interact with the environment for a limited number of rounds to collect additional feedback. Specifically, it is allowed to select $N$ rounds of active data for the target user $u_t$ to mitigate the poor coverage of the offline datasets.
We refer our algorithm in this setting as \textit{Active-data Augmented - Offline Connection-based Clustering of Preference Learning} (\hybalgo{}). 
The core idea of \hybalgo{} is to actively select $N$ rounds of data for the test user to complement the offline data by improving the coverage of the feature space (e.g. in conversational recommendation systems the website adopts $N$ rounds of further dialogues to identify the users' preferences).
Since the clustering structure has been learned offline, the active-data augmentation phase should be based on the aggregated Gramian matrix $\tilde M_{u_t}$, which summarizes the information from the test user's neighborhoods. As shown by the suboptimality bound in \Cref{th1}, the estimation error is largely determined by the minimum eigenvalue of $\tilde M_{u_t}$. Therefore, the goal of this phase is to actively collect new data to increase this eigenvalue, ensuring that each dimension is sufficiently covered. The detailed procedure is summarized in \Cref{al2}.

\textbf{Input and Initialization.}
The inputs and initialization directly use the results from Algorithm~\ref{al1}. Specifically, in addition to test user $u_t$ and offline dataset $\mathcal{D}$, the algorithm also takes the learned cluster graph $\mathcal{G}_{\hat\gamma}$, the neighbor set $\mathcal{V}_{\hat\gamma}(u_t)$, and the initial Gramian matrix $\tilde M_{u_t}$ and preference estimate $\tilde{\bm\theta}_{u_t}$ (Line~\ref{line_input_al2}). These are used to initialize the active-data augmentation phase (Line~\ref{line_initialization_al2}).

\textbf{Active-data Augmentation.}
The key component of Algorithm~\ref{al2} is the active-data augmentation procedure. In each round $n$, the algorithm selects the context-action pair on the most underrepresented dimensions to broaden the information matrix:
\begin{align}\label{align_active_data_selection}
\left(\mathring{\bm x}_{u_{t}}^n,\, \mathring{\bm a}_{u_{t}}^n,\, \mathring{\bm a}_{\ u_t}^{\prime n}\right)
= \argmax_{(\bm x,\, \bm a,\, \bm a') \in \mathcal{X} \times \mathcal{A} \times \mathcal{A}} 
\Big\{ \big\| \phi(\bm x, \bm a) - \phi(\bm x, \bm a') \big\|_{ \left( \tilde M_{u_{t}}^{n-1} \right)^{-1} } \Big\}.
\end{align}
After selection, the feedback $\mathring{y}_{u_t}^n$ is observed, and the difference feature $\mathring{\bm z}_{u_t}^n$ is computed. The Gramian matrix is then updated as \(\tilde M_{u_t}^n = \tilde M_{u_t}^{n-1} + \mathring{\bm z}_{u_t}^n \left( \mathring{\bm z}_{u_t}^n \right)^{\top},\) and the preference estimate is refined by solving the regularized maximum likelihood problem (regularized by the same $\lambda$ as that in \Cref{al1}) that combines both the offline aggregated data and all active-data up to round $n$:
\begin{align}\label{align_calculation_of_tilde_theta_t}
\tilde{\bm\theta}_{u_{t}}^n = \argmin_{\bm\theta} \Bigg(
& -\sum_{v \in \mathcal{V}_{\hat\gamma}(u_{t})} \sum_{i=1}^{N_v} 
\Big[ y_v^i \log \sigma\big(\bm\theta^{\top} \bm z_v^i\big)
+ (1 - y_v^i) \log \sigma\big(-\bm\theta^{\top} \bm z_v^i\big) \Big] \notag\\
& - \sum_{s=1}^{n} \Big[ \mathring{y}_{u_{t}}^{s} \log \sigma\big(\bm\theta^{\top} \mathring{\bm z}_{u_{t}}^{s}\big)
+ (1 - \mathring{y}_{u_{t}}^{s}) \log \sigma\big(-\bm\theta^{\top} \mathring{\bm z}_{u_{t}}^{s}\big) \Big]
+ \frac{\lambda}{2} \big\| \bm\theta \big\|_2^2 
\Bigg).
\end{align}

\textbf{Policy Output.}
Finally, the algorithm constructs the final preference estimate $\overline{\bm\theta}_{u_t}$ by taking a weighted average of all historical estimates $\tilde{\bm\theta}_{u_{t}}^n$ for $n=1,\cdots, N$:
\begin{align}\label{align_constructing_overline_theta}
\overline{\bm\theta}_{u_{t}} = 
\frac{1}{ d\, \lambda_{\min}\left( \tilde M_{u_{t}}^{N} \right) + N }
\left(
d\, \lambda_{\min}\left( \tilde M_{u_{t}}^{N} \right) \tilde{\bm\theta}_{u_{t}}^{N} 
+ \sum_{n=1}^{N} \tilde{\bm\theta}_{u_{t}}^n
\right).
\end{align}
This weighting places more emphasis on the final estimate, extending prior approach in~\citet{das2024active} which only uses a simple average for the pure active setting. The learned policy then selects the action that maximizes the expected reward as: \(\pi_{u_t}(\bm x) = \argmax_{\bm a \in \mathcal{A}} \phi(\bm x, \bm a)^{\top} \overline{\bm\theta}_{u_t}.\)

\subsection{Theoretical Results for Algorithm \ref{al2}}

We now present the theoretical guarantee for Algorithm~\ref{al2}, \hybalgo{}, in \Cref{th2}.

\begin{theorem}\label{th2}
Under the same assumptions as in \Cref{lemma_cardinality_of_R_and_W} and \Cref{th1}, the suboptimality gap of Algorithm~\ref{al2} for the test user $u_t$ can be bounded with probability at least $1-\delta$ as:
\[
\textnormal{SubOpt}_{u_{t}}(\pi_{u_{t}})
\leq 
\tilde O\left(
\frac{
\sqrt d \left( 1 + \hat\gamma \sqrt{ N_{\mathcal{W}_{\hat\gamma}(u_{t})} } \right)
}{
\sqrt{ \lambda_{\min}\left( \tilde M_{u_{t}}^{N} \right) + N/d }
}
\right),
\]
where $\tilde M_{u_t}^N=\frac{\lambda}{\kappa} I + \sum_{v \in \mathcal{V}_{\hat\gamma}(u_t)} \sum_{i=1}^{N_v} \bm z_v^i (\bm z_v^i)^{\top}+\sum_{i=1}^N\mathring{\bm z}_{u}^i(\mathring{\bm z}_u^i)^{\top}$ denotes the final Gramian matrix that combines both the offline aggregated data and the actively selected data in Algorithm~\ref{al2}.
\end{theorem}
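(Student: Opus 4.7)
\textbf{Proof plan for \Cref{th2}.}
The plan is to reduce the suboptimality gap to a parameter-estimation bound on $\overline{\bm\theta}_{u_t}$, control the confidence radius of each intermediate MLE $\tilde{\bm\theta}_{u_t}^n$ by reusing \Cref{lemma_confidence_ellipsoid_of_tilde_theta}, and then quantify the eigenvalue growth of $\tilde M_{u_t}^n$ induced by the greedy active-selection rule in \Cref{align_active_data_selection}. The bound $\sqrt{d}(1+\hat\gamma\sqrt{N_{\mathcal{W}_{\hat\gamma}(u_t)}})/\sqrt{\lambda_{\min}(\tilde M_{u_t}^N)+N/d}$ should emerge from combining these two ingredients through the specific weighting chosen in \Cref{align_constructing_overline_theta}.

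\textbf{Step 1 (policy-to-estimator).} Because $\pi_{u_t}(\bm x)=\argmax_{\bm a}\phi(\bm x,\bm a)^{\top}\overline{\bm\theta}_{u_t}$ is the greedy policy with respect to $\overline{\bm\theta}_{u_t}$, the standard ``add and subtract'' trick together with $\|\phi(\bm x,\pi_{u_t}^*(\bm x))-\phi(\bm x,\pi_{u_t}(\bm x))\|_2\le 2$ gives $\textnormal{SubOpt}_{u_t}(\pi_{u_t})\le 2\|\overline{\bm\theta}_{u_t}-\bm\theta_{u_t}\|_2$, reducing the task to bounding the $\ell_2$ error of $\overline{\bm\theta}_{u_t}$.

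\textbf{Step 2 (per-iterate confidence ellipsoid).} At round $n$, $\tilde{\bm\theta}_{u_t}^n$ is the regularized MLE on the aggregated offline data over $\mathcal{V}_{\hat\gamma}(u_t)$ augmented by the active samples $\{\mathring{\bm z}_{u_t}^s\}_{s\le n}$. Since the active samples come from $u_t$ itself, they contribute only to the noise term of the decomposition in \Cref{lemma_confidence_ellipsoid_of_tilde_theta} and leave the heterogeneity term $\hat\gamma\sqrt{d\,N_{\mathcal{W}_{\hat\gamma}(u_t)}}/2$ unchanged. A union bound over $n\in[N]$ then yields, with probability at least $1-\delta$, the uniform ellipsoidal bound $\|\tilde{\bm\theta}_{u_t}^n-\bm\theta_{u_t}\|_{\tilde M_{u_t}^n}\le R$ with $R=\tilde O\bigl(\sqrt{d}(1+\hat\gamma\sqrt{N_{\mathcal{W}_{\hat\gamma}(u_t)}})\bigr)$, hence $\|\tilde{\bm\theta}_{u_t}^n-\bm\theta_{u_t}\|_2\le R/\sqrt{\lambda_{\min}(\tilde M_{u_t}^n)}$.

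\textbf{Step 3 (greedy eigenvalue growth).} The selection rule in \Cref{align_active_data_selection} is exactly a greedy G-optimal design step: $\mathring{\bm z}_{u_t}^n$ has the largest Mahalanobis norm with respect to $(\tilde M_{u_t}^{n-1})^{-1}$. Using the maximum-over-mean inequality $\|\mathring{\bm z}_{u_t}^n\|_{(\tilde M_{u_t}^{n-1})^{-1}}^2\ge \mathrm{tr}((\tilde M_{u_t}^{n-1})^{-1})/d$ and the rank-one update identity, a Frank-Wolfe / trace-inverse potential argument shows that the greedy scheme makes $\lambda_{\min}$ grow linearly in $n/d$. Concretely, one can prove $\lambda_{\min}(\tilde M_{u_t}^n)\ge \lambda_{\min}(\tilde M_{u_t})+\Omega(n/d)$ for all $n\in[N]$; in particular $\lambda_{\min}(\tilde M_{u_t}^N)=\Omega(\lambda_{\min}(\tilde M_{u_t})+N/d)$, which is the origin of the $+N/d$ term in the theorem's denominator.

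\textbf{Step 4 (combine through the weighted average).} Let $\mu=\lambda_{\min}(\tilde M_{u_t}^N)$. Applying the triangle inequality to \Cref{align_constructing_overline_theta} and using Step~2 gives
\[
\|\overline{\bm\theta}_{u_t}-\bm\theta_{u_t}\|_2 \;\le\; \frac{d\mu\cdot R/\sqrt{\mu}\;+\;\sum_{n=1}^N R/\sqrt{\lambda_{\min}(\tilde M_{u_t}^n)}}{d\mu+N}.
\]
Plugging the Step~3 lower bound into the sum and integrating $1/\sqrt{a+cn/d}$ yields $\sum_{n=1}^N 1/\sqrt{\lambda_{\min}(\tilde M_{u_t}^n)}=O(d\sqrt{\mu})$, so the numerator is $O(Rd\sqrt{\mu})$ while the denominator is $\Omega(d\mu+N)=\Omega\bigl(d(\mu+N/d)\bigr)$. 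Therefore $\|\overline{\bm\theta}_{u_t}-\bm\theta_{u_t}\|_2=\tilde O\bigl(R/\sqrt{\mu+N/d}\bigr)$, which by Step~1 gives the stated suboptimality bound.

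\textbf{Main obstacle.} The hardest step is Step~3: turning the greedy maximization of $\|\cdot\|_{(\tilde M_{u_t}^{n-1})^{-1}}$ into a quantitative $\Omega(n/d)$ increment on $\lambda_{\min}(\tilde M_{u_t}^n)$ while only assuming $\|\phi(\bm x,\bm a)\|_2\le 1$ (so $\|\mathring{\bm z}\|_2\le 2$) and with no item-regularity assumption. A secondary subtlety in Step~4 is that the coefficient $d\lambda_{\min}(\tilde M_{u_t}^N)$ multiplying $\tilde{\bm\theta}_{u_t}^N$ in the weighted average is data-dependent; one must verify that this choice is exactly what balances the bias from earlier, noisier iterates against the refined final iterate so that the average does not dilute the $\sqrt{\mu+N/d}$ denominator.
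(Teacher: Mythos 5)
Your plan breaks at Step~3, and the failure is not merely technical. The selection rule in \Cref{align_active_data_selection} maximizes only over the feature differences that are actually available, i.e.\ $\bm z=\phi(\bm x,\bm a)-\phi(\bm x,\bm a')$ for $(\bm x,\bm a,\bm a')\in\mathcal{X}\times\mathcal{A}\times\mathcal{A}$. Under the assumptions of \Cref{th2} (only $\|\phi\|_2\le 1$, no coverage or richness condition), these differences may all lie in a proper subspace of $\mathbb{R}^d$, in which case $\lambda_{\min}(\tilde M_{u_t}^n)$ never grows beyond its offline value no matter how the greedy rule chooses; the claim $\lambda_{\min}(\tilde M_{u_t}^N)\ge\lambda_{\min}(\tilde M_{u_t})+\Omega(N/d)$ is therefore false in general. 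The paper only proves an eigenvalue-growth statement in \Cref{lemma_difference_of_eigenvalues}, and there it explicitly adds the assumption that the feature differences span the entire unit ball; that result feeds \Cref{corollary_extreme_datasets}, not \Cref{th2}. Because your Step~1 discards the policy-difference direction and reduces everything to $\|\overline{\bm\theta}_{u_t}-\bm\theta_{u_t}\|_2$, your route genuinely requires this growth, so it can at best recover the corollary under the extra spanning assumption, not the theorem as stated.

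The paper's proof avoids eigenvalue growth entirely. It keeps the direction $\phi(\bm x,\pi_{u_t}^*(\bm x))-\phi(\bm x,\pi_{u_t}(\bm x))$, splits $\bm\theta_{u_t}-\overline{\bm\theta}_{u_t}$ according to the weights in \Cref{align_constructing_overline_theta}, bounds the final-iterate term by $\big(2\overline\beta_{u_t}^N+\hat\gamma\sqrt{dN_{\mathcal{W}_{\hat\gamma}(u_t)}}\big)/\sqrt{\lambda_{\min}(\tilde M_{u_t}^N)}$, and for the sum over iterates uses the crucial comparison
$\|\phi(\bm x,\pi_{u_t}^*(\bm x))-\phi(\bm x,\pi_{u_t}(\bm x))\|_{(\tilde M_{u_t}^{n})^{-1}}\le\|\mathring{\bm z}_{u_t}^n\|_{(\tilde M_{u_t}^{n})^{-1}}$, which holds exactly because the active rule takes the argmax over all feature differences, including the one induced by the policy pair. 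Cauchy--Schwarz and the elliptical potential lemma then give a $\sqrt{2dN\log(1+4\kappa N/(\lambda d))}$ bound for that sum, and the $N/d$ in the denominator emerges from the normalization $d\,\lambda_{\min}(\tilde M_{u_t}^N)+N$ of the weighted average against this $\sqrt{dN}$ numerator --- no lower bound on $\lambda_{\min}(\tilde M_{u_t}^N)$ beyond its trivial value is used. Your Steps~2 and~4 echo parts of this (the union-bounded ellipsoid and the weighted-average bookkeeping are in the same spirit), but the comparison of the unknown policy direction to the actively selected directions is the missing idea that lets the argument go through without any richness assumption.
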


In \Cref{th2}, the numerator mirrors the structure of \Cref{th1}: it is composed of two parts, where the first one representing the inherent sample noise, and the other capturing the bias introduced by heterogeneous neighbors. The term inside the square root of the denominator, $\lambda_{\min}(\tilde M_{u_t}^N) + N/d$, quantifies the effective number of "useful" samples that contribute to accurately estimating the preference vector for each dimension, just as in \Cref{th1}. Specifically, $\lambda_{\min}(\tilde M_{u_t}^N)$ reflects the normal contribution of the aggregated information matrix, $\tilde M_{u_t}^N$, from both pure offline samples and active selected samples in each dimension, while $N/d$ corresponds to the additional contribution of the $N$ active samples, distributed across $d$ dimensions.

\begin{remark}[Comparison with Prior Results]
When $N = 0$, the setting reduces to the pure offline scenario, and the suboptimality bound in \Cref{th2} naturally recovers the bound from \Cref{th1}. Additionally, as discussed in \Cref{remark_comparison_with_single_user_case}, setting $\hat\gamma = 0$ to only use samples from the test user itself allows us to specialize our result to the single-user case. Building on this, our framework can be further specialized to scenarios involving only active data without any offline data when $\mathcal{D}=\emptyset$, as explored in prior work~\citep{das2024active}. In this case, $\tilde M_{u_t}^N$ consists solely of active samples, and the suboptimality bound in \Cref{th2} outperforms the result in \citet{das2024active} (which achieves $\tilde O(d/\sqrt{N})$) by incorporating $\lambda_{\min}(\tilde M_{u_t}^N)$ into the denominator, yielding a more refined bound. 
\end{remark}

As shown in \Cref{th2}, the final Gramian matrix under active-data augmentation, denoted by $\tilde M_{u_t}^N$, differs from $\tilde M_{u_t}$ in that it not only aggregates the offline samples but also includes the actively selected samples. According to the selection rule in \Cref{align_active_data_selection}, the algorithm deliberately targets the dimensions with the sparsest information, which is fundamentally different from passively using the given offline dataset. In scenarios where the offline data is imbalanced (i.e. with some dimensions severely underrepresented while others are sufficiently covered), this active selection allows the algorithm to focus additional samples on the least informative directions, effectively ``filling in'' the gaps and improving the estimation. 

Therefore, a key quantity of interest is the improvement in the information matrix through our actively selected data, captured by the gap $\lambda_{\min}\big( \tilde M_{u_t}^N \big) - \lambda_{\min}\big( \tilde M_{u_t} \big)$. We first give \Cref{definition_sample_extreme_matrix} that characterizes such cases where active selection brings significant improvement.

\begin{definition}[$(d^*,N)$-Sample Imbalanced Gramian Matrix]
\label{definition_sample_extreme_matrix}
A Gramian matrix $M$ is called $(d^*,N)$-sample imbalanced if $d^*$ is the smallest value in $\{1,\cdots,d\}$ such that 
\(
\lambda_{d^*+1}\left( M \right) - \lambda_{\min}\left( M \right) \geq \lceil N/d^* \rceil.
\)
By convention, any matrix is at least $(d,N)$-sample imbalanced, since there are only $d$ dimensions and we treat $\lambda_{d+1}(M)$ as $+\infty$.
\end{definition}

Intuitively, this definition implies that there is a large discrepancy in sample sufficiency between the least well-informed dimension and the $(d^*+1)$-th dimension. For a $(d^*,N)$-sample imbalanced matrix, actively selecting samples according to \Cref{align_active_data_selection} can substantially boost the minimum eigenvalue by concentrating new samples where they are most needed. This is formalized in the following lemma.

\begin{lemma}[Quantification of the Minimum Eigenvalue Improvement]
\label{lemma_difference_of_eigenvalues}
Assume that the feature difference vector $\bm z = \phi(\bm x, \bm a) - \phi(\bm x, \bm a')$ can span the entire Euclidean unit ball $\{\bm z \in \mathbb{R}^d : \|\bm z\|_2 \leq 1\}$ for all $(\bm x, \bm a, \bm a')\in\mathcal{X} \times \mathcal{A} \times \mathcal{A}$. Further suppose that $\tilde M_{u_t}$ is $(d^*,N)$-sample imbalanced as defined in \Cref{definition_sample_extreme_matrix}. Then, under the active selection rule in \Cref{align_active_data_selection} for a total of $N$ rounds, it holds that
\[
\lambda_{\min}\big( \tilde M_{u_t}^N \big) - \lambda_{\min}\big( \tilde M_{u_t} \big) \geq \lfloor N/d^* \rfloor.
\]
\end{lemma}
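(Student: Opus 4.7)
The plan is to reduce the dynamic active-data selection procedure to a purely combinatorial ``greedy water-filling'' process on the eigenvalues of $\tilde M_{u_t}$, and then establish the claim by a case split on whether the process ever spills into directions beyond the first $d^*$.

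First, I would spectrally decompose $\tilde M_{u_t}=\sum_{i=1}^d \lambda_i v_i v_i^{\top}$ with $\lambda_1\le\cdots\le\lambda_d$ and orthonormal $\{v_i\}$. Because the feature-difference vector $\bm z=\phi(\bm x,\bm a)-\phi(\bm x,\bm a')$ ranges over the entire unit ball, the maximizer of $\bm z^{\top} M^{-1} \bm z$ in the active selection rule \eqref{align_active_data_selection} is exactly the unit eigenvector of $M$ corresponding to its smallest eigenvalue. Since a rank-one update $M\mapsto M+vv^{\top}$ with $v$ an eigenvector of $M$ preserves the other eigenvectors and simply increments the eigenvalue along $v$ by $1$, an induction over the $N$ active rounds shows that every $\tilde M_{u_t}^n$ shares the eigenbasis $\{v_i\}$ with $\tilde M_{u_t}$. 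Consequently, the procedure is equivalent to the following dynamics on real-valued counters: start from $(\lambda_1,\dots,\lambda_d)$ and at each step add $1$ to whichever counter is currently smallest.

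Next, I would analyze the final minimum counter after $N$ greedy additions by splitting into two cases. \emph{Case A (no spill):} Suppose the process never picks a direction $j>d^*$. Then all $N$ increments land in the first $d^*$ counters. The final minimum equals the ``water level'' $c$ satisfying $\sum_{i\le d^*}\max(c-\lambda_i,0)=N$, and a direct computation (noting that $\lambda_i\ge \lambda_1$ for $i\le d^*$ only makes $c$ larger) gives $c\ge \lambda_1+N/d^*\ge \lambda_1+\lfloor N/d^*\rfloor$. \emph{Case B (spill):} Suppose at some intermediate round the greedy rule picks a direction $j>d^*$ for the first time. At that instant, every counter among the first $d^*$ must already be at least $\lambda_{d^*+1}$, so the current minimum across all $d$ directions is at least $\lambda_{d^*+1}$. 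Since greedy (``add $1$ to the argmin'') makes the overall minimum non-decreasing, the final minimum remains at least $\lambda_{d^*+1}$. Invoking the $(d^*,N)$-imbalance assumption gives $\lambda_{d^*+1}-\lambda_1\ge \lceil N/d^*\rceil\ge \lfloor N/d^*\rfloor$, hence again $\lambda_{\min}(\tilde M_{u_t}^N)\ge \lambda_1+\lfloor N/d^*\rfloor$. Combining both cases yields the desired bound.

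The main obstacle is the spill case: one must argue carefully that when greedy first selects a direction beyond $d^*$, \emph{all} of the first $d^*$ counters have necessarily crossed the $\lambda_{d^*+1}$ threshold, and that the monotone-minimum invariant persists through every subsequent update even though additional spills may occur. This hinges on the precise form of the imbalance gap $\lceil N/d^*\rceil$ in \Cref{definition_sample_extreme_matrix}, which is exactly calibrated so that neither case can violate the target lower bound $\lambda_1+\lfloor N/d^*\rfloor$. Once these invariants are verified, translating the counter-level statement back via the eigenbasis invariance established in the first step delivers $\lambda_{\min}(\tilde M_{u_t}^N)-\lambda_{\min}(\tilde M_{u_t})\ge \lfloor N/d^*\rfloor$.
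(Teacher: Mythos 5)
Your overall reduction is the same one the paper uses: the maximizer of $\bm z^\top M^{-1}\bm z$ over the unit ball is a unit eigenvector of the smallest eigenvalue, such rank-one updates preserve the eigenbasis, and the active phase therefore becomes a greedy ``add $1$ to the smallest counter'' process on the eigenvalues (this is exactly \Cref{le_one_step_update}). Your Case~B (spill) is correct and is essentially the observation that once greedy touches a direction beyond $d^*$ the minimum already exceeds $\lambda_{d^*+1}(\tilde M_{u_t})\geq \lambda_{\min}(\tilde M_{u_t})+\lceil N/d^*\rceil$, and the minimum is monotone thereafter.

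The gap is in Case~A. For discrete unit increments the final minimum is \emph{not} equal to the continuous water level $c$ solving $\sum_{i\le d^*}\max(c-\lambda_i,0)=N$, and the bound you then assert, $\lambda_{\min}(\tilde M_{u_t}^N)\ge \lambda_{\min}(\tilde M_{u_t})+N/d^*$, is false in general: take $d^*=2$, $\lambda_1=\lambda_2=0$, $N=3$; greedy yields counters $(2,1)$, so the minimum is $1$, not $3/2$. The correct statement is only the floored one, $\ge \lambda_1+\lfloor N/d^*\rfloor$, and it does not follow from the water-level computation; it requires the discrete block argument: within any $d^*$ consecutive greedy steps, every counter currently below (current minimum $+\,1$) is hit at most once before all counters reach that level, so each block of $d^*$ steps raises the minimum by at least $1$. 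This is precisely the paper's \Cref{le:multi_step_update}, applied $\lfloor N/d^*\rfloor$ times (with your Case~B covering the situation where the minimum has already climbed past $\lambda_{d^*+1}$ and the blockwise condition would fail). With Case~A repaired in this way your case split is a valid, and in fact slightly more explicit, organization of the same proof; as written, however, the water-filling step is an incorrect intermediate claim rather than a harmless shortcut.
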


Combining \Cref{lemma_difference_of_eigenvalues} with \Cref{th2}, we can explicitly show how the active sampling improves the bound relative to the pure offline setting.

\begin{corollary}\label{corollary_extreme_datasets}
Suppose that the assumptions in \Cref{lemma_difference_of_eigenvalues} hold. Then the suboptimality gap in \Cref{th2} can be rewritten as: 
\[
\textnormal{SubOpt}_{u_{t}}(\pi_{u_{t}})
\leq
\tilde O\left(
\frac{
\sqrt{d} \left( 1 + \hat\gamma \sqrt{ N_{\mathcal{W}_{\hat\gamma}(u_{t})} } \right)
}{
\sqrt{\lambda_{\min}\left( \tilde M_{u_{t}} \right) + N /d^*}
}
\right).
\]
Moreover, the bound can be simplified to: \(\tilde O\bigg(\frac{\sqrt d \big( 1 + \hat\gamma \sqrt{N_{\mathcal{W}_{\hat\gamma}(u_{t})} } \big)}{\sqrt{\lambda_{\min}\big( \tilde M_{u_{t}} \big) + N}}\bigg)\) when $\tilde M_{u_t}$ is $(1,N)$-sample imbalanced.
\end{corollary}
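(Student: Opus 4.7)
The plan is to combine \Cref{th2} with \Cref{lemma_difference_of_eigenvalues} through a direct lower bound substitution in the denominator of the suboptimality bound. The numerator of \Cref{th2}, namely $\sqrt{d}\bigl(1+\hat\gamma\sqrt{N_{\mathcal{W}_{\hat\gamma}(u_t)}}\bigr)$, does not depend on the active-data Gramian and therefore carries over unchanged, so the entire argument reduces to lower bounding the quantity $\lambda_{\min}(\tilde M_{u_t}^N)+N/d$ under the sample-imbalanced structure.

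First, I would invoke \Cref{lemma_difference_of_eigenvalues}, whose hypotheses coincide with those assumed in the corollary (spanning feature-difference vectors and $(d^*,N)$-sample imbalanced $\tilde M_{u_t}$), to obtain
\begin{equation*}
\lambda_{\min}\bigl(\tilde M_{u_t}^{N}\bigr) \;\geq\; \lambda_{\min}\bigl(\tilde M_{u_t}\bigr) + \lfloor N/d^{*}\rfloor .
\end{equation*}
Next, I would add $N/d$ to both sides and drop the subdominant $N/d$ on the right-hand side, noting that since $d^{*}\leq d$ we have $\lfloor N/d^{*}\rfloor + N/d \geq N/d^{*}$ up to the $O(1)$ slack from the floor (which is absorbed into $\tilde O$ for the regime $N\gtrsim d$ where the bound is meaningful). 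This yields
\begin{equation*}
\lambda_{\min}\bigl(\tilde M_{u_t}^{N}\bigr) + N/d \;\gtrsim\; \lambda_{\min}\bigl(\tilde M_{u_t}\bigr) + N/d^{*} .
\end{equation*}

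Substituting this lower bound into the denominator of \Cref{th2} (which appears under a square root and in the denominator, so the inequality reverses in the desired direction) gives the first claim. For the special case $d^{*}=1$, I would simply specialize the above to replace $N/d^{*}$ by $N$, which produces the second displayed bound. The main obstacle, if any, is purely bookkeeping: one should verify that the hypotheses of \Cref{lemma_difference_of_eigenvalues} are indeed preserved between the offline clustering stage of \Cref{al1} and the active augmentation stage of \Cref{al2}, i.e., that the Gramian $\tilde M_{u_t}$ inherited from \offalgo{} is the same matrix to which the $(d^{*},N)$-imbalance definition is applied, and that the floor-induced $O(1)$ term does not interfere with any sharp constants in \Cref{th2}; both are routine to check, so the corollary follows essentially as a one-line substitution.
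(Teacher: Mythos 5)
Your proposal is correct and follows exactly the route the paper intends: substitute the eigenvalue improvement $\lambda_{\min}(\tilde M_{u_t}^N)\geq \lambda_{\min}(\tilde M_{u_t})+\lfloor N/d^*\rfloor$ from \Cref{lemma_difference_of_eigenvalues} into the denominator of \Cref{th2}, absorb the floor's $O(1)$ slack into the $\tilde O$ notation, and specialize $d^*=1$ for the second bound. No gaps to report.
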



As shown in \Cref{lemma_difference_of_eigenvalues} and \Cref{corollary_extreme_datasets}, when the offline Gramian matrix $\tilde M_{u_t}$ is highly imbalanced (i.e., well covered in some dimensions but sparse in others) our active-data selection rule explicitly targets the underrepresented dimensions. In this case, each actively selected sample can contribute more than a single effective observation. Specifically, comparing \Cref{th1} with \Cref{corollary_extreme_datasets}, the denominator improves by $\tilde O(N/d^*)$ for some $d^*\leq d$, rather than the $O(N/d)$ scaling in the general case. Intuitively, the active samples only need to be distributed across $d^*$ dimensions instead of all $d$ dimensions. Consequently, for a $(d^*,N)$-sample imbalanced matrix $\tilde M_{u_t}$, one actively selected sample is equivalent to $d/d^*$ fully informative samples and yields a suboptimality gain. \Cref{fig:eigenvalue_improvement} depicts this phenomenon. This result highlights how active-data augmentation can effectively mitigate imbalance in offline coverage by reinforcing the sparse directions of the preference.

\begin{figure}[t]
    \centering
    \includegraphics[width=0.75\textwidth]{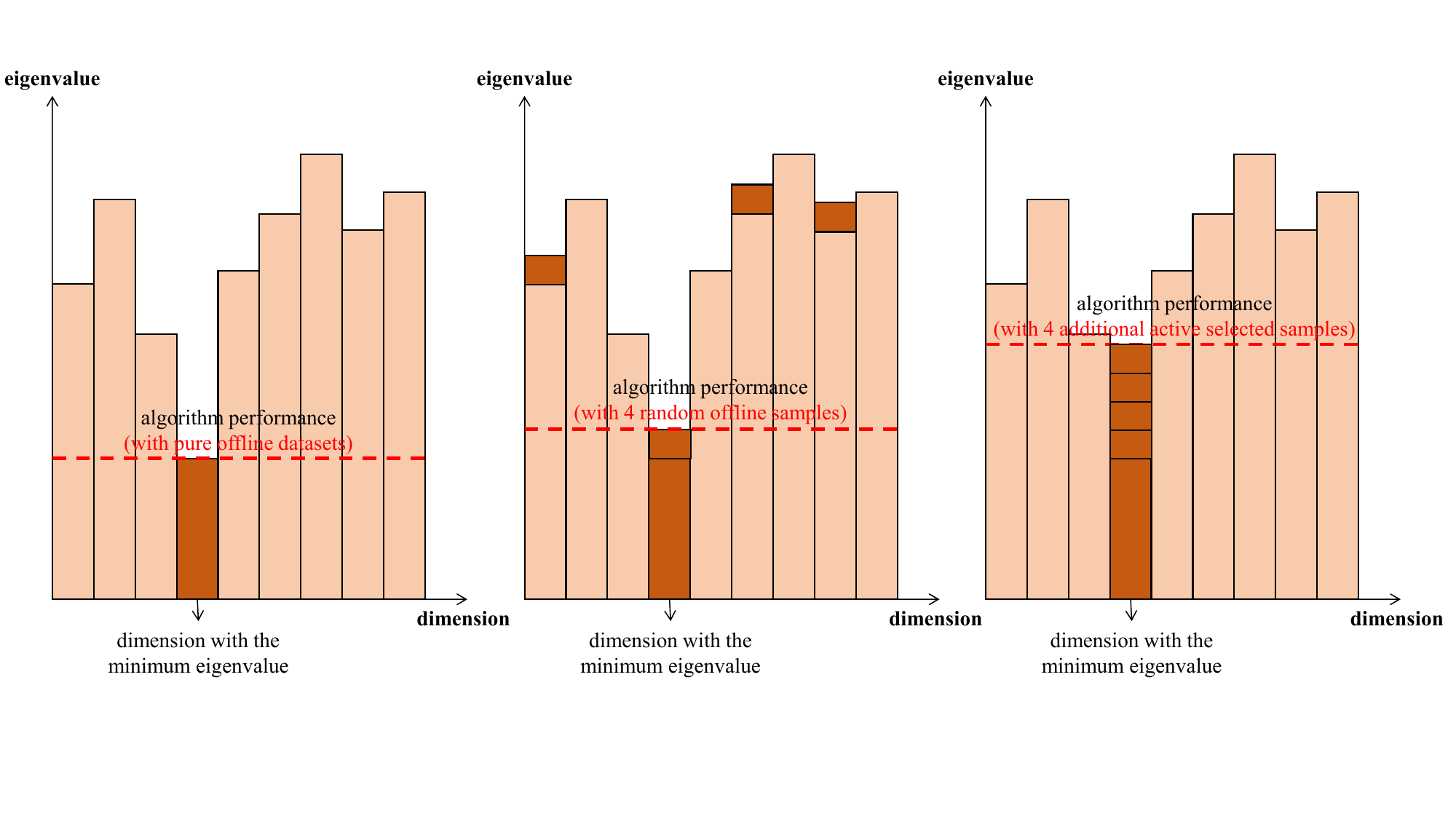}
    \vspace{-10mm}
    \caption{Illustration of how active-data augmentation enhances performance by increasing the minimum eigenvalue of the information matrix.  
    \textbf{Left:} Pure offline data suffers from underrepresented dimensions, limiting performance.  
    \textbf{Middle:} Adding random offline samples offers limited improvement.  
    \textbf{Right:} Actively selected samples focus on underrepresented dimensions, substantially increasing the minimum eigenvalue and improving performance.}
\label{fig:eigenvalue_improvement}

\end{figure}

Finally, we present a special-case result under the item regularity assumption (\Cref{assumption_item_regularity}) and the condition that $\tilde M_{u_t}$ is $(d^*, N)$-sample imbalanced, which illustrates the benefit of active-data augmentation even in a traditional bandit context:

\begin{corollary}\label{corollary_imbalanced_regularity_datasets}
Suppose \Cref{assumption_item_regularity} holds and that $\tilde M_{u_t}$ is $(d^*, N)$-sample imbalanced. Following the proof of \Cref{th1_item_regularity}, it holds that
$$
\textnormal{SubOpt}_{u_{t}}(\pi_{u_{t}})
\;\leq\;
\tilde O\left( \sqrt{\frac{d}{\tilde\lambda_a}} \left( \frac{1}{\sqrt{N_{\mathcal{V}_{\hat\gamma}(u_t)}+N/(d^*\tilde\lambda_a)}} + \frac{\hat\gamma\sqrt{N_{\mathcal{W}_{\hat\gamma}(u_t)}}}{\sqrt{N_{\mathcal{V}_{\hat\gamma}(u_t)}+N/(d^*\tilde\lambda_a)}} \right)
\right).
$$
\end{corollary}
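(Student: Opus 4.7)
The plan is to chain together three ingredients already established in the paper: the general active-data suboptimality bound from \Cref{th2}, the item-regularity lower bound on $\lambda_{\min}(\tilde M_{u_t})$ that underlies the proof of \Cref{th1_item_regularity}, and the eigenvalue-improvement guarantee from \Cref{lemma_difference_of_eigenvalues} for $(d^*,N)$-sample imbalanced Gramian matrices. None of these ingredients needs to be reproved; the work is in combining them carefully and matching the claimed denominator.

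The first step is to apply \Cref{th2} as a black box, giving
\[
\textnormal{SubOpt}_{u_t}(\pi_{u_t}) \leq \tilde O\!\left(\frac{\sqrt{d}\,(1+\hat\gamma\sqrt{N_{\mathcal{W}_{\hat\gamma}(u_t)}})}{\sqrt{\lambda_{\min}(\tilde M_{u_t}^N)+N/d}}\right).
\]
The second step is to invoke \Cref{assumption_item_regularity} on the aggregated offline Gramian. Following the matrix-concentration argument already used in the proof of \Cref{th1_item_regularity} (applied with the smoothed parameter $\tilde\lambda_a$ and a union bound over the users in $\mathcal{V}_{\hat\gamma}(u_t)$), one obtains $\lambda_{\min}(\tilde M_{u_t}) \geq \tfrac{1}{2}\tilde\lambda_a\, N_{\mathcal{V}_{\hat\gamma}(u_t)}$ with high probability. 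The third step is to invoke \Cref{lemma_difference_of_eigenvalues} on the $(d^*,N)$-sample imbalanced matrix $\tilde M_{u_t}$, which upgrades the previous bound to $\lambda_{\min}(\tilde M_{u_t}^N) \geq \tfrac{1}{2}\tilde\lambda_a\, N_{\mathcal{V}_{\hat\gamma}(u_t)} + \lfloor N/d^* \rfloor$.

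Plugging this composite lower bound into the denominator of the displayed inequality, dropping the subdominant $N/d$ term (since $d^*\leq d$), and factoring out $\tilde\lambda_a/2$ yields
\[
\lambda_{\min}(\tilde M_{u_t}^N)+N/d \;\geq\; \tfrac{1}{2}\tilde\lambda_a\bigl(N_{\mathcal{V}_{\hat\gamma}(u_t)} + 2N/(d^*\tilde\lambda_a)\bigr).
\]
Pulling $\sqrt{d/\tilde\lambda_a}$ out of the ratio and then splitting the numerator into its $1$ and $\hat\gamma\sqrt{N_{\mathcal{W}_{\hat\gamma}(u_t)}}$ summands produces exactly the two-term expression claimed in the corollary, with the remaining absolute constants absorbed into $\tilde O$.

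The main technical obstacle I anticipate is the bookkeeping required to reconcile two different ``units'' of information: the additive sample count $\lfloor N/d^*\rfloor$ delivered by \Cref{lemma_difference_of_eigenvalues}, which measures raw active samples, and the multiplicative rescaling by $\tilde\lambda_a$ from item regularity, which converts raw samples into per-direction information. One must verify that both the offline-dominated regime (small $N/d^*$) and the active-dominated regime (small $\tilde\lambda_a$) are captured by the single combined denominator $N_{\mathcal{V}_{\hat\gamma}(u_t)} + N/(d^*\tilde\lambda_a)$ without extra constant or logarithmic slack beyond what $\tilde O$ already hides. A secondary subtlety is that \Cref{lemma_difference_of_eigenvalues} requires the feature-difference vectors to span the unit ball; this hypothesis should be imported explicitly from \Cref{corollary_extreme_datasets} rather than silently assumed in the statement of the corollary.
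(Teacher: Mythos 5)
Your proposal is correct and follows essentially the route the paper intends: combine the active-data bound of \Cref{th2} (equivalently \Cref{corollary_extreme_datasets}, i.e.\ \Cref{th2} plus \Cref{lemma_difference_of_eigenvalues}) with the item-regularity lower bound $\lambda_{\min}(\tilde M_{u_t})\gtrsim \tilde\lambda_a N_{\mathcal{V}_{\hat\gamma}(u_t)}$ used in the proof of \Cref{th1_item_regularity}, then absorb the floor and constant factors into $\tilde O$. Your closing observation that the unit-ball spanning hypothesis of \Cref{lemma_difference_of_eigenvalues} must be imported explicitly is a fair (and minor) point about the corollary's statement, not a gap in your argument.
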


\Cref{corollary_imbalanced_regularity_datasets} can be interpreted in terms of \emph{noise} (the first term) and \emph{bias} (the second term). Importantly, under \Cref{assumption_item_regularity}, each actively selected sample is equivalent to at least $1/(d^*\tilde\lambda_a)$ offline samples (which is strictly greater than one, since $\tilde\lambda_a \leq 1/d \leq 1/d^*$ holds by~\citet{wang2023onlinea}). This advantage arises because active samples offer better coverage through the active selection rule than the coverage offered by \Cref{assumption_item_regularity} for offline samples. Consequently, this result strengthens \Cref{th1_item_regularity}, yielding a strictly better suboptimality bound by reducing both noise and bias.
\section{Experiments}
\label{sec:simulations}

In this section, we evaluate the performance of Off-C$^2$PL and \hybalgo{} using synthetic and real-world data. All experiments are averaged over $20$ independent rounds.

\begin{figure}[t]
  \centering

  \begin{subfigure}[t]{\linewidth}
    \centering
    \includegraphics[width=\linewidth]{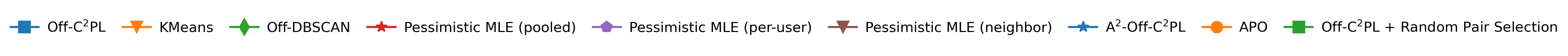}
\end{subfigure}

  \centering
  \begin{subfigure}[t]{0.24\linewidth}
    \centering
    \includegraphics[width=\linewidth]{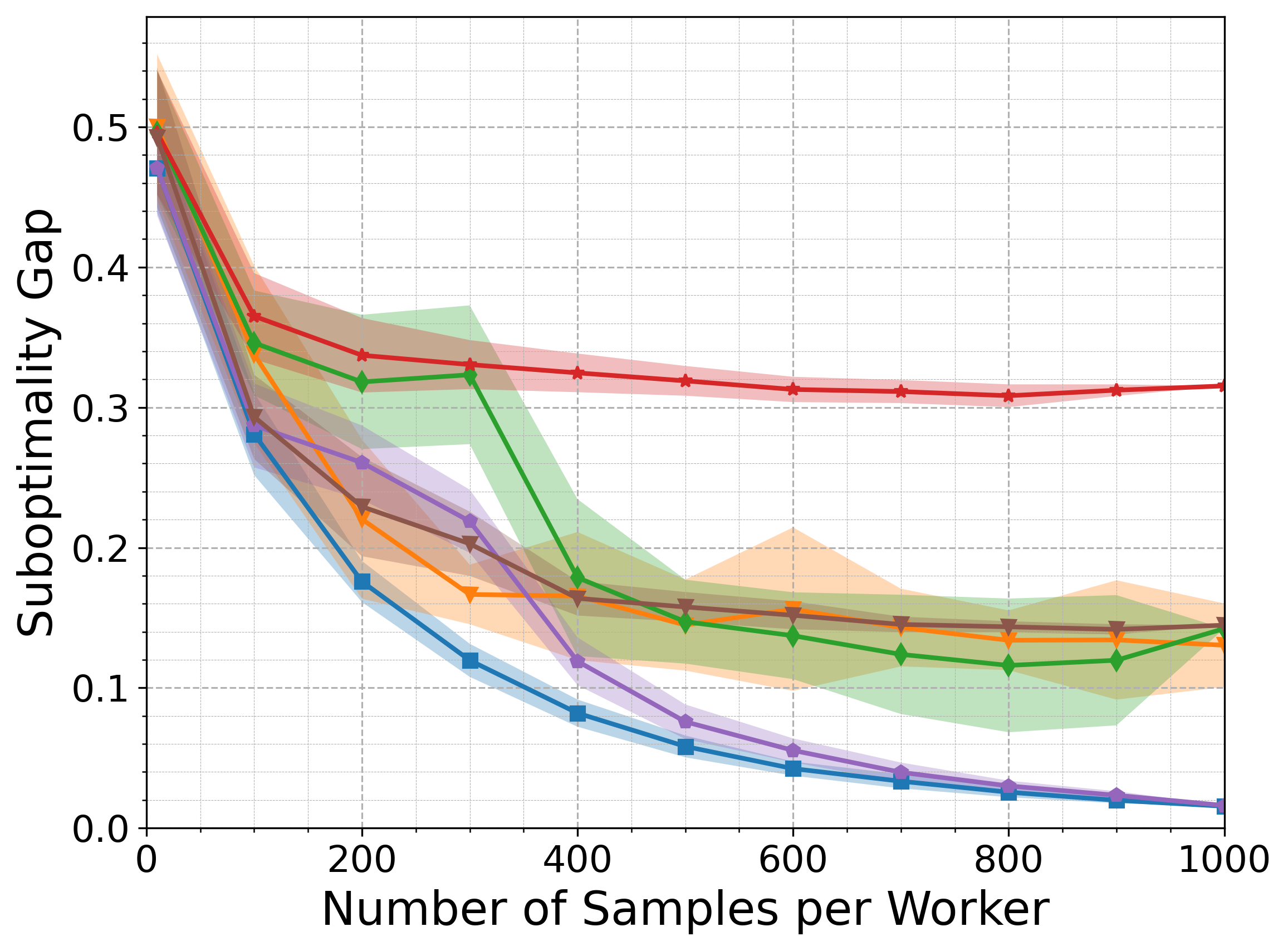}
    \caption{Offline: Synthetic}
    \label{fig:off_synth}
  \end{subfigure}
  \hfill
  \begin{subfigure}[t]{0.24\linewidth}
    \centering
    \includegraphics[width=\linewidth]{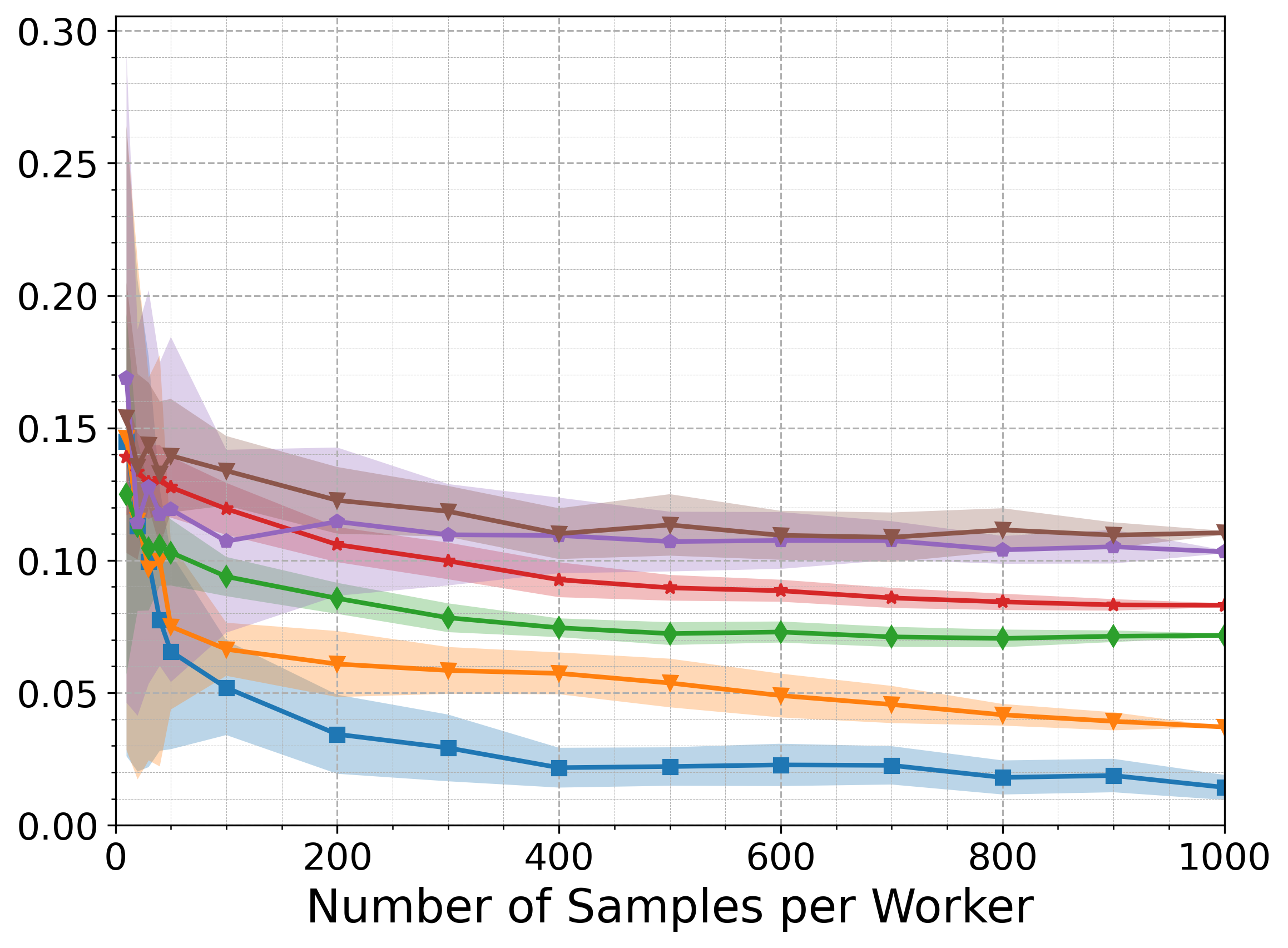}
    \caption{Offline: Reddit}
    \label{fig:off_Reddit}
  \end{subfigure}
  \hfill
  \begin{subfigure}[t]{0.24\linewidth}
    \centering
    \includegraphics[width=\linewidth]{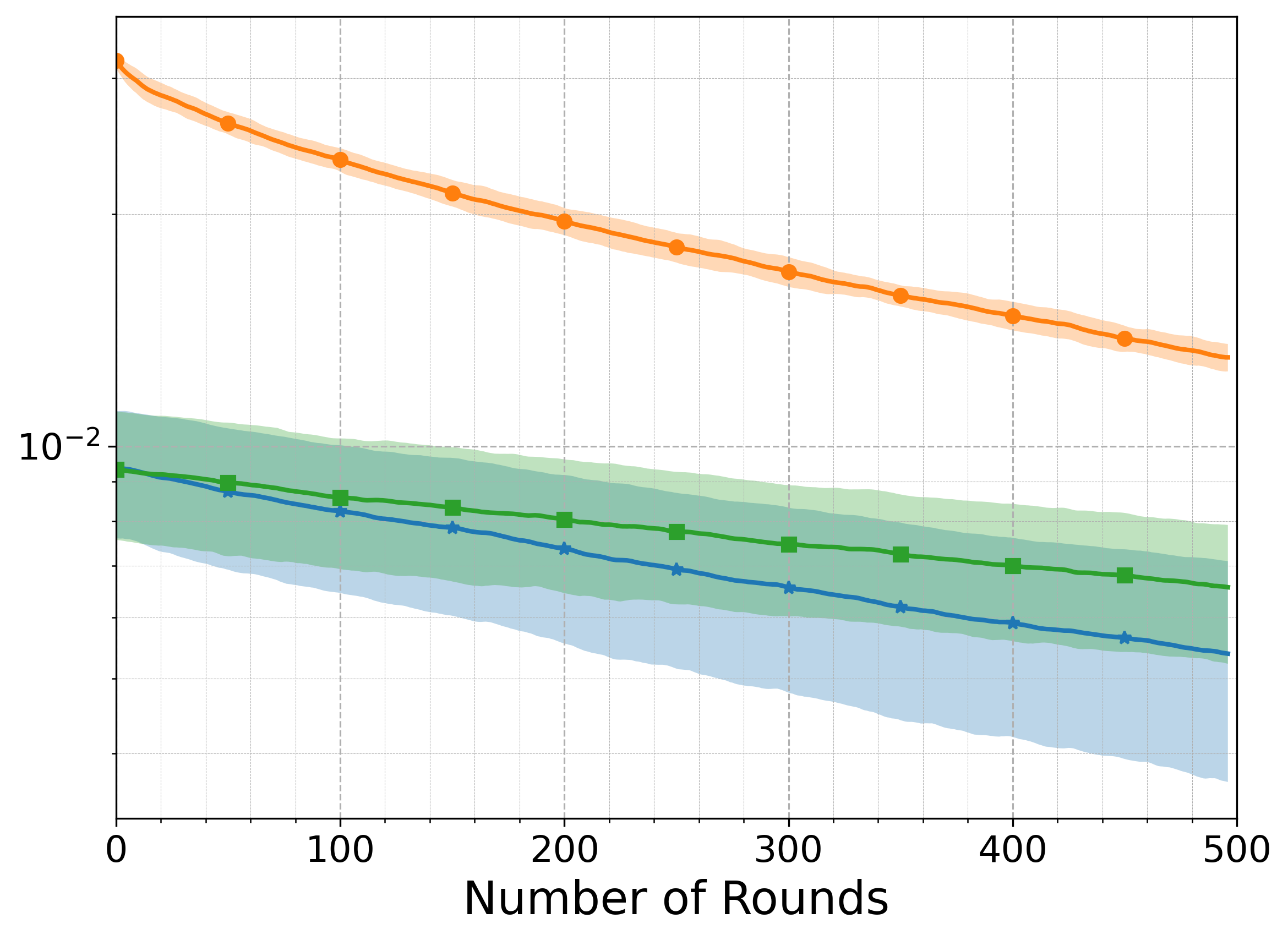}
    \caption{Hybrid: Synthetic}
    \label{fig:hyb_synth}
  \end{subfigure}
  \hfill
  \begin{subfigure}[t]{0.24\linewidth}
    \centering
    \includegraphics[width=\linewidth]{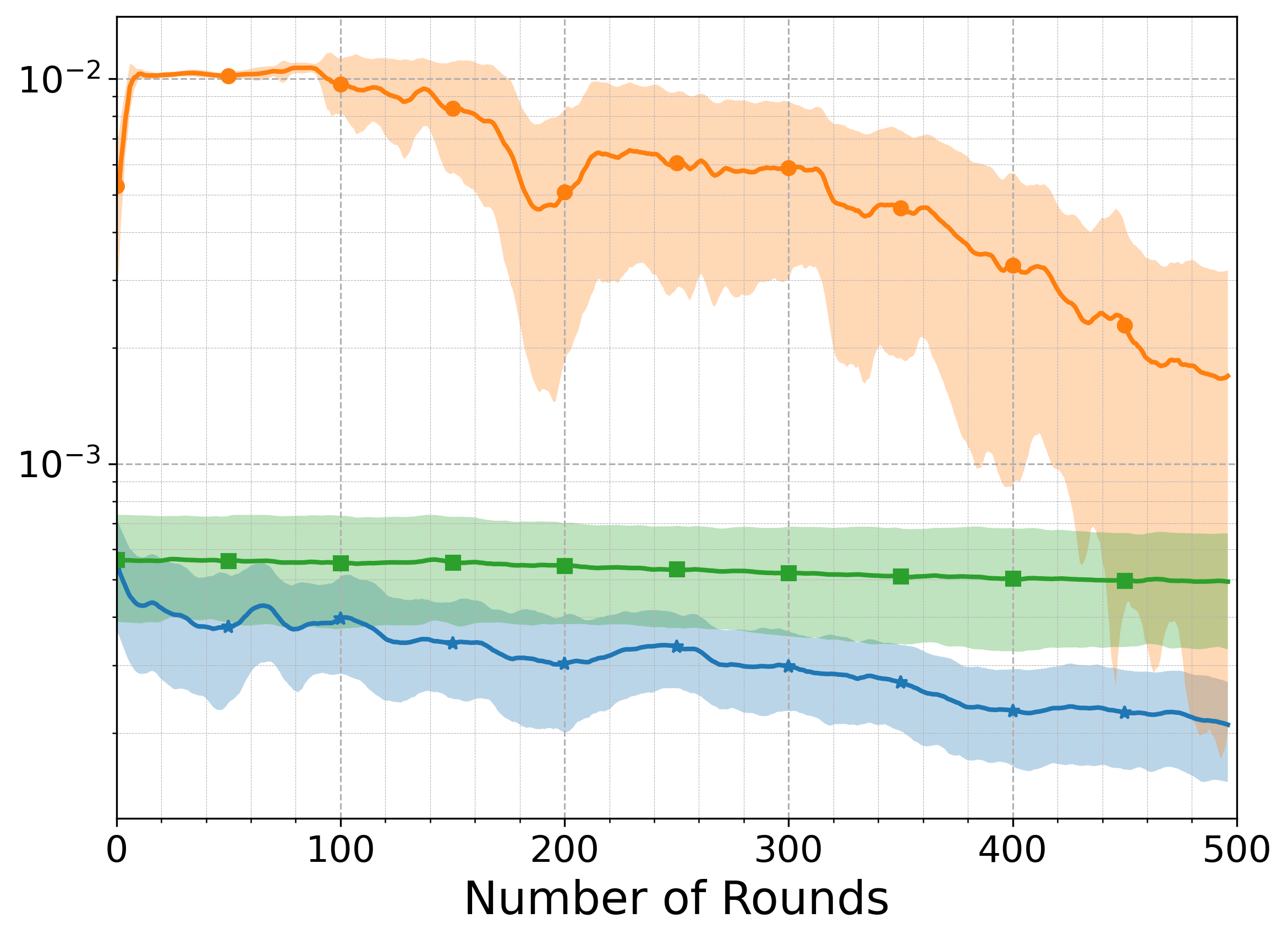}
    \caption{Hybrid: Reddit}
    \label{fig:hyb_Reddit}
  \end{subfigure}

  \centering

  \begin{subfigure}[t]{0.24\linewidth}
    \centering
    \includegraphics[width=\linewidth]{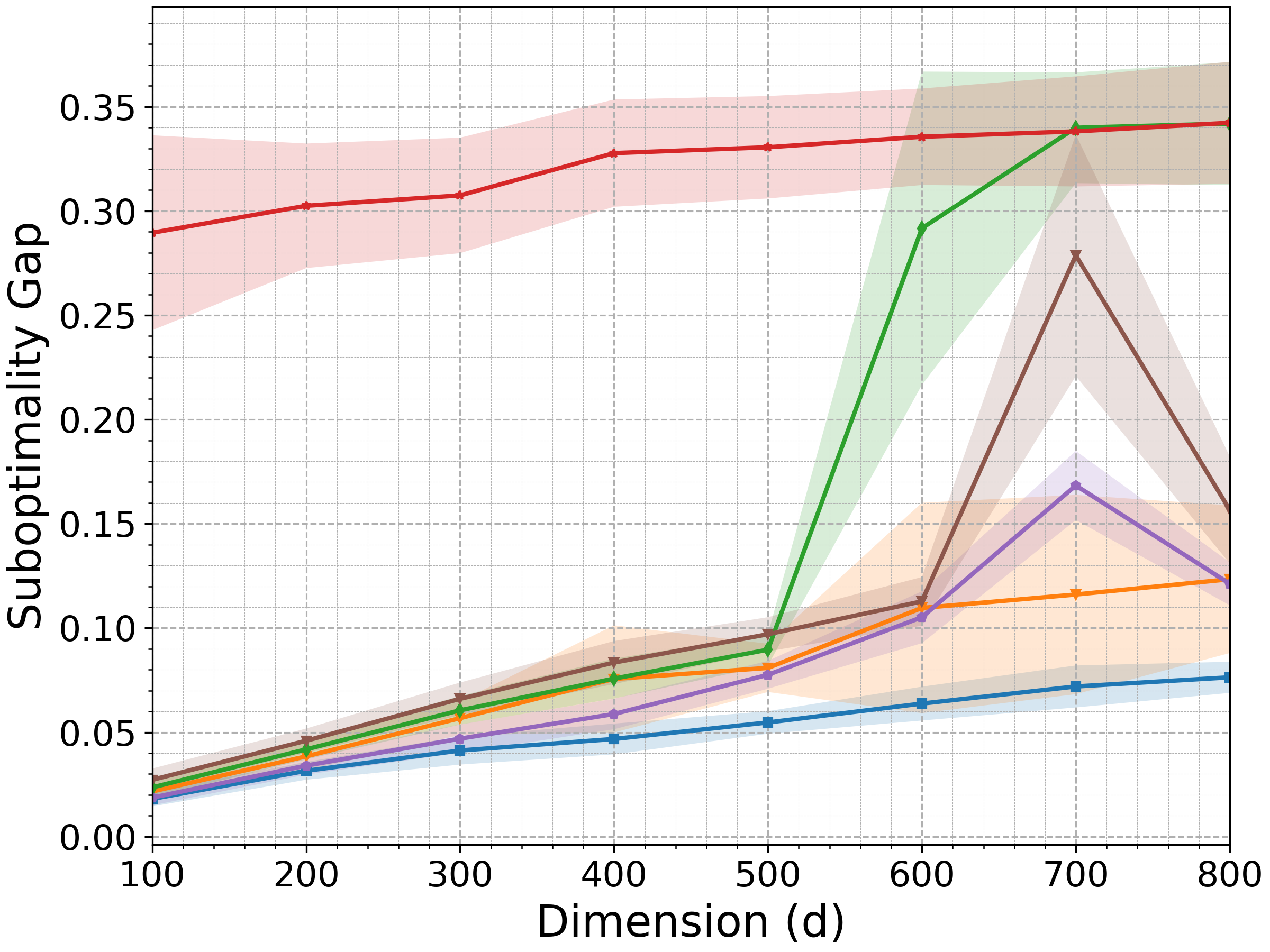}
    \caption{$d$: Synthetic}
    \label{fig:dim_synth}
  \end{subfigure}
  \hfill
  \begin{subfigure}[t]{0.24\linewidth}
    \centering
    \includegraphics[width=\linewidth]{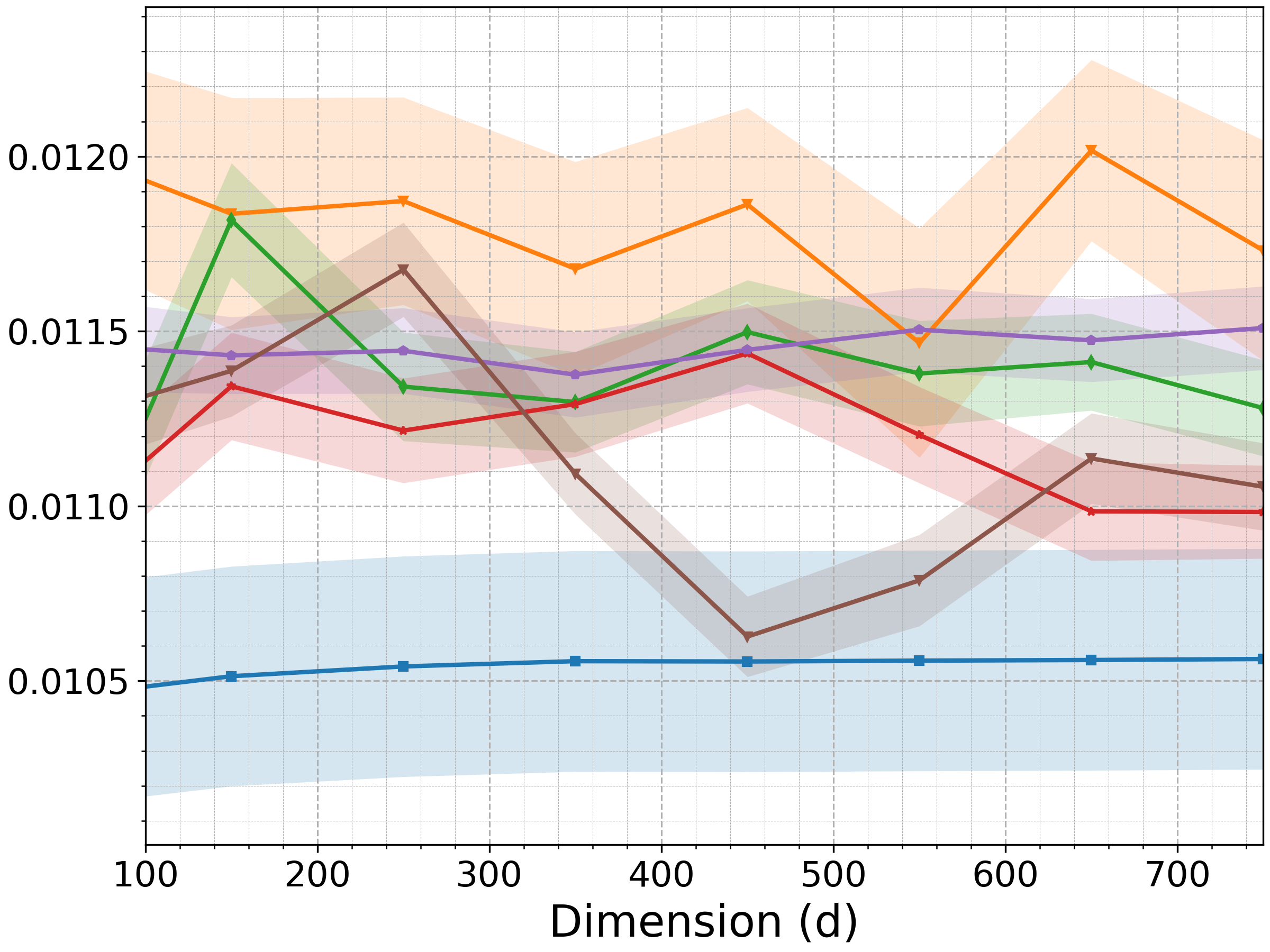}
    \caption{$d$: Reddit}
    \label{fig:dim_Reddit}
  \end{subfigure}
  \hfill
  \begin{subfigure}[t]{0.24\linewidth}
    \centering
    \includegraphics[width=\linewidth]{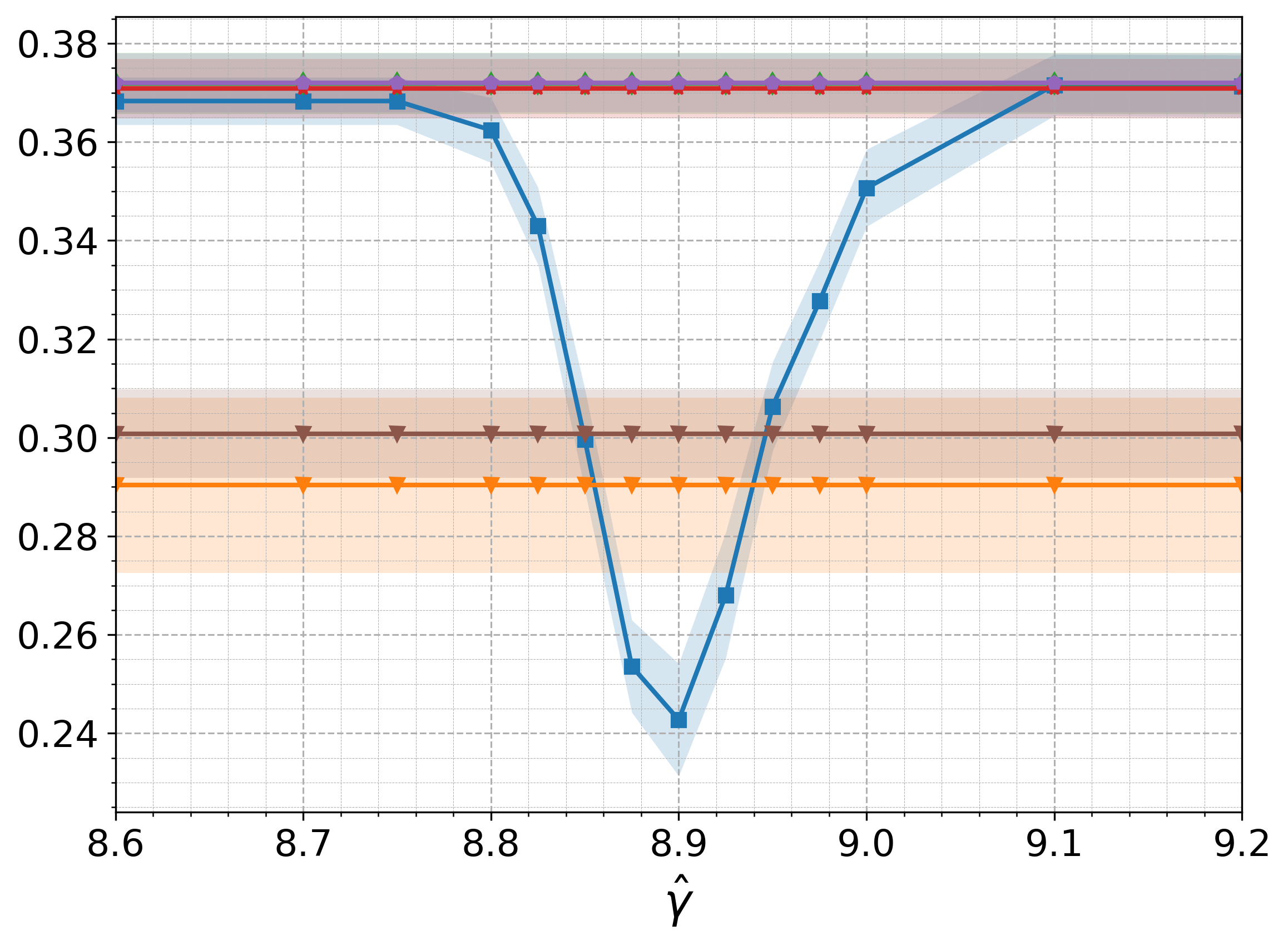}
    \caption{$\hat{\gamma}$: Synthetic}
    \label{fig:gamma_synth}
  \end{subfigure}
  \hfill
  \begin{subfigure}[t]{0.24\linewidth}
    \centering
    \includegraphics[width=\linewidth]{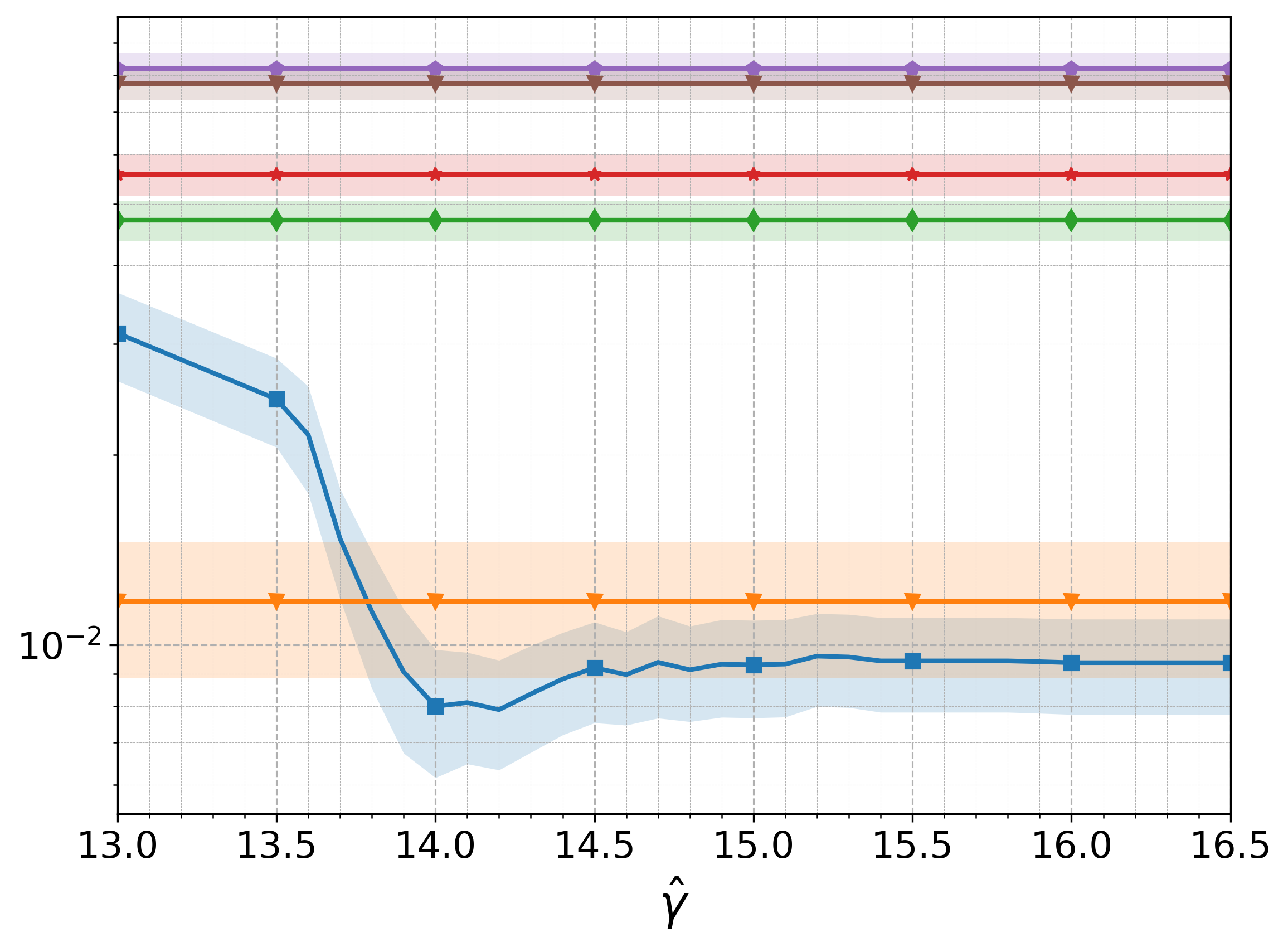}
    \caption{$\hat{\gamma}$: Reddit}
    \label{fig:gamma_Reddit}
  \end{subfigure}

 \caption{Figures~\ref{fig:off_synth} and~\ref{fig:off_Reddit} correspond to \emph{performance in offline setting with insufficient data}, 
Figures~\ref{fig:hyb_synth} and~\ref{fig:hyb_Reddit} correspond to performance in hybrid setting, 
Figures~\ref{fig:gamma_synth} and~\ref{fig:gamma_Reddit} correspond to \emph{the impact of dimension $d$}, and
Figures~\ref{fig:dim_synth} and~\ref{fig:dim_Reddit} correspond to \emph{the impact of clustering-threshold \(\hat{\gamma}\)}.}
  \label{fig:ablation}
\end{figure}

\emph{Baselines.} 
We compare \offalgo{} with both enhanced versions of traditional clustering algorithms and prior methods for contextual logistic bandits. Specifically, we adapt classical clustering algorithms such as KMeans~\citep{mcqueen1967some} (with $\sqrt{\# \text{ of users}}$ as cluster number) and DBSCAN~\citep{schubert2017dbscan} to our setting by incorporating the same policy output phase as in \Cref{al1} with their clustering procedures. We also include variants of Pessimistic MLE ~\cite{zhu2023principled} for contextual logistic bandits: \textit{Pessimistic MLE (per-user)} uses only the test user’s data, \textit{Pessimistic MLE (pooled)} aggregates data from all users, and \textit{Pessimistic MLE (neighbor)} leverages data from the test user’s neighbors identified by a KNN algorithm using cosine similarity on $\bm\theta$. 
For evaluating \hybalgo{}, we compare against the pure offline algorithm \offalgo{} trained on randomly generated offline samples and the pure active learning algorithm Active Preference Optimization (APO) from~\citet{das2024active} that operates without any offline data.

\emph{Synthetic Dataset.}
We construct a synthetic pairwise–preference dataset with \(U = 40\) users partitioned into \(J = 8\) clusters uniformly at random. 
Each cluster \(j\) has a ground-truth vector \(\bm{\theta}^{j}\in\mathbb{R}^{d}\) with \(d=768\), matching the dimensionality of the real-world embeddings used in our experiments. 
For a user \(u\) in cluster \(c\), we set \(\bm{\theta}_{u}=\bm{\theta}^{j}+\bm{\epsilon}_{u}\), where \(\bm{\epsilon}_{u}\sim\mathcal{N}(0, s^{2} I_{d})\). 
This adds mild within-cluster heterogeneity so users are similar but not identical, better reflecting real data. 
We then generate \(1000\) pairwise comparisons per user under a Bradley–Terry–Luce model: for a pair-difference feature \(\bm{z}\sim\mathcal{N}(0, I_{d})\), the preferred item is sampled with probability \(\sigma\left(\beta\,\bm{\theta}_{u}\cdot\bm{z}\right)\), where \(\sigma(x)=(1+e^{-x})^{-1}\) and \(\beta\) controls noise (larger \(\beta\) implies cleaner preferences).

\emph{Real-World Dataset.}
We use the Reddit TL;DR summarization \cite{volske2017tl} alongside human preferences collected by \citet{stiennon2020learning}. 
Each sample in our dataset consists of a forum post from Reddit, paired with two distinct summaries generated by the GPT-2 language model. 
Human annotators then indicate their preference for one of the summaries. 
This dataset contains preference annotations from $76$ users, with individual contributions ranging from as few as $2$ to more than \(18{,}000\) prompts. For evaluation, we focus on 42 annotators who each provide more than \(1{,}000\) annotations, and from each of these, we uniformly sample \(1{,}000\) preferences for testing. In order to calculate the suboptimality gap, it is necessary to have access to an optimal policy. 
However, the true optimal policy is unknown when working with real-world data.
Therefore, we must rely on the available dataset to approximate the most optimal policy. 
Thus, we leverage maximum likelihood estimation (MLE) regression through a gradient descent on the full dataset, to ensure that the derived optimal policy is optimal relative to the given dataset. 


\emph{Experiment 1: Performance under \offmodel{}.}
We examine \offalgo{} against a suite of baselines on both the synthetic and the Reddit dataset, varying the per-user sample budget from \(100\) to \(1000\) pairs, considering \(40\) users. 
On the synthetic data (Figure~\ref{fig:off_synth}), \offalgo{} has the smallest suboptimality gap across the entire range. 
Relative to the baselines in this run, it improves performance by \(88.1\%\) over KMeans, \(89.1\%\) over Off-DBSCAN, and \(95.1\%\), \(89.2\%\), and \(3.39\%\) over Pessimistic MLE (pooled), (neighbor), and (per-user).
Pessimistic MLE (per-user) becomes competitive only after using more than \(80\%\) of the samples and remains clearly worse in the low-sample regime.
On the Reddit dataset (Figure~\ref{fig:off_Reddit}), no baseline matches Off-$C^{2}$PL. 
With only \(\approx 400\) pairs per user it achieves a near-zero suboptimality gap and delivers relative improvements of \(61.5\%\) over KMeans, \(80.1\%\) over Off-DBSCAN, \(82.8\%\) over Pessimistic MLE (pooled), \(87.1\%\) over the neighbor, and \(86.2\%\) over the per-user variant.

\emph{Experiment 2: Performance under \hybmodel{}.} 
We compare \hybalgo{} against APO and an algorithm which uses \offalgo{} as offline initialization but replaces our active-data augmentation strategy with \emph{random} pair selection.
We allocate \(20\%\) of the data to the offline phase and then run \(500\) rounds of active-data selection. 
On the Reddit dataset, \hybalgo{} yields relative improvements of \(87.6\%\) over the online-only baseline and \(57.5\%\) over the random-selection baseline. 
On the synthetic dataset, the corresponding improvements are \(58.7\%\) and \(18.0\%\). 
As shown in Figures~\ref{fig:hyb_synth} and~\ref{fig:hyb_Reddit}, the pure active method begins with a large suboptimality gap due to the missing offline head start.
Although the active phase reduces this gap over rounds, it remains substantially worse. 
The random-selection baseline starts at the same gap as \hybalgo{} but fails to discover sufficiently informative pairs and therefore makes little progress. 
In contrast, \hybalgo{} consistently drives the gap downward across active rounds, achieving the best performance throughout.

\emph{Experiment 3: The impact of dimension $d$.}
We vary dimension \(d\) from \(100\) to \(800\) on synthetic data and from \(100\) to \(768\) on Reddit. 
For Reddit, we obtain lower-dimensional features by applying PCA to the original \(768\)-dimensional embeddings, so \(768\) is the maximum. 
On the synthetic dataset (Figure~\ref{fig:dim_synth}), the gap increases with \(d\) at a fixed sample size, as expected from higher estimation complexity. 
Notably, \offalgo{} degrades the slowest as it uses data across users within clusters and regularizes effectively in high dimensions. 
On Reddit, however (Figure~\ref{fig:dim_Reddit}), there is no noticeable trend in performance across \(d\), which is consistent with PCA preserving the dominant variance directions. 
Truncating to lower \(d\) primarily removes low-variance components that contribute little to preference prediction.

\emph{Experiment 4: The impact of clustering-threshold \(\hat{\gamma}\).}
Sweeping the clustering-threshold \(\hat{\gamma}\) reveals a bias–variance trade-off: overly small values merge unrelated users, while overly large values prevent pooling users in true clusters (Figures~\ref{fig:gamma_synth} and~\ref{fig:gamma_Reddit}). 
With a well-calibrated \(\hat{\gamma}\), \offalgo{} recovers the correct cluster structure and substantially reduces the suboptimality gap, demonstrating that accurate control of cluster connectivity is crucial when data is scarce.

\section{Conclusion}\label{sec:conclusion}

In this paper, we introduce and systematically study the Offline Clustering of Preference Learning problem, where user preferences naturally vary. We propose Algorithm~\ref{al1} (\offalgo{}), which leverages maximum likelihood estimation to cluster users with similar preferences without relying on any coverage assumption, enabling accurate aggregation of heterogeneous offline data. Our theoretical analysis characterizes the tradeoff between variance reduction from data aggregation and bias introduced by heterogeneity. We further extend this framework with active-data augmentation in Algorithm~\ref{al2} (\hybalgo{}), which selectively samples underrepresented dimensions, achieving notable theoretical and empirical gains over purely offline methods.

A promising direction for future work is to refine our suboptimality bounds in cases where the $\ell_2$ norm of $\bm\theta_u$ is not constant. While prior single-user analyses improve the dependence on the nonlinearity parameter from $1/\kappa$ to $1/\sqrt{\kappa}$, extending this improvement to heterogeneous multi-user clustering remains open. Developing techniques to achieve a $1/\sqrt{\kappa}$ dependency within our framework would mark a significant theoretical advancement.

\newpage
\bibliographystyle{ACM-Reference-Format}
\bibliography{main}

\newpage
\appendix
\clearpage
\onecolumn

\appendix

\section*{Appendix}

\begin{table}[t]
\centering
\caption{Summary of key notations.}
\label{tab:notations}
\renewcommand{\arraystretch}{1.1}
\begin{tabularx}{\textwidth}{@{}lX@{}}
\toprule
\textbf{Notation} & \textbf{Description} \\
\midrule
$U$, $\mathcal{U}$ & Number of users and the user set $\{1,\dots,U\}$. \\
$J$ & Number of clusters which is unknown to the learner. \\
$\bm\theta_u$ & True $d$-dimensional preference vector of user $u$ with $\|\bm\theta_u\|_2 \le 1$. \\
$\bm\theta^{j}$ & Preference shared by all users in cluster $j$. \\
$\mathcal{U}(j)$ & Users in cluster $j$. \\
$\phi(\bm x,\bm a)$ & Feature map $\phi:\mathcal{X}\times\mathcal{A}\to\mathbb{R}^d$ with $\|\phi(\bm x,\bm a)\|_2 \le 1$. \\
$\mathcal{D}_u$ & Offline data of user $u$: $\{(\bm x_u^i,\bm a_u^i,{\bm a'}_u^i,y_u^i)\}_{i=1}^{N_u}$. \\
$\bm z_u^i$ & Feature difference $\phi(\bm x_u^i,\bm a_u^i)-\phi(\bm x_u^i,{\bm a'}_u^i)$. \\
$\sigma(\cdot)$ & Sigmoid function in the BTL preference model. \\
$\kappa$ & Non-linearity coefficient (\Cref{align_kappa}); lower bound on $\nabla\sigma(\cdot)$ across comparisons. \\
$M_u$ & Regularized Gramian from $\mathcal{D}_u$: $\frac{\lambda}{\kappa}I+\sum_{i\in\mathcal{D}_u} \bm z_u^i(\bm z_u^i)^\top$. \\
$\lambda_{\min}(M)$ & Minimum eigenvalue of matrix $M$. \\
$\mathrm{CI}_u$ & Confidence radius for the MLE $\hat{\bm\theta}_u$. \\
$\hat\gamma$ & Clustering threshold controlling when two users are connected. \\
$\mathcal{V}_{\hat\gamma}(u)$ & Set of user $u$ and its neighbors connected under threshold $\hat\gamma$. \\
$\tilde M_u$, $\tilde N_u$ & Aggregated Gramian and sample count over $\mathcal{V}_{\hat\gamma}(u)$. \\
$\pi_u^*$ & Optimal policy for user $u$. \\
$\text{SubOpt}_u(\pi)$ & Suboptimality gap of policy $\pi$ for user $u$ (\Cref{align_objective}). \\
\bottomrule
\end{tabularx}
\end{table}

\section{Detailed Discussion of Remark~\ref{remark_selection_of_hatgamma}}\label{appendix_discussions_on_hatgamma}
This appendix elaborates practical policies for choosing the clustering threshold $\hat\gamma$. Our treatment closely follows the guidance in~\citet{liu2025offline}; we include their spirit here for completeness and refer readers there for additional discussion.

\subsection{Case 1: Known $\gamma$}
When the minimum heterogeneity gap $\gamma$ (defined in \Cref{definition_minimum_heterogeneity_gap}) is known, a natural choice is $\hat\gamma=\gamma$, which exactly separates users across clusters.


\begin{remark}[Discussions on $\gamma$ Known Cases]
Setting $\hat\gamma=\gamma$ eliminates bias from heterogeneous neighbors because the graph connects only users with the same preference vectors, implying $\mathcal{W}_{\hat\gamma}(u_t)=\emptyset$. The bound thus reflects only sampling noise from the homogeneous neighborhood $\mathcal{V}_{\hat\gamma}(u_t)$. \Cref{lemma_cardinality_of_R_and_W} and \Cref{align_special_form_of_th1} together show that setting $\hat\gamma = \gamma$ allows \Cref{al1} to maximize $\mathcal{R}_{\hat\gamma}(u_t)$ while still ensuring zero bias, making this choice practical. Notably, choosing $\hat\gamma<\gamma$ would also make $\mathcal{W}_{\hat\gamma}(u_t)=\emptyset$, but at the cost of potentially shrinking $\mathcal{R}_{\hat\gamma}(u_t)$ and losing valuable homogeneous samples which leads to smaller $\mathcal{V}_{\hat\gamma}(u_t)$ and thus increases the noise.
\end{remark}

\subsection{Case 2: Unknown $\gamma$}
When $\gamma$ is unknown, the threshold $\hat\gamma$ must be estimated from the offline data. We define
\begin{align}\label{align_definition_of_Gamma}
\Gamma(u, v) = \|\hat{\bm{\theta}}_{u} - \hat{\bm{\theta}}_{v}\|_2 - \alpha(\text{CI}_{u} + \text{CI}_{v}),
\qquad 
M(u) = \{v \in \mathcal{U} \setminus \{u\} \,:\, \Gamma(u, v) > 0\},
\end{align}
where $\text{CI}_u$ is given in \Cref{align_statistics_initialization_al1}. For $\alpha\ge 1$, $\Gamma(u,v)\le \|\bm\theta_u-\bm\theta_v\|_2$ is a lower bound on the true preference gap, and $M(u)$ collects users deemed heterogeneous relative to $u$. We consider two complementary policies.

\begin{definition}[Underestimation policy]
The underestimation policy is defined as:
\begin{align}\label{align_policy_for_choosing_unknown_hat_gamma}
\hat{\gamma} = \mathbb{I}\{M(u_{t}) \neq \emptyset\} \cdot \min_{v \in M(u_{t})} \Gamma(u_{t}, v).
\end{align}
\end{definition}

\begin{theorem}[Effect of the underestimation policy]\label{lemma_effect_of_underestimation_policy}
With $\hat\gamma$ chosen by \Cref{align_policy_for_choosing_unknown_hat_gamma} and $\alpha_w' = \tfrac{\kappa}{3(\alpha+1)\sqrt{2\max\{2,d\}\log(2U/\delta)}}$, any user $v$ in the heterogeneous neighbor set $\mathcal{W}_{\hat\gamma}(u_t)$ of \Cref{lemma_cardinality_of_R_and_W} also satisfies
\[
\frac{1}{\sqrt{\lambda_{\min}(M_{u_{t}})}} + \frac{1}{\sqrt{\lambda_{\min}(M_v)}} 
\;\geq\; \alpha_w' \, \|\bm\theta_{u_{t}} - \bm\theta_v \|_2.
\]
\end{theorem}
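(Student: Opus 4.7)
My strategy is to show that under the underestimation policy, any $v \in \mathcal{W}_{\hat\gamma}(u_t)$ must satisfy $\Gamma(u_t, v) \leq 0$, and then to invert this inequality using the triangle inequality together with the per-user confidence bound from \Cref{lemma_confidence_ellipsoid_of_hat_theta}. The final step is to bound each $\text{CI}_u$ by a $1/\sqrt{\lambda_{\min}(M_u)}$ expression with the right constants, which gives precisely the claimed $\alpha_w'$.

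First I would condition on the high-probability event that $\|\hat{\bm\theta}_u - \bm\theta_u\|_2 \leq \text{CI}_u$ simultaneously for every $u \in \mathcal{U}$; this is exactly why the confidence radius in \Cref{align_statistics_initialization_al1} uses $\log(2U/\delta)$ rather than $\log(1/\delta)$, and it reuses the same $(1-\delta)$-event under which \Cref{lemma_cardinality_of_R_and_W} is stated. I would then dispatch the trivial case $M(u_t)=\emptyset$: by \Cref{align_policy_for_choosing_unknown_hat_gamma} this forces $\hat\gamma = 0$, so the connection condition \Cref{align_clutering_condition_for_similarity} demands a strictly negative right-hand side and can never be met, giving $\mathcal{W}_{\hat\gamma}(u_t) = \emptyset$ and rendering the claim vacuous.

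For the nontrivial case $M(u_t)\neq\emptyset$, note that $\hat\gamma = \min_{w \in M(u_t)} \Gamma(u_t, w) > 0$ by definition of $M$. Any $v \in \mathcal{V}_{\hat\gamma}(u_t)$ (which contains $\mathcal{W}_{\hat\gamma}(u_t)$) satisfies $\Gamma(u_t, v) < \hat\gamma$ by construction of the graph; since $\hat\gamma$ is the minimum of $\Gamma(u_t,\cdot)$ over $M(u_t)$, this rules out $v \in M(u_t)$, hence $\Gamma(u_t, v) \leq 0$, i.e., $\|\hat{\bm\theta}_{u_t}-\hat{\bm\theta}_v\|_2 \leq \alpha(\text{CI}_{u_t}+\text{CI}_v)$. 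A single triangle inequality step, combined with the confidence bounds on both $\hat{\bm\theta}_{u_t}$ and $\hat{\bm\theta}_v$, then yields
\[
\|\bm\theta_{u_t}-\bm\theta_v\|_2 \;\leq\; \|\hat{\bm\theta}_{u_t}-\hat{\bm\theta}_v\|_2 + \text{CI}_{u_t}+\text{CI}_v \;\leq\; (\alpha+1)\bigl(\text{CI}_{u_t}+\text{CI}_v\bigr).
\]
Finally, using the parameter conditions on $\lambda$ and $\delta$ assumed in \Cref{lemma_cardinality_of_R_and_W}, I would absorb both $\sqrt{\lambda\kappa}$ and $\sqrt{d\log(1+4\kappa N_u/(\lambda d))+2\log(2U/\delta)}$ into $3\sqrt{2\max\{2,d\}\log(2U/\delta)}$, giving $\text{CI}_u \leq \tfrac{3\sqrt{2\max\{2,d\}\log(2U/\delta)}}{\kappa\sqrt{\lambda_{\min}(M_u)}}$. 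Substituting and rearranging yields exactly $\frac{1}{\sqrt{\lambda_{\min}(M_{u_t})}} + \frac{1}{\sqrt{\lambda_{\min}(M_v)}} \geq \alpha_w'\|\bm\theta_{u_t}-\bm\theta_v\|_2$ with the stated $\alpha_w'$.

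The main technical subtlety is in that last constant-tracking step: the $\sqrt{\lambda\kappa}$ and $d\log(\cdot)$ pieces must be absorbed cleanly into $\sqrt{2\max\{2,d\}\log(2U/\delta)}$ rather than left as a loose logarithmic factor, which is what distinguishes $\alpha_w'$ from the wider range $\bigl(0,\tfrac{\kappa}{2(\alpha-1)\sqrt{2\log(2U/\delta)}}\bigr)$ allowed in \Cref{lemma_cardinality_of_R_and_W}. A second, milder obstacle is ensuring that the union-bound event for per-user confidence intervals is the same event supporting \Cref{lemma_cardinality_of_R_and_W}, so the conclusion continues to hold with probability at least $1-\delta$ rather than requiring a separate $\delta$-budget.
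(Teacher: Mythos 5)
Your proof is correct: the key chain — on the uniform confidence event, a connected heterogeneous neighbor $v$ has $\Gamma(u_t,v)<\hat\gamma$, hence $v\notin M(u_t)$ and $\Gamma(u_t,v)\le 0$, then the triangle inequality gives $\|\bm\theta_{u_t}-\bm\theta_v\|_2\le(\alpha+1)(\text{CI}_{u_t}+\text{CI}_v)$, and the same CI-absorption used in step (i) of the proof of \Cref{lemma_cardinality_of_R_and_W} yields exactly the stated $\alpha_w'$ — is sound and the constants check out. The paper itself does not include a proof of this statement (it defers to \citet{liu2025offline}), but your argument is precisely the natural one built from the paper's own Lemma machinery, so there is nothing to flag.
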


\begin{remark}[When an underestimation policy is preferable]
This conservative choice keeps $\mathcal{W}_{\hat\gamma}(u_t)$ small—only users with limited information enter—thereby controlling bias. The tradeoff is fewer homogeneous neighbors ($\mathcal{R}_{\hat\gamma}(u_t)$ and $\mathcal{V}_{\hat\gamma}(u_t)$ may shrink), which can increase noise. It is therefore preferable when bias is the primary concern—for example, in RLHF with annotators from diverse regions where mis-clustering can inject systematic preference bias or in fairness-sensitive applications (e.g., healthcare or education) where even small cross-group bias is more harmful than the extra noise from using fewer neighbors.
\end{remark}

\begin{definition}[Overestimation policy]\label{definition_of_overestimation_policy}
The overestimation policy is defined as:
\begin{align}\label{align_policy_for_choosing_optimistic_hatgamma}
\hat{\gamma} = \mathbb{I}\{M(u_{t}) \neq \emptyset\} \cdot \min_{v \in M(u_{t})} \tilde{\Gamma}(u_{t}, v),
\end{align}
where $\tilde\Gamma(u_{t}, v) = \|\hat{\bm\theta}_{u_{t}} - \hat{\bm\theta}_v\|_2 + \alpha(\text{CI}_{u_{t}} + \text{CI}_v)$ is an \emph{upper} bound on the gap between users $u_t$ and $v$.
\end{definition}

\begin{theorem}[Effect of the overestimation policy]\label{lemma_effect_of_overestimation_policy}
Under the policy in \Cref{definition_of_overestimation_policy}, if $M(u_t)\neq\emptyset$ then $\hat\gamma\geq\gamma$.
\end{theorem}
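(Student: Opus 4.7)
The plan is to show that every $v\in M(u_t)$ must lie in a cluster different from that of $u_t$, so that $\|\bm\theta_{u_t}-\bm\theta_v\|_2\geq\gamma$ holds by Definition~\ref{definition_minimum_heterogeneity_gap}, and then to argue that $\tilde\Gamma(u_t,v)$ is always an upper bound on $\|\bm\theta_{u_t}-\bm\theta_v\|_2$, so taking a minimum over $M(u_t)$ inherits the $\gamma$ lower bound. The whole argument is a clean two-sided triangle-inequality sandwich on top of the confidence ellipsoid.

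First, I would invoke Lemma~\ref{lemma_confidence_ellipsoid_of_hat_theta} for each user together with a union bound over $\mathcal{U}$ (this is why the $2U/\delta$ appears in the $\mathrm{CI}_u$ definition in \Cref{align_statistics_initialization_al1}). On the resulting good event, which has probability at least $1-\delta$, we have $\|\hat{\bm\theta}_u-\bm\theta_u\|_2\leq\mathrm{CI}_u$ simultaneously for all $u\in\mathcal{U}$. The entire proof proceeds on this event.

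Second, I would show that any $v\in M(u_t)$ must lie in a different cluster from $u_t$. By definition of $M(u_t)$, we have $\Gamma(u_t,v)>0$, i.e., $\|\hat{\bm\theta}_{u_t}-\hat{\bm\theta}_v\|_2>\alpha(\mathrm{CI}_{u_t}+\mathrm{CI}_v)$. Applying the triangle inequality and the confidence bounds gives
\[
\|\bm\theta_{u_t}-\bm\theta_v\|_2 \;\geq\; \|\hat{\bm\theta}_{u_t}-\hat{\bm\theta}_v\|_2 - \mathrm{CI}_{u_t} - \mathrm{CI}_v \;>\; (\alpha-1)(\mathrm{CI}_{u_t}+\mathrm{CI}_v) \;\geq\; 0,
\]
where the last inequality uses $\alpha\geq 1$. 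Hence $\bm\theta_{u_t}\neq\bm\theta_v$, meaning $u_t$ and $v$ belong to different clusters, and by Definition~\ref{definition_minimum_heterogeneity_gap} we obtain $\|\bm\theta_{u_t}-\bm\theta_v\|_2\geq\gamma$.

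Third, I would apply the reverse triangle inequality to lower-bound $\tilde\Gamma(u_t,v)$:
\[
\tilde\Gamma(u_t,v) = \|\hat{\bm\theta}_{u_t}-\hat{\bm\theta}_v\|_2 + \alpha(\mathrm{CI}_{u_t}+\mathrm{CI}_v) \;\geq\; \|\bm\theta_{u_t}-\bm\theta_v\|_2 + (\alpha-1)(\mathrm{CI}_{u_t}+\mathrm{CI}_v) \;\geq\; \gamma.
\]
Taking the minimum over $v\in M(u_t)$ (which is nonempty by assumption) yields $\hat\gamma\geq\gamma$. The argument is largely mechanical once the good event is fixed; the only subtlety is choreographing the two triangle inequalities so that the $\alpha(\mathrm{CI}_{u_t}+\mathrm{CI}_v)$ buffer in $\tilde\Gamma$ absorbs the estimation noise in both directions, which is exactly where the condition $\alpha\geq 1$ (baked into \Cref{al1}) is needed. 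There is no real combinatorial or probabilistic obstacle beyond ensuring the confidence ellipsoid holds simultaneously across users via the union bound.
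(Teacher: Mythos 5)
Your proof is correct. The paper itself gives no argument for this statement (it defers to the cited offline clustering of bandits work), and your route — condition on the union-bound confidence event, use $\Gamma(u_t,v)>0$ with $\alpha\geq 1$ and the triangle inequality to force $\bm\theta_{u_t}\neq\bm\theta_v$ (hence a true gap of at least $\gamma$ by \Cref{definition_minimum_heterogeneity_gap}), then use the reverse triangle inequality to show $\tilde\Gamma(u_t,v)\geq\|\bm\theta_{u_t}-\bm\theta_v\|_2\geq\gamma$ before minimizing over $M(u_t)$ — is exactly the standard intended argument. The only caveat is that the conclusion holds on the confidence event, i.e., with probability at least $1-\delta$; the theorem statement leaves this qualifier implicit, and you correctly make it explicit.
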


\begin{remark}[When an overestimation policy is preferable]
Ensuring $\hat\gamma \ge \gamma$ expands both the homogeneous neighbor set $\mathcal{R}_{\hat\gamma}(u_t)$ and the heterogeneous neighbor set $\mathcal{W}_{\hat\gamma}(u_t)$. This typically reduces noise but may also increase bias through more heterogeneous neighbors. This policy is therefore well-suited to noise-dominated regimes, such as recommendation cohorts with sparse but relatively homogeneous histories; or high-dimension scenarios where the number of dimensions $d$ is large.
\end{remark}

Both policies introduced here have their advantages and disadvantages. Underestimation reduces bias at the expense of higher noise; while overestimation does the opposite. In practice, the preferred policy depends on whether bias or noise is the main bottleneck. For additional discussion and complementary proofs of Lemmas~\ref{lemma_effect_of_underestimation_policy} and~\ref{lemma_effect_of_overestimation_policy}, see~\citet{liu2025offline}.

\section{Detailed Proofs}\label{appendix_detailed_proofs}

\subsection{Proof of \Cref{lemma_confidence_ellipsoid_of_hat_theta}}

\begin{proof}
First, for any $\bm\theta_s \in \mathbb{R}^d$, define
\[
G_u(\bm\theta_s) \coloneqq \sum_{i=1}^{N_u} \left( \sigma\left( \bm\theta_s^{\top} \bm z_u^i \right) - \sigma\left( \bm\theta_u^{\top} \bm z_u^i \right) \right) \bm z_u^i + \lambda \bm\theta_s.
\]

By the mean value theorem, for any two parameter vectors $\bm\theta_{s_1}$ and $\bm\theta_{s_2}$, we have
\[
G_u(\bm\theta_{s_1}) - G_u(\bm\theta_{s_2})
= \left( \sum_{i=1}^{N_u} \nabla \sigma\left( \bm\theta_{\overline{s}}^{\top} \bm z_u^i \right) \bm z_u^i (\bm z_u^i)^{\top} + \lambda I \right) (\bm\theta_{s_1} - \bm\theta_{s_2})
= W_u (\bm\theta_{s_1} - \bm\theta_{s_2}),
\]
where we define
\[
W_u \coloneqq \sum_{i=1}^{N_u} \nabla \sigma\left( \bm\theta_{\overline{s}}^{\top} \bm z_u^i \right) \bm z_u^i (\bm z_u^i)^{\top} + \lambda I
\quad \text{and} \quad
\bm\theta_{\overline{s}} = \xi \bm\theta_{s_1} + (1 - \xi) \bm\theta_{s_2}, \; \xi \in [0,1].
\]
In particular, for each user $u \in \mathcal{U}$, the mean value theorem implies that there exists $\xi_u \in [0,1]$ such that the intermediate point is given by $\bm\theta_{\overline{u}} = \xi_u \bm\theta_u + (1 - \xi_u) \hat{\bm\theta}_u$.

Furthermore, we define 
\[
W_u \coloneqq \sum_{i=1}^{N_u} \nabla \sigma\left( \bm\theta_{\overline u}^{\top} \bm z_u^i \right) \bm z_u^i (\bm z_u^i)^{\top} + \lambda I.
\]
Recall that 
\[
M_u = \sum_{i=1}^{N_u} \bm z_u^i (\bm z_u^i)^{\top} + \frac{\lambda}{\kappa} I.
\]
By \Cref{align_kappa}, we have $W_u \succeq \kappa M_u$ and $M_u^{-1} \succeq \kappa W_u^{-1}$ since $\nabla \sigma( \bm\theta_{\overline u}^{\top} \bm z_u^i ) \geq \kappa$. Here, for two symmetric matrices $A_1$ and $A_2$, the notation $A_1 \succeq A_2$ means that $A_1 - A_2$ is positive semi-definite.

Using these properties, we can show that
\begin{align}\label{align_bounding_G_minus_lambda_theta}
\left\| G_u(\hat{\bm\theta}_u) - \lambda \bm\theta_u \right\|_{M_u^{-1}}^2 
&= \left\| G_u(\hat{\bm\theta}_u) - G_u(\bm\theta_u) \right\|_{M_u^{-1}}^2 
= \left\| W_u (\bm\theta_u - \hat{\bm\theta}_u) \right\|_{M_u^{-1}}^2 \notag \\
&= (\bm\theta_u - \hat{\bm\theta}_u)^{\top} W_u M_u^{-1} W_u (\bm\theta_u - \hat{\bm\theta}_u) \notag \\
&\overset{(a)}{\geq} \kappa\, (\bm\theta_u - \hat{\bm\theta}_u)^{\top} W_u (\bm\theta_u - \hat{\bm\theta}_u) \notag \\
&\overset{(b)}{\geq} \kappa^2\, (\bm\theta_u - \hat{\bm\theta}_u)^{\top} M_u (\bm\theta_u - \hat{\bm\theta}_u) 
= \kappa^2 \left\| \bm\theta_u - \hat{\bm\theta}_u \right\|_{M_u}^2,
\end{align}
where (a) follows from $M_u^{-1} \succeq \kappa W_u^{-1}$ and (b) from $W_u \succeq \kappa M_u$.

Moreover, observe that
\begin{align}\label{align_bounding_lambda_theta}
\left\| \lambda \bm\theta_u \right\|_{M_u^{-1}} 
= \lambda \sqrt{ \bm\theta_u^{\top} M_u^{-1} \bm\theta_u }
\leq \sqrt{ \lambda \kappa } \| \bm\theta_u \|_2 
\leq \sqrt{ \lambda \kappa },
\end{align}
where the first inequality uses $M_u \succeq \frac{\lambda}{\kappa} I$ and the second follows from $\| \bm\theta_u \|_2 \leq 1$.

Combining these results, we have
\begin{align}\label{align_bounding_theta_minus_hattheta}
\left\| \bm\theta_u - \hat{\bm\theta}_u \right\|_{M_u} 
&\overset{(a)}{\leq} \frac{1}{\kappa} \left\| G_u( \hat{\bm\theta}_u ) - \lambda \bm\theta_u \right\|_{M_u^{-1}} \notag \\
&\overset{(b)}{\leq} \frac{1}{\kappa} \left\| G_u( \hat{\bm\theta}_u ) \right\|_{M_u^{-1}} + \frac{1}{\kappa} \left\| \lambda \bm\theta_u \right\|_{M_u^{-1}} \notag \\
&\overset{(c)}{\leq} \frac{1}{\kappa} \left\| G_u( \hat{\bm\theta}_u ) \right\|_{M_u^{-1}} + \sqrt{ \frac{\lambda}{\kappa} },
\end{align}
where (a) follows from \eqref{align_bounding_G_minus_lambda_theta}, (b) uses the triangle inequality, and (c) applies \eqref{align_bounding_lambda_theta}.

We then bound the term $\left\| G_u(\hat{\bm\theta}_u) \right\|_{M_u^{-1}}$ as follows:
\begin{align}\label{align_bounding_G}
\left\| G_u(\hat{\bm\theta}_u) \right\|_{M_u^{-1}} 
&= \left\| \sum_{i=1}^{N_u} \left( \sigma( \hat{\bm\theta}_u^{\top} \bm z_u^i ) - \sigma( \bm\theta_u^{\top} \bm z_u^i ) \right) \bm z_u^i + \lambda \hat{\bm\theta}_u \right\|_{M_u^{-1}} \notag \\
&= \left\| \sum_{i=1}^{N_u} \left( \sigma( \hat{\bm\theta}_u^{\top} \bm z_u^i ) - ( y_u^i - \varepsilon_u^i ) \right) \bm z_u^i + \sum_{i=1}^{N_u} \varepsilon_u^i \bm z_u^i + \lambda \hat{\bm\theta}_u \right\|_{M_u^{-1}} \notag \\
&\overset{(a)}{\leq} \left\| \sum_{i=1}^{N_u} \varepsilon_u^i \bm z_u^i \right\|_{M_u^{-1}},
\end{align}
where inequality (a) follows from the fact that $\hat{\bm\theta}_u$ is chosen to minimize the regularized log-likelihood:
\begin{align}\label{align_calculation_of_hat_theta}
\hat{\bm\theta}_u = \arg\min_{\bm\theta} 
\left[ - \sum_{i=1}^{N_u} \left( y_u^i \log \sigma( \bm\theta^{\top} \bm z_u^i ) + (1 - y_u^i) \log \sigma( -\bm\theta^{\top} \bm z_u^i ) \right) + \frac{\lambda}{2} \| \bm\theta \|_2^2 \right],
\end{align}
and thus its gradient satisfies
\[
\sum_{i=1}^{N_u} \left( \sigma( \hat{\bm\theta}_u^{\top} \bm z_u^i ) - y_u^i \right) \bm z_u^i + \lambda \hat{\bm\theta}_u = 0.
\]
Therefore, it follows from \eqref{align_bounding_G} that
\[
\frac{1}{\kappa} \left\| G_u( \hat{\bm\theta}_u ) \right\|_{M_u^{-1}}
\leq 
\frac{1}{\kappa} \left\| \sum_{i=1}^{N_u} \varepsilon_u^i \bm z_u^i \right\|_{M_u^{-1}}.
\]

Next, let $V = \frac{\lambda}{\kappa} I$. Since $\varepsilon_u^i$ is $2$-subgaussian, we apply Theorem~1 in~\citet{abbasi2011improved} to obtain
\begin{align}\label{align_bounding_epsilon_z}
\left\| \sum_{i=1}^{N_u} \varepsilon_u^i \bm z_u^i \right\|_{M_u^{-1}}^2
\leq 8 \log \left( \frac{ \det(M_u)^{1/2} }{ \delta\, \det(V)^{1/2} } \right)
\end{align}
with probability at least $1 - \delta$. Since $\| \bm z_u^i \|_2 \leq 2$, we have
\[
\det(M_u) \leq \left( \frac{\lambda}{\kappa} + \frac{4N_u}{d} \right)^d,
\quad
\det(V) = \left( \frac{\lambda}{\kappa} \right)^d,
\quad
\text{and thus} \quad
\sqrt{ \frac{ \det(M_u) }{ \det(V) } } 
\leq \left( 1 + \frac{4N_u \kappa}{d \lambda} \right)^{ d/2 }.
\]
Therefore,
\[
\left\| \sum_{i=1}^{N_u} \varepsilon_u^i \bm z_u^i \right\|_{M_u^{-1}}^2
\leq 8 \log \left( \tfrac{1}{\delta} \right) + 4d \log \left( 1 + \tfrac{4N_u \kappa}{d \lambda} \right)
\quad \text{with probability at least } 1 - \delta.
\]

Putting everything together, we conclude that
\[
\left\| \bm\theta_u - \hat{\bm\theta}_u \right\|_{M_u}
\leq 
\frac{ \sqrt{ \lambda \kappa } + 2 \sqrt{ 2 \log(1/\delta) + d \log( 1 + 4N_u \kappa / (d \lambda) ) } }{ \kappa }
\quad \text{with probability at least } 1 - \delta,
\]
which follows from combining \eqref{align_bounding_G_minus_lambda_theta}, \eqref{align_bounding_theta_minus_hattheta}, \eqref{align_bounding_G}, and \eqref{align_bounding_epsilon_z}.
\end{proof}

\subsection{Proof of \Cref{lemma_cardinality_of_R_and_W}}

\begin{proof}

In order to prove \Cref{lemma_cardinality_of_R_and_W}, it suffices to show the following statement: under the same conditions as in \Cref{lemma_cardinality_of_R_and_W}, both sets can be characterized as
\begin{align*}
\mathcal{R}_{\hat\gamma}(u) 
&= \Big\{ v \,\Big|\, \bm\theta_u = \bm\theta_v 
\text{ and }
\frac{1}{ \sqrt{ \lambda_{\min}(M_u) } } + \frac{1}{ \sqrt{ \lambda_{\min}(M_v) } }
<  \alpha_r \hat\gamma
\Big\} \cup \{ u \},\\
\mathcal{W}_{\hat\gamma}(u)
&= \Big\{ v \,\Big|\, 
 \gamma \leq \| \bm\theta_u - \bm\theta_v \|_2 < \hat\gamma 
\text{ and }
\frac{1}{ \sqrt{ \lambda_{\min}(M_u) } } + \frac{1}{ \sqrt{ \lambda_{\min}(M_v) } }
< \alpha_w \varepsilon
\Big\}
\end{align*}
for some $\alpha_r \in \left( \frac{\kappa}{3(\alpha+1)\sqrt{2\max\{2,d\}\log(2U/\delta)}}, \frac{\kappa}{2(\alpha - 1)\sqrt{2\log(2U/\delta)}} \right)$ and $\alpha_w \in \left( 0, \frac{\kappa}{2(\alpha-1)\sqrt{2\log(2U/\delta)}} \right)$ with probability at least $1-\delta$.

First, by applying \Cref{lemma_confidence_ellipsoid_of_hat_theta} and a union bound, we have that the event 
\[
\mathcal{E} \coloneqq \bigcap_{u \in \mathcal{U}} \Big\{ \| \hat{\bm\theta}_u - \bm\theta_u \|_2 \leq \text{CI}_u \Big\}
\]
holds with probability at least $1 - \delta/2$.

Recall that the connection condition in Algorithm~\ref{al1} is given by
\[
\left\| \hat{\bm\theta}_{u_1} - \hat{\bm\theta}_{u_2} \right\|_2 
< \hat\gamma - \alpha \left( \text{CI}_{u_1} + \text{CI}_{u_2} \right),
\]
which implies
\begin{align*}
\hat\gamma 
&> \left\| \hat{\bm\theta}_{u_1} - \hat{\bm\theta}_{u_2} \right\|_2 + \alpha \left( \text{CI}_{u_1} + \text{CI}_{u_2} \right) \\
&\geq \left\| \hat{\bm\theta}_{u_1} - \hat{\bm\theta}_{u_2} \right\|_2 + \text{CI}_{u_1} + \text{CI}_{u_2} \\
&\overset{(a)}{\geq} \left\| \hat{\bm\theta}_{u_1} - \hat{\bm\theta}_{u_2} \right\|_2 
+ \left\| \hat{\bm\theta}_{u_1} - \bm\theta_{u_1} \right\|_2 
+ \left\| \hat{\bm\theta}_{u_2} - \bm\theta_{u_2} \right\|_2 \\
&\overset{(b)}{\geq} \left\| \bm\theta_{u_1} - \bm\theta_{u_2} \right\|_2,
\end{align*}
where (a) follows from the event $\mathcal{E}$ and (b) follows by the triangle inequality. Therefore, any pair of connected users must have preference vectors whose difference is no greater than $\hat\gamma$.

Next, we calculate the cardinality of $\mathcal{R}_{\hat\gamma}(u)$.  
Note that for any user $v \in \mathcal{R}_{\hat\gamma}(u)$, it holds that $\bm\theta_u = \bm\theta_v$.  
To prove the claim for $\mathcal{R}_{\hat\gamma}(u)$ in \Cref{lemma_cardinality_of_R_and_W}, it suffices to show the following two conditions under event $\mathcal{E}$:  
\begin{itemize}
  \item[(i)] If 
  $\frac{1}{ \sqrt{ \lambda_{\min}(M_u) } } + \frac{1}{ \sqrt{ \lambda_{\min}(M_v) } } 
  < \frac{ \kappa \hat\gamma }{ 3(\alpha+1) \sqrt{ 2 \max\{2,d\} \log(2U/\delta) } }$  
  then $v$ must be included in $\mathcal{R}_{\hat\gamma}(u)$.
  \item[(ii)] If 
  $\frac{1}{ \sqrt{ \lambda_{\min}(M_u) } } + \frac{1}{ \sqrt{ \lambda_{\min}(M_v) } } 
  \geq \frac{ \kappa \hat\gamma }{ 2(\alpha-1) \sqrt{ 2 \log(2U/\delta) } }$  
  then $v$ must not be included in $\mathcal{R}_{\hat\gamma}(u)$.
\end{itemize}

\paragraph{For (i).}  
Given 
\[
\frac{1}{ \sqrt{ \lambda_{\min}(M_u) } } + \frac{1}{ \sqrt{ \lambda_{\min}(M_v) } }
< \frac{ \kappa \hat\gamma }{ 3(\alpha+1)\sqrt{2 \max\{2,d\} \log(2U/\delta)} },
\]
we have
\begin{align}\label{align_bounding_CI_summation}
&(\alpha+1) \left( \text{CI}_u + \text{CI}_v \right)\\
&\leq \frac{3(\alpha+1)\sqrt{2\log(2U/\delta) + d\log(1 + 4N_u\kappa/(d\lambda))}}{\kappa \sqrt{ \lambda_{\min}(M_u) }}
+ \frac{3(\alpha+1)\sqrt{2\log(2U/\delta) + d\log(1 + 4N_v\kappa/(d\lambda))}}{\kappa \sqrt{ \lambda_{\min}(M_v) }} \notag\\
&\leq \frac{3(\alpha+1)\sqrt{2\max\{2,d\} \log(2U/\delta)}}{\kappa} 
\left( \frac{1}{\sqrt{ \lambda_{\min}(M_u) }} + \frac{1}{\sqrt{ \lambda_{\min}(M_v) }} \right) 
< \hat\gamma,
\end{align}
where the second last inequality holds if $\lambda$ and $\delta$ satisfy 
$\lambda\kappa \leq 2\log(2U/\delta) + d\log(1 + 4N_s\kappa/(d\lambda))$ and $\delta \leq d\lambda/(4N_s\kappa + d\lambda)$ for all $s \in \mathcal{U}$.

Therefore, under event $\mathcal{E}$, we obtain
\[
\left\| \hat{\bm\theta}_{u} - \hat{\bm\theta}_{v} \right\|_2 
\leq \left\| \bm\theta_{u} - \bm\theta_{v} \right\|_2 + \text{CI}_{u} + \text{CI}_{v}
\overset{(a)}{=} \text{CI}_{u} + \text{CI}_{v} 
\overset{(b)}{\leq} \hat\gamma - \alpha(\text{CI}_{u} + \text{CI}_{v}),
\]
where (a) uses $\bm\theta_u = \bm\theta_v$, and (b) follows from \eqref{align_bounding_CI_summation}.  
Hence the connection condition in \Cref{align_clutering_condition_for_similarity} holds, which implies that $v$ will be connected to $u$ with probability at least $1-\delta$.

\paragraph{For (ii).}  
If 
\[
\frac{1}{\sqrt{\lambda_{\min}(M_u)}} + \frac{1}{\sqrt{\lambda_{\min}(M_v)}} 
\geq \frac{ \kappa \hat\gamma }{ 2(\alpha-1)\sqrt{2\log(2U/\delta)} },
\]
then we have
\begin{align}\label{align_bounding_alpha_-1_CI_summation}
(\alpha-1)\left( \text{CI}_u + \text{CI}_v \right) 
&\geq \frac{2(\alpha-1)}{\kappa} \sqrt{ \frac{2\log(2U/\delta)}{ \lambda_{\min}(M_u) } }
+ \frac{2(\alpha-1)}{\kappa} \sqrt{ \frac{2\log(2U/\delta)}{ \lambda_{\min}(M_v) } } \notag\\
&\geq \hat\gamma.
\end{align}
Therefore, it follows that
\[
\hat\gamma - \alpha (\text{CI}_u + \text{CI}_v) 
\leq - (\text{CI}_u + \text{CI}_v) 
= \| \bm\theta_u - \bm\theta_v \|_2 - (\text{CI}_u + \text{CI}_v) 
\leq \| \hat{\bm\theta}_u - \hat{\bm\theta}_v \|_2.
\]
Hence, the connection condition in \Cref{align_clutering_condition_for_similarity} does not hold under event $\mathcal{E}$.  
This verifies that any $v$ satisfying this bound cannot be included in $\mathcal{R}_{\hat\gamma}(u)$, implying 
\[
\alpha_r \in \left( \frac{\kappa}{3(\alpha+1)\sqrt{2\max\{2,d\}\log(2U/\delta)}},\, \frac{\kappa}{2(\alpha-1)\sqrt{2\log(2U/\delta)}} \right].
\]

For the cardinality of $\mathcal{W}_{\hat\gamma}(u)$, note that since both $\lambda_{\min}(M_u)$ and $\lambda_{\min}(M_v)$ are positive, we trivially have $\alpha_w > 0$.  
It remains to show that any heterogeneous user $v$ with
\[
\frac{1}{\sqrt{\lambda_{\min}(M_u)}} + \frac{1}{\sqrt{\lambda_{\min}(M_v)}} 
\geq \frac{ \kappa \hat\gamma }{ 2(\alpha-1)\sqrt{2\log(2U/\delta)} }
\]
cannot be included in $\mathcal{W}_{\hat\gamma}(u)$ under event $\mathcal{E}$.  
By the same argument as in \eqref{align_bounding_alpha_-1_CI_summation}, we have 
$(\alpha-1)(\text{CI}_u + \text{CI}_v) \geq \varepsilon$.  
This yields
\[
\varepsilon - \alpha (\text{CI}_u + \text{CI}_v) 
\leq - (\text{CI}_u + \text{CI}_v) 
\leq \| \bm\theta_u - \bm\theta_v \|_2 - (\text{CI}_u + \text{CI}_v) - \gamma 
\leq \| \hat{\bm\theta}_u - \hat{\bm\theta}_v \|_2 - \gamma,
\]
which implies 
\[
\hat\gamma - \alpha (\text{CI}_u + \text{CI}_v) \leq \| \hat{\bm\theta}_u - \hat{\bm\theta}_v \|_2.
\]
Thus, the connection condition in \Cref{align_clutering_condition_for_similarity} does not hold for such $v$, confirming that it cannot be included in $\mathcal{W}_{\hat\gamma}(u)$.

\end{proof}

\subsection{Proof of \Cref{lemma_confidence_ellipsoid_of_tilde_theta}}\label{appendix_proof_of_lemma_tilde_theta}

\begin{proof}
First, we define
\[
\tilde{G}_u(\bm\theta_s)
= \sum_{v \in \mathcal{V}_{\hat\gamma}(u)} \sum_{i=1}^{N_v} 
\left( \sigma\left( \bm\theta_s^{\top} \bm z_v^i \right)
- \sigma\left( \bm\theta_u^{\top} \bm z_v^i \right) \right) \bm z_v^i 
+ \lambda \bm\theta_s,
\quad \forall\, \bm\theta_s \in \mathbb{R}^d.
\]

By the mean value theorem, for any $\bm\theta_{s_1}$ and $\bm\theta_{s_2}$, we have
\[
\tilde{G}_u(\bm\theta_{s_1}) - \tilde{G}_u(\bm\theta_{s_2})
= \left(
\sum_{v \in \mathcal{V}_{\hat\gamma}(u)} \sum_{i=1}^{N_v}
\nabla \sigma\left( \bm\theta_{\overline{s}}^{\top} \bm z_v^i \right)
\bm z_v^i \bm z_v^{i\top} + \lambda I 
\right)
\left( \bm\theta_{s_1} - \bm\theta_{s_2} \right),
\]
for some intermediate point 
$\bm\theta_{\overline{s}} = \xi \bm\theta_{s_1} + (1 - \xi) \bm\theta_{s_2}$ 
with $\xi \in [0, 1]$.  
In particular, for each $u \in \mathcal{U}$, we let 
$\xi_u \in [0, 1]$ and define the corresponding intermediate point 
$\bm\theta_{\overline{u}} = \xi_u \bm\theta_u + (1 - \xi_u) \tilde{\bm\theta}_u$.

We further define
\[
\tilde{W}_u = 
\sum_{v \in \mathcal{V}_{\hat\gamma}(u)} \sum_{i=1}^{N_v}
\nabla \sigma\left( \bm\theta_{\overline{u}}^{\top} \bm z_v^i \right)
\bm z_v^i \bm z_v^{i\top} + \lambda I
\quad \text{and} \quad
\tilde{M}_u = 
\sum_{v \in \mathcal{V}_{\hat\gamma}(u)} \sum_{i=1}^{N_v}
\bm z_v^i \bm z_v^{i\top} + \frac{\lambda}{\kappa} I.
\]
By construction, it holds that $\tilde{W}_u \succeq \kappa \tilde{M}_u$ and thus 
$\tilde{M}_u^{-1} \succeq \kappa \tilde{W}_u^{-1}$ for all $u \in \mathcal{U}$.

Then, we have
\begin{align}\label{align_bounding_tilde_G}
\big\| \tilde{G}_u(\tilde{\bm\theta}_u) - \lambda \bm\theta_u \big\|_{ \tilde{M}_u^{-1} }^2
&= \big\| \tilde{G}_u(\tilde{\bm\theta}_u) - \tilde{G}_u(\bm\theta_u) \big\|_{ \tilde{M}_u^{-1} }^2
= \big\| \tilde{W}_u ( \bm\theta_u - \tilde{\bm\theta}_u ) \big\|_{ \tilde{M}_u^{-1} }^2 \notag \\
&= ( \bm\theta_u - \tilde{\bm\theta}_u )^{\top}
\tilde{W}_u \tilde{M}_u^{-1} \tilde{W}_u 
( \bm\theta_u - \tilde{\bm\theta}_u ) \notag \\
&\overset{(a)}{\geq}
\kappa\, ( \bm\theta_u - \tilde{\bm\theta}_u )^{\top}
\tilde{W}_u ( \bm\theta_u - \tilde{\bm\theta}_u )
\notag \\
&\overset{(b)}{\geq}
\kappa^2\, ( \bm\theta_u - \tilde{\bm\theta}_u )^{\top}
\tilde{M}_u ( \bm\theta_u - \tilde{\bm\theta}_u )
= \kappa^2\, \big\| \bm\theta_u - \tilde{\bm\theta}_u \big\|_{ \tilde{M}_u }^2,
\end{align}
where (a) follows from $\tilde{M}_u^{-1} \succeq \kappa \tilde{W}_u^{-1}$ and (b) follows from $\tilde{W}_u \succeq \kappa \tilde{M}_u$.

Moreover, since $\tilde{M}_u \succeq \frac{\lambda}{\kappa} I$, we have
\begin{align}\label{align_bounding_lambdatheta}
\big\| \lambda \bm\theta_u \big\|_{ \tilde{M}_u^{-1} }
= \lambda\, \sqrt{ \bm\theta_u^{\top} \tilde{M}_u^{-1} \bm\theta_u }
\leq \lambda\, \sqrt{ \bm\theta_u^{\top} \left( \frac{\kappa}{\lambda} I \right) \bm\theta_u }
= \sqrt{ \lambda \kappa }\, \big\| \bm\theta_u \big\|_2
\leq \sqrt{ \lambda \kappa }.
\end{align}

Hence, we obtain
\begin{align}
\big\| \bm\theta_u - \tilde{\bm\theta}_u \big\|_{ \tilde{M}_u }
&\overset{(a)}{\leq} 
\frac{1}{\kappa} 
\big\| \tilde{G}_u( \tilde{\bm\theta}_u ) - \lambda \bm\theta_u \big\|_{ \tilde{M}_u^{-1} } 
\notag \\
&\overset{(b)}{\leq} 
\frac{1}{\kappa} 
\big\| \tilde{G}_u( \tilde{\bm\theta}_u ) \big\|_{ \tilde{M}_u^{-1} }
+ \frac{1}{\kappa} 
\big\| \lambda \bm\theta_u \big\|_{ \tilde{M}_u^{-1} }
\notag \\
&\overset{(c)}{\leq} 
\frac{1}{\kappa} 
\big\| \tilde{G}_u( \tilde{\bm\theta}_u ) \big\|_{ \tilde{M}_u^{-1} }
+ \sqrt{ \frac{ \lambda }{ \kappa } },
\label{align_bounding_theta_minus_tilde_theta}
\end{align}
where (a) follows from~\Cref{align_bounding_tilde_G}, (b) applies the triangle inequality, and (c) uses the bound in~\Cref{align_bounding_lambdatheta}.

Furthermore, we can bound $\tilde{G}_u(\tilde{\bm\theta}_u)$ as follows:
\begin{align}
& \frac{1}{\kappa^2} \Big\| \tilde{G}_u( \tilde{\bm\theta}_u ) \Big\|_{ \tilde{M}_u^{-1} }^2 \notag \\
&\overset{(a)}{=} \frac{1}{\kappa^2} 
\Big\| \sum_{v \in \mathcal{V}_{\hat\gamma}(u)} \sum_{i=1}^{N_v} 
\left( \sigma( \tilde{\bm\theta}_u^{\top} \bm z_v^i ) - \sigma( \bm\theta_u^{\top} \bm z_v^i ) \right) \bm z_v^i 
+ \lambda \tilde{\bm\theta}_u \Big\|_{ \tilde{M}_u^{-1} }^2 \notag \\
&= \frac{1}{\kappa^2} 
\Big\| \sum_{v} \sum_{i} 
\left( \sigma( \tilde{\bm\theta}_u^{\top} \bm z_v^i ) - y_v^i + y_v^i - \sigma( \bm\theta_u^{\top} \bm z_v^i ) \right) \bm z_v^i 
+ \lambda \tilde{\bm\theta}_u \Big\|_{ \tilde{M}_u^{-1} }^2 \notag \\
&= \frac{1}{\kappa^2} 
\Big\| \sum_{v} \sum_{i} 
\left( \sigma( \tilde{\bm\theta}_u^{\top} \bm z_v^i ) - y_v^i \right) \bm z_v^i 
+ \lambda \tilde{\bm\theta}_u 
+ \sum_{v} \sum_{i} \left( y_v^i - \sigma( \bm\theta_u^{\top} \bm z_v^i ) \right) \bm z_v^i \Big\|_{ \tilde{M}_u^{-1} }^2 \notag \\
&\overset{(b)}{=} \frac{1}{\kappa^2} 
\Big\| \sum_{v} \sum_{i} \left( y_v^i - \sigma( \bm\theta_v^{\top} \bm z_v^i ) + \sigma( \bm\theta_v^{\top} \bm z_v^i ) - \sigma( \bm\theta_u^{\top} \bm z_v^i ) \right) \bm z_v^i \Big\|_{ \tilde{M}_u^{-1} }^2 \notag \\
&= \frac{1}{\kappa^2} 
\Big\| \underbrace{ \sum_{v} \sum_{i} \varepsilon_v^i \bm z_v^i }_{ \text{noise} } 
+ \underbrace{ \sum_{v} \sum_{i} \left( \sigma( \bm\theta_v^{\top} \bm z_v^i ) - \sigma( \bm\theta_u^{\top} \bm z_v^i ) \right) \bm z_v^i }_{ \text{bias} } 
\Big\|_{ \tilde{M}_u^{-1} }^2 \notag \\
&\overset{(c)}{\leq} 
\left( \frac{1}{\kappa} \Big\| \sum_{v} \sum_{i} \varepsilon_v^i \bm z_v^i \Big\|_{ \tilde{M}_u^{-1} }
+ \frac{1}{\kappa} \Big\| \sum_{v} \sum_{i} \left( \sigma( \bm\theta_v^{\top} \bm z_v^i ) - \sigma( \bm\theta_u^{\top} \bm z_v^i ) \right) \bm z_v^i \Big\|_{ \tilde{M}_u^{-1} } \right)^2 \notag \\
&\overset{(d)}{=} 
\left( \frac{1}{\kappa} \Big\| \sum_{v} \sum_{i} \varepsilon_v^i \bm z_v^i \Big\|_{ \tilde{M}_u^{-1} }
+ \frac{1}{\kappa} \Big\| \sum_{v \in \mathcal{W}_{\hat\gamma}(u)} \sum_{i} \left( \sigma( \bm\theta_v^{\top} \bm z_v^i ) - \sigma( \bm\theta_u^{\top} \bm z_v^i ) \right) \bm z_v^i \Big\|_{ \tilde{M}_u^{-1} } \right)^2.
\label{align_bounding_tilde_G}
\end{align}

Here, (a) follows from the definition of $\tilde{G}_u(\tilde{\bm\theta}_u)$;  
(b) holds since $\tilde{\bm\theta}_u$ minimizes the negative log-likelihood regularized by $\lambda$, implying
\[
\sum_{v} \sum_{i} \left( \sigma( \tilde{\bm\theta}_u^{\top} \bm z_v^i ) - y_v^i \right) \bm z_v^i + \lambda \tilde{\bm\theta}_u = 0;
\]
(c) uses the triangle inequality; and (d) uses the fact that for any homogeneous neighbor $v \in \mathcal{R}_{\hat\gamma}(u)$, we have $\bm\theta_u = \bm\theta_v$, so only the heterogeneous neighbors contribute to the bias term.

Next, we bound the term 
\[
\Big\| \sum_{v \in \mathcal{W}_{\hat\gamma}(u)} \sum_{i=1}^{N_v} 
\left( \sigma( \bm\theta_v^{\top} \bm z_v^i ) - \sigma( \bm\theta_u^{\top} \bm z_v^i ) \right) \bm z_v^i \Big\|_{ \tilde M_u^{-1} }.
\]
By the triangle inequality, we have
\begin{align}\label{align_bounding_sigma_minus_sigma}
&\quad \Big\| \sum_{v \in \mathcal{W}_{\hat\gamma}(u)} \sum_{i=1}^{N_v} 
\left( \sigma( \bm\theta_v^{\top} \bm z_v^i ) - \sigma( \bm\theta_u^{\top} \bm z_v^i ) \right) \bm z_v^i \Big\|_{ \tilde M_u^{-1} } \notag \\
&\leq \sum_{v \in \mathcal{W}_{\hat\gamma}(u)} \sum_{i=1}^{N_v} 
\Big| \sigma( \bm\theta_v^{\top} \bm z_v^i ) - \sigma( \bm\theta_u^{\top} \bm z_v^i ) \Big| 
\big\| \bm z_v^i \big\|_{ \tilde M_u^{-1} } \notag \\
&\overset{(a)}{\leq} \sum_{v} \sum_{i} 
\frac{1}{4} \big| \bm\theta_v^{\top} \bm z_v^i - \bm\theta_u^{\top} \bm z_v^i \big| \big\| \bm z_v^i \big\|_{ \tilde M_u^{-1} } 
\leq \frac{\hat\gamma}{4} \sum_{v} \sum_{i} \| \bm z_v^i \|_2 \| \bm z_v^i \|_{ \tilde M_u^{-1} } \notag \\
&\overset{(b)}{\leq} \frac{\hat\gamma}{2} 
\sum_{v \in \mathcal{W}_{\hat\gamma}(u)} \sum_{i=1}^{N_v} 
\big\| \bm z_v^i \big\|_{ \tilde M_u^{-1} },
\end{align}
where (a) follows from the Lipschitz continuity of the sigmoid function with constant $L_\sigma = \frac{1}{4}$, and (b) uses $\| \bm z_v^i \|_2 \leq 2$.

Furthermore, observe that
\[
\sum_{v \in \mathcal{W}_{\hat\gamma}(u)} \sum_{i=1}^{N_v} 
\big\| \bm z_v^i \big\|_{ \tilde M_u^{-1} }^2 
= \mathrm{tr}\left( \tilde M_u^{-1} \left( \tilde M_u - \frac{\lambda}{\kappa} I \right) \right) 
\leq d.
\]
By applying Cauchy--Schwarz inequality, we get
\begin{align}\label{align_bounding_summation_zvi}
\sum_{v} \sum_{i} \big\| \bm z_v^i \big\|_{ \tilde M_u^{-1} }
&\leq \sqrt{ 
\left( \sum_{v} N_v \right) 
\left( \sum_{v} \sum_{i} \big\| \bm z_v^i \big\|_{ \tilde M_u^{-1} }^2 \right) 
} 
\leq \sqrt{ d \cdot N_{ \mathcal{W}_{\hat\gamma}(u) } }.
\end{align}

Combining the above, the bias term due to heterogeneous neighbors is bounded accordingly.

Therefore, by applying \Cref{align_bounding_sigma_minus_sigma} and (\ref{align_bounding_summation_zvi}), we obtain
\begin{align}\label{align_bounding_sigma_theta_minus_sigma_theta}
\Big\|
\sum_{v \in \mathcal{W}_{\hat\gamma}(u)} \sum_{i=1}^{N_v}
\left( \sigma( \bm\theta_v^{\top} \bm z_v^i ) - \sigma( \bm\theta_u^{\top} \bm z_v^i ) \right) \bm z_v^i 
\Big\|_{ \tilde M_u^{-1} }
\leq \frac{ \hat\gamma }{ 2 } \sqrt{ d\, N_{ \mathcal{W}_{\hat\gamma}(u) } },
\end{align}
where $N_{ \mathcal{W}_{\hat\gamma}(u) } = \sum_{v \in \mathcal{W}_{\hat\gamma}(u) } N_v$.

Furthermore, for the noise term in \Cref{align_bounding_tilde_G}, by applying Theorem 1 in~\citet{abbasi2011improved} with $V = \frac{ \lambda }{ \kappa } I$, we have
\begin{align}\label{align_bounding_summation_epsilon_times_z}
\Big\|
\sum_{v \in \mathcal{V}_{\hat\gamma}(u)} \sum_{i=1}^{N_v} \varepsilon_v^i \bm z_v^i 
\Big\|_{ \tilde M_u^{-1} }
\leq 2 \sqrt{ 2 \log \left( \frac{ \det( \tilde M_u )^{1/2} }{ \delta\, \det(V)^{1/2} } \right) }
\leq 2 \sqrt{ 2 \log\left( \frac{1}{\delta} \right) + d\, \log\left( 1 + \frac{ 4\, N_{\mathcal{V}_{\hat\gamma}(u)} \kappa }{ d \lambda } \right) }
\end{align}
with probability at least $1 - \delta$, where $N_{\mathcal{V}_{\hat\gamma}(u)} = \sum_{v \in \mathcal{V}_{\hat\gamma}(u)} N_v$.

Combining \Cref{align_bounding_theta_minus_tilde_theta}, \Cref{align_bounding_tilde_G}, \Cref{align_bounding_sigma_theta_minus_sigma_theta}, and (\ref{align_bounding_summation_epsilon_times_z}), we finally have
\[
\Big\| \bm\theta_u - \tilde{\bm\theta}_u \Big\|_{ \tilde M_u }
\leq 
\frac{ \sqrt{ \lambda \kappa } + 2 \sqrt{ 2 \log\left( \frac{ 2U }{ \delta } \right) + d\, \log\left( 1 + \frac{ 4\, N_{\mathcal{V}_{\hat\gamma}(u)} \kappa }{ d \lambda } \right) } }{ \kappa }
+ \frac{ \hat\gamma }{ 2 } \sqrt{ d\, N_{ \mathcal{W}_{\hat\gamma}(u) } },
\]
which holds for all $u \in \mathcal{U}$ with probability at least $1 - \delta$. This completes the proof of Lemma~\ref{lemma_confidence_ellipsoid_of_tilde_theta}.
\end{proof}

\subsection{Proof of \Cref{th1}}\label{appendix_proof_of_th1}

\begin{proof}
By Lemmas~\ref{lemma_confidence_ellipsoid_of_hat_theta} and \ref{lemma_confidence_ellipsoid_of_tilde_theta}, we have
\begin{align}\label{align_result_from_lemma1_and_3}
\left\| \bm\theta_u - \tilde{\bm\theta}_u \right\|_{\tilde M_u}
\leq \tilde{\beta}_u + \frac{\hat\gamma}{2} \sqrt{ d\, N_{\mathcal{W}_{\hat\gamma}(u)} }
\end{align}
for all $u \in \mathcal{U}$ with probability at least $1-\delta$.

For simplicity, let $u = u_{t}$ denote the test user. Define $J_u'(\pi) = J_u(\pi) - \langle \bm\theta_u, \bm w \rangle$. Then, the suboptimality gap can be written as:
\begin{align*}
\text{SubOpt}_u(\pi_u)
& = J_u(\pi_u^*) - J_u(\pi_u) 
= J_u'(\pi_u^*) - J_u'(\pi_u) \\
& = \left( J_u'(\pi_u^*) - \tilde J_u(\pi_u^*) \right)
+ \left( \tilde J_u(\pi_u^*) - \tilde J_u(\pi_u) \right)
+ \left( \tilde J_u(\pi_u) - J_u'(\pi_u) \right).
\end{align*}

For the second term, since $\pi_u = \arg\max_{\pi} \tilde J_u(\pi)$, we have $\tilde J_u(\pi_u^*) - \tilde J_u(\pi_u) \leq 0$.

For the third term:
\begin{align*}
\tilde J_u(\pi_u) - J_u'(\pi_u)
&= \left( \mathbb{E}_{\bm x \sim \rho_p} [\phi(\bm x, \pi_u(\bm x))] - \bm w \right)^{\top} (\tilde{\bm\theta}_u - \bm\theta_u)
- \tilde{\beta}_u \Big\| \mathbb{E}_{\bm x \sim \rho_p} [\phi(\bm x, \pi_u(\bm x))] - \bm w \Big\|_{\tilde M_u^{-1}} \\
&\leq \Big\| \mathbb{E}_{\bm x \sim \rho_p} [\phi(\bm x, \pi_u(\bm x))] - \bm w \Big\|_{\tilde M_u^{-1}}
\left( \left\| \tilde{\bm\theta}_u - \bm\theta_u \right\|_{\tilde M_u} - \tilde{\beta}_u \right) \\
&\overset{(a)}{\leq} \frac{\hat\gamma}{2} \sqrt{ d\, N_{\mathcal{W}_{\hat\gamma}(u)} }
\, \Big\| \mathbb{E}_{\bm x \sim \rho_p} [\phi(\bm x, \pi_u(\bm x))] - \bm w \Big\|_{\tilde M_u^{-1}},
\end{align*}
where $(a)$ uses \eqref{align_result_from_lemma1_and_3}.

Similarly, for the first term:
\begin{align*}
J_u'(\pi_u^*) - \tilde J_u(\pi_u^*)
&= \left( \bm\theta_u - \tilde{\bm\theta}_u \right)^{\top} \left( \mathbb{E}_{\bm x \sim \rho_p} [\phi(\bm x, \pi_u^*(\bm x))] - \bm w \right)
+ \tilde{\beta}_u \Big\| \mathbb{E}_{\bm x \sim \rho_p} [\phi(\bm x, \pi_u^*(\bm x))] - \bm w \Big\|_{\tilde M_u^{-1}} \\
&\leq \left( \left\| \bm\theta_u - \tilde{\bm\theta}_u \right\|_{\tilde M_u} + \tilde{\beta}_u \right)
\Big\| \mathbb{E}_{\bm x \sim \rho_p} [\phi(\bm x, \pi_u^*(\bm x))] - \bm w \Big\|_{\tilde M_u^{-1}} \\
&\leq \left( 2\tilde{\beta}_u + \frac{\hat\gamma}{2} \sqrt{ d\, N_{\mathcal{W}_{\hat\gamma}(u)} } \right)
\Big\| \mathbb{E}_{\bm x \sim \rho_p} [\phi(\bm x, \pi_u^*(\bm x))] - \bm w \Big\|_{\tilde M_u^{-1}}.
\end{align*}

Putting everything together, we obtain:
\begin{align*}
\text{SubOpt}_u(\pi_u)
&\leq \left( 2\tilde{\beta}_u + \hat\gamma \sqrt{ d\, N_{\mathcal{W}_{\hat\gamma}(u)} } \right)
\Big\| \mathbb{E}_{\bm x \sim \rho_p} [\phi(\bm x, \pi_u^*(\bm x))] - \bm w \Big\|_{\tilde M_u^{-1}} \\
&\leq \left( \frac{ 2\sqrt{2\log\left( \frac{2U}{\delta} \right) + d\log\left( 1 + \frac{4\tilde N_u \kappa}{d\lambda} \right) } }{ \kappa }
+ \hat\gamma \sqrt{ d\, N_{\mathcal{W}_{\hat\gamma}(u)} } \right)
\Big\| \mathbb{E}_{\bm x \sim \rho_p} [\phi(\bm x, \pi_u^*(\bm x))] - \bm w \Big\|_{\tilde M_u^{-1}} \\
&= \tilde O\left( \sqrt{d} \left( 1 + \hat\gamma \sqrt{ N_{\mathcal{W}_{\hat\gamma}(u)} } \right)
\, \Big\| \mathbb{E}_{\bm x \sim \rho_p} [\phi(\bm x, \pi_u^*(\bm x))] - \bm w \Big\|_{\tilde M_u^{-1}} \right),
\end{align*}
which concludes the proof of Theorem~\ref{th1}.
\end{proof}

\subsection{Proof of \Cref{lemma_cardinality_under_item_regularity}}

\begin{proof}
In this proof, we define
\[
\text{CI}_u = 
\frac{
\sqrt{\lambda\kappa} + 2\sqrt{\, d\, \log\left( 1 + \frac{4\kappa N_u}{\lambda d} \right) + 2\log\left( \frac{2U}{\delta} \right)}
}{
\kappa\,\sqrt{\tilde\lambda_a N_u / 2\,}
}.
\]
By \Cref{lemma_confidence_ellipsoid_of_hat_theta}, Lemma J.1 in \citet{wang2023onlinea} and Lemma 7 in \citep{li2018online}, it holds that 
$\lambda_{\min}(M_u) \geq \tilde\lambda_a N_u / 2$ 
for all users connected to user $u$ with probability at least $1 - \delta/2$. Therefore, we have
\[
\left\| \hat{\bm\theta}_u - \bm\theta_u \right\|_2
\leq 
\frac{
\sqrt{ \lambda \kappa } + 2 \sqrt{ 2 \log \left( \frac{2U}{\delta} \right) + d \log \left( 1 + \frac{4 N_u \kappa}{ d \lambda } \right) }
}{
\kappa\, \sqrt{ \lambda_{\min}(M_u) }
}
\leq 
\text{CI}_u
\]
with probability at least $1-\delta$. 

Finally, by following the same argument used in the proof of \Cref{lemma_cardinality_of_R_and_W}, but replacing $\lambda_{\min}(M_u)$ with the explicit bound on $N_u$ under \Cref{assumption_item_regularity}, we obtain the desired result in \Cref{lemma_cardinality_under_item_regularity}.
\end{proof}

\subsection{Proof of \Cref{th1_item_regularity}}

\begin{proof}

We denote 
\(\eta_{ \mathcal{W}_{\hat\gamma}(u) } \coloneqq N_{\mathcal{W}_{\hat\gamma}(u)} / N_{ \mathcal{V}_{\hat\gamma}(u) }\) 
for clarity, then it follows that
\begin{align*}
\text{SubOpt}_u(\pi_u)
& \leq 
\tilde O\left(
\frac{\sqrt{d}\left(1+\hat\gamma\sqrt{N_{\mathcal{W}_{\hat\gamma}(u_t)}}\right)}{\sqrt{\lambda_{\min}(\tilde M_{u_t})}}
\right) \\ 
& \leq 
\tilde O\left( 
\sqrt{\tfrac{d}{\tilde\lambda_a}} \left( 
\sqrt{\tfrac{1}{ N_{ \mathcal{V}_{\hat\gamma}(u) } }} 
+ \frac{ \hat\gamma \sqrt{N_{ \mathcal{W}_{\hat\gamma}(u) }} }{ \sqrt{N_{ \mathcal{V}_{\hat\gamma}(u) }} } 
\right)
\right) \\
& \leq 
\tilde O\left( 
\sqrt{\tfrac{d}{\tilde\lambda_a}} \left( 
\sqrt{\tfrac{1}{ N_{ \mathcal{V}_{\hat\gamma}(u) } }} 
+ \hat\gamma \sqrt{\eta_{ \mathcal{W}_{\hat\gamma}(u) }} 
\right) \right).
\end{align*}
Here the first inequality follows directly from \Cref{th1}, while the second inequality applies Lemma~J.1 in~\citet{wang2023onlinea} and Lemma~7 in~\citet{li2018online}. This completes the proof of \Cref{th1_item_regularity}.

\end{proof}

\subsection{Proof of \Cref{th2}}

\begin{proof}
To simplify the notation, we write $u = u_{t}$. We define
\begin{align}
& \text{SubOpt}_u(\pi_u, \bm x)
\coloneqq 
\bm\theta_u^{\top} \left( \phi(\bm x, \pi_u^*(\bm x)) - \phi(\bm x, \pi_u(\bm x)) \right), \notag
\\& \overline\beta_{u}^n
\coloneqq 
\frac{
2\sqrt{\, d\, \log\left( 1 + \frac{4\kappa(\tilde N_u + n)}{\lambda d} \right) + 2 \log(2N/\delta) } 
+ \sqrt{\, \lambda \kappa\, }
}{\kappa}. \notag
\end{align}

First, note that by \Cref{lemma_confidence_ellipsoid_of_tilde_theta} and \Cref{le_confidence_ellipsoid}, since the cardinality of the heterogeneous neighbor set $\mathcal{W}_{\hat\gamma}(u)$ remains unchanged during the online phase, we have
\begin{align}\label{align_ellipsoid_ball_of_tilde_theta_t}
\Big\|\, \bm\theta_u - \tilde{\bm\theta}_u^n\, \Big\|_{\tilde M_u^n}
\;\leq\; 
\overline\beta_u^n 
+ 
\frac{\hat\gamma}{2}\, \sqrt{\, d\, N_{\mathcal{W}_{\hat\gamma}(u)}\, }
\quad 
\text{for each } n \in [N],
\end{align}
with probability at least $1 - \frac{\delta}{2N}$. By applying a union bound over all $n \in [N]$, this bound holds uniformly for all rounds with probability at least $1-\delta$.

We now bound $\text{SubOpt}_u(\pi_u, \bm x)$. It holds that
\begin{align}
& \text{SubOpt}_u(\pi_u, \bm x)
\\&= \bm\theta_u^{\top} \left( \phi(\bm x, \pi_u^*(\bm x)) - \phi(\bm x, \pi_u(\bm x)) \right) \notag \\
&\overset{(a)}{\leq}
\bm\theta_u^{\top} \left( \phi(\bm x, \pi_u^*(\bm x)) - \phi(\bm x, \pi_u(\bm x)) \right)
+ \overline{\bm\theta}_u^{\top} \left( \phi(\bm x, \pi_u(\bm x)) - \phi(\bm x, \pi_u^*(\bm x)) \right) \notag \\
&= \left( \bm\theta_u - \overline{\bm\theta}_u \right)^{\top}
\left( \phi(\bm x, \pi_u^*(\bm x)) - \phi(\bm x, \pi_u(\bm x)) \right) \notag \\
&= \left( \bm\theta_u -
\frac{1}{ d\, \lambda_{\min}\left( \tilde M_u^{N} \right) + N }
\left( d\, \lambda_{\min}\left( \tilde M_u^{N} \right)\, \tilde{\bm\theta}_u^{N} + \sum_{n=1}^{N} \tilde{\bm\theta}_u^n \right)
\right)^{\top}
\left( \phi(\bm x, \pi_u^*(\bm x)) - \phi(\bm x, \pi_u(\bm x)) \right) \notag \\
&= \frac{1}{ d\, \lambda_{\min}\left( \tilde M_u^{N} \right) + N }
\left(
d\, \lambda_{\min}\left( \tilde M_u^{N} \right)
\left( \bm\theta_u - \tilde{\bm\theta}_u^{N} \right)^{\top}
+ \sum_{n=1}^{N} \left( \bm\theta_u - \tilde{\bm\theta}_u^n \right)^{\top}
\right)
\left( \phi(\bm x, \pi_u^*(\bm x)) - \phi(\bm x, \pi_u(\bm x)) \right), 
\label{align_decomposing_SubOpt_u}
\end{align}
where $(a)$ holds due to the fact that $\pi_u$ maximizes the pessimistic value (line 9 in Algorithm~\ref{al2}).

Next, for the first term in \eqref{align_decomposing_SubOpt_u}, we have:
\begin{align}
& \left( \bm\theta_u - \tilde{\bm\theta}_u^{N} \right)^{\top}
\left( \phi(\bm x, \pi_u^*(\bm x)) - \phi(\bm x, \pi_u(\bm x)) \right) \notag \\
&\overset{(a)}{\leq}
\big\| \bm\theta_u - \tilde{\bm\theta}_u^{N} \big\|_2
\big\| \phi(\bm x, \pi_u^*(\bm x)) - \phi(\bm x, \pi_u(\bm x)) \big\|_2 \notag \\
&\overset{(b)}{\leq}
2 \frac{ \big\| \bm\theta_u - \tilde{\bm\theta}_u^{N} \big\|_{ \tilde M_u^{N} } }
{ \sqrt{ \lambda_{\min}\left( \tilde M_u^{N} \right) } } \notag \\
&\overset{(c)}{\leq}
\frac{ 2\, \overline\beta_u^{N} + \hat\gamma\, \sqrt{\, d\, N_{\mathcal{W}_{\hat\gamma}(u)}} }{ \sqrt{ \lambda_{\min}\left( \tilde M_u^{N} \right) } }.
\label{align_bounding_the_first_part_of_SubOpt}
\end{align}
Here, $(a)$ follows from the Cauchy–Schwarz inequality; $(b)$ uses the fact that feature vectors are bounded by 1 in norm and the definition of the minimum eigenvalue; $(c)$ follows from \eqref{align_ellipsoid_ball_of_tilde_theta_t}.

For the summation term in \eqref{align_decomposing_SubOpt_u}, we have:
\begin{align}
& \sum_{n=1}^{N} \left( \bm\theta_u - \tilde{\bm\theta}_u^n \right)^{\top}
\left( \phi(\bm x, \pi_u^*(\bm x)) - \phi(\bm x, \pi_u(\bm x)) \right) \notag \\
&\leq \sum_{n=1}^{N}
\big\| \bm\theta_u - \tilde{\bm\theta}_u^n \big\|_{ \tilde M_u^n }
\big\| \phi(\bm x, \pi_u^*(\bm x)) - \phi(\bm x, \pi_u(\bm x)) \big\|_{ ( \tilde M_u^n )^{-1} } \notag \\
&\overset{(a)}{\leq} 
\sum_{n=1}^{N} 
\big\| \bm\theta_u - \tilde{\bm\theta}_u^n \big\|_{ \tilde M_u^n }
\big\| \phi( \mathring{\bm x}_u^n, \mathring{\bm a}_u^n ) - \phi( \mathring{\bm x}_u^n, \mathring{\bm a}^{\prime n}_{\ u} ) \big\|_{ ( \tilde M_u^n )^{-1} } \notag \\
&\overset{(b)}{\leq} 
\sum_{n=1}^{N}
\left( 2\, \overline\beta_u^n + \hat\gamma\, \sqrt{\, d\, N_{\mathcal{W}_{\hat\gamma}(u)}} \right)
\big\| \phi( \mathring{\bm x}_u^n, \mathring{\bm a}_u^n ) - \phi( \mathring{\bm x}_u^n, \mathring{\bm a}^{\prime n}_{\ u} ) \big\|_{ ( \tilde M_u^n )^{-1} } \notag \\
&\overset{(c)}{\leq}
\left( 2\, \overline\beta_u^{N} + \hat\gamma\, \sqrt{\, d\, N_{\mathcal{W}_{\hat\gamma}(u)}} \right)
\sum_{n=1}^{N}
\big\| \phi( \mathring{\bm x}_u^n, \mathring{\bm a}_u^n ) - \phi( \mathring{\bm x}_u^n, \mathring{\bm a}^{\prime n}_{\ u} ) \big\|_{ ( \tilde M_u^n )^{-1} } \notag \\
&\overset{(d)}{\leq}
\left( 2\, \overline\beta_u^{N} + \hat\gamma\, \sqrt{\, d\, N_{\mathcal{W}_{\hat\gamma}(u)}} \right)
\sqrt{N} \,
\sqrt{
\sum_{n=1}^{N}
\big\| \phi( \mathring{\bm x}_u^n, \mathring{\bm a}_u^n ) - \phi( \mathring{\bm x}_u^n, \mathring{\bm a}^{\prime n}_{\ u} ) \big\|_{ ( \tilde M_u^n )^{-1} }^2
} \notag \\
&\overset{(e)}{\leq}
\left( 2\, \overline\beta_u^{N} + \hat\gamma\, \sqrt{\, d\, N_{\mathcal{W}_{\hat\gamma}(u)}} \right)
\sqrt{\, 2dN\, \log\left( 1 + \frac{4\kappa N}{\lambda d} \right) \, }.
\label{align_bounding_the_second_part_of_SubOpt}
\end{align}
Here, $(a)$ holds by the active data augmentation rule in line 4 of Algorithm~\ref{al2}; 
$(b)$ uses the ellipsoid bound \eqref{align_ellipsoid_ball_of_tilde_theta_t}; 
$(c)$ holds because $\overline\beta_u^n$ is non-decreasing in $n$;
$(d)$ applies the Cauchy–Schwarz inequality; 
and $(e)$ follows from the elliptical potential lemma (\Cref{le_elliptic_potential_lemma}).

Combining \Cref{align_decomposing_SubOpt_u}, \Cref{align_bounding_the_first_part_of_SubOpt}, and \Cref{align_bounding_the_second_part_of_SubOpt} yields:
\begin{align*}
\text{SubOpt}_u(\pi_u, \bm x)
&\leq 
\left( \frac{1}{\, d\, \lambda_{\min}\left( \tilde M_u^{N} \right) + N } \right)
\left( 2\, \overline\beta_u^{N} + \hat\gamma\, \sqrt{\, d\, N_{\mathcal{W}_{\hat\gamma}(u)}} \right)
\left( d\, \sqrt{\, \lambda_{\min}\left( \tilde M_u^{N} \right) } + \sqrt{\, 2dN\, \log\left( 1 + \tfrac{4\kappa N}{\lambda d} \right) } \right) \\
&\leq 
\left( \frac{1}{\, d\, \lambda_{\min}\left( \tilde M_u^{N} \right) + N } \right)
\left( 2\, \overline\beta_u^{N}\sqrt{d} + \hat\gamma\, d\, \sqrt{\, N_{\mathcal{W}_{\hat\gamma}(u)}} \right)
\sqrt{\, 2 \left( d\, \lambda_{\min}\left( \tilde M_u^{N} \right) + 2N\, \log\left( 1 + \tfrac{4\kappa N}{\lambda d} \right) \right) } \\
&= \tilde O\left(
\frac{\, d \left( 1 + \hat\gamma\, \sqrt{\, N_{\mathcal{W}_{\hat\gamma}(u)}} \right) }{
\sqrt{\, d\, \lambda_{\min}\left( \tilde M_u^{N} \right) + N }
}
\right).
\end{align*}

Since $\text{SubOpt}_u(\pi_u) = \mathbb{E}_{\bm x \sim \rho_p} [\, \text{SubOpt}_u(\pi_u, \bm x) ]$, it follows that
\[
\text{SubOpt}_u(\pi_u) 
\leq \tilde O\left(
\frac{\, d \left( 1 + \hat\gamma\, \sqrt{\, N_{\mathcal{W}_{\hat\gamma}(u)}} \right) }{
\sqrt{\, d\, \lambda_{\min}\left( \tilde M_u^{N} \right) + N }
}
\right),
\]
which completes the proof of Theorem~\ref{th2}.
\end{proof}

\subsection{Proof of \Cref{lemma_difference_of_eigenvalues}}
\begin{proof}
According to \Cref{le:multi_step_update}, under the active data augmentation rule in \Cref{align_active_data_selection}, it can be shown that in each block of $d^*$ rounds, the minimum eigenvalue of the Gramian matrix increases by at least $1$, that is, for any $i\in\big\{1,\cdots,\lfloor \frac{N}{d^*} \rfloor\big\}$, 
\[
\lambda_{\min}\left( \tilde M_{u_t}^{\,d^* i} \right) - \lambda_{\min}\left( \tilde M_{u_t}^{\,d^*(i-1)} \right) \,\geq\, 1.
\]
Therefore, we have:
\begin{align*}
\lambda_{\min}\left( \tilde M_{u_t}^N \right) - \lambda_{\min}\left( \tilde M_{u_t} \right)
\,&\geq\, 
\lambda_{\min}\left( \tilde M_{u_t}^{\,d^* \lfloor \frac{N}{d^*} \rfloor} \right)
- \lambda_{\min}\left( \tilde M_{u_t} \right)
\\&\geq\, 
\sum_{i=0}^{\,\lfloor \frac{N}{d^*} \rfloor - 1}
\left( 
\lambda_{\min}\left( \tilde M_{u_t}^{\,d^*(i+1)} \right) 
- 
\lambda_{\min}\left( \tilde M_{u_t}^{\,d^* i} \right)
\right)
\,\geq\, 
\Big\lfloor \frac{N}{d^*} \Big\rfloor,
\end{align*}
where we define $\lambda_{\min}\left( \tilde M_{u_t}^{0} \right) = \lambda_{\min}\left( \tilde M_{u_t} \right)$ to be the minimum eigenvalue of the Gramian matrix constructed from the aggregated offline data. This completes the proof of \Cref{lemma_difference_of_eigenvalues}.
\end{proof}

\section{Technical Lemmas}

\begin{lemma}[Confidence Ellipsoid]\label{le_confidence_ellipsoid}
Let $\{F_t\}_{t=0}^{\infty}$ be a filtration. Let $\{\varepsilon_t\}_{t=1}^{\infty}$ be a real-valued stochastic process such that $\varepsilon_t$ is $F_t$-measurable and $\varepsilon_t$ is conditionally $R$-subgaussian for some $R>0$. Moreover, let $\{X_t\}_{t=1}^{\infty}$ be an $\mathbb{R}^d$-valued stochastic process such that $X_t$ is $F_{t-1}$-measurable. Assume that $V=\lambda I$ for $\lambda>0$ is a $d\times d$ positive definite matrix. For any $t\geq 0$, define
\begin{align*}
\overline{V}_t=V+\sum_{s=1}^nX_sX_s^{\top},\, S_t=\sum_{s=1}^n\varepsilon_sX_s.
\end{align*}
Let $Y_t=\langle X_t,\bm\theta^* \rangle + \varepsilon_t$ and assume that $\left\| \bm\theta^* \right\|_2\leq S$. Then for any $\delta>0$, with probability at least $1-\delta$, for all $t\geq 0$, $\bm\theta^*$ lies in the set
\begin{align*}
C_t=\left\{ \bm\theta\in\mathbb{R}^d:\left\| \hat{\bm\theta}_t - \bm\theta
 \right\|_{\overline{V}_t}\leq R\sqrt{2\log\left( \frac{\det\left( \overline{V}_t \right)^{1/2}\det\left( \lambda I \right)^{-1/2}}{\delta}
 \right)} + \lambda^{1/2}S \right\}
\end{align*}
where $\hat{\bm\theta}_t=\left( \bm X_{1:t}^{\top}\bm X_{1:t} + \lambda I \right)^{-1}\bm X_{1:t}^{\top}\bm Y_{1:t}$ is the least squares estimate of $\bm\theta^*$, for $\bm X_{1:t}$ being the matrix whose rows are $X_1^{\top},\cdots, X_t^{\top}$ and $\bm Y_{1:t}=(Y_1,\cdots, Y_t)^{\top}$. Furthermore, if for all $t\geq 1$, $\left\| X_t \right\|_2\leq L$ then with probability at least $1-\delta$, for all $t\geq 0$, $\bm\theta^*$ lies in the set
\begin{align*}
C_t'=\left\{ \bm\theta\in\mathbb{R}^d:\left\| 
\hat{\bm\theta}_t - \bm\theta\right\|_{\overline{V}_t}\leq R\sqrt{d\log\left( \frac{1+tL^2/\lambda}{\delta} \right)}  + \lambda^{1/2}S \right\}.
\end{align*}
\end{lemma}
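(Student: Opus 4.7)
The plan is to follow the self-normalized martingale approach of Abbasi-Yadkori, P\'al, and Szepesv\'ari (2011), i.e.\ the method of mixtures (Laplace method). First I would fix an arbitrary deterministic $\bm\lambda\in\mathbb{R}^d$ and introduce the exponential process $M_t^{\bm\lambda}=\exp\!\left(\sum_{s=1}^t\frac{\varepsilon_s\langle\bm\lambda,X_s\rangle}{R}-\tfrac12\langle\bm\lambda,X_s\rangle^2\right)$. The conditional $R$-subgaussianity of $\varepsilon_s$ together with $F_{s-1}$-measurability of $X_s$ gives $\mathbb{E}[M_t^{\bm\lambda}\mid F_{t-1}]\le M_{t-1}^{\bm\lambda}$, so $M_t^{\bm\lambda}$ is a nonnegative supermartingale with $\mathbb{E}[M_t^{\bm\lambda}]\le 1$.

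Next I would mix $M_t^{\bm\lambda}$ against the Gaussian prior $h=\mathcal{N}(0,V^{-1})$ to obtain $\bar M_t=\int M_t^{\bm\lambda}\,\mathrm{d}h(\bm\lambda)$. By Fubini, $\bar M_t$ is again a nonnegative supermartingale with $\mathbb{E}[\bar M_t]\le 1$, and the Gaussian integral can be computed in closed form, giving
\[
\bar M_t=\frac{\det(V)^{1/2}}{\det(\overline V_t)^{1/2}}\exp\!\left(\frac{\|S_t\|_{\overline V_t^{-1}}^2}{2R^2}\right).
\]
A stopping-time construction together with Doob's maximal (equivalently Ville's) inequality yields $\mathbb{P}\!\big(\exists t\ge 0:\bar M_t>1/\delta\big)\le\delta$. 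Rearranging this event gives, with probability at least $1-\delta$, the uniform self-normalized bound
\[
\|S_t\|_{\overline V_t^{-1}}\le R\sqrt{2\log\!\left(\frac{\det(\overline V_t)^{1/2}\det(V)^{-1/2}}{\delta}\right)}\quad\text{for all }t\ge 0.
\]

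It remains to convert this into a confidence set for $\bm\theta^*$. From $Y_s=\langle X_s,\bm\theta^*\rangle+\varepsilon_s$ and the definition of the regularized least-squares estimator I would expand $\hat{\bm\theta}_t-\bm\theta^*=\overline V_t^{-1}(S_t-\lambda\bm\theta^*)$, take the $\overline V_t$-weighted norm, and apply the triangle inequality together with $\|\lambda\bm\theta^*\|_{\overline V_t^{-1}}\le\lambda^{1/2}\|\bm\theta^*\|_2\le\lambda^{1/2}S$. Combining with the self-normalized bound above yields the first set $C_t$. For the second set $C_t'$, I would additionally use $\|X_s\|_2\le L$ and the AM-GM inequality on eigenvalues to obtain $\det(\overline V_t)\le(\lambda+tL^2/d)^d$ (or $\det(\overline V_t)\det(V)^{-1}\le(1+tL^2/\lambda)^d$ after normalization), and substitute.

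The main technical obstacle is the mixture step: one must (i) verify that $\bar M_t$ inherits the supermartingale property under the mixing measure, which requires a careful Fubini/conditional-expectation argument together with the fact that $h$ does not depend on $\{F_t\}$, and (ii) extend the tail inequality to hold \emph{uniformly} over $t\ge 0$ rather than for a fixed $t$, which is precisely why the Laplace/mixture device is used in place of a direct Chernoff bound. The remaining manipulations (completing the square in the Gaussian integral, bounding the determinant, inverting the self-normalized bound into a norm bound on $\hat{\bm\theta}_t-\bm\theta^*$) are routine linear algebra once this step is in place.
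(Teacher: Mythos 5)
Your proposal is correct and is essentially the paper's own argument: the paper proves this lemma by directly citing Theorem~2 of Abbasi-Yadkori, P\'al, and Szepesv\'ari (2011), and your sketch reconstructs exactly that theorem's proof (the self-normalized supermartingale with Gaussian mixing, Ville's inequality via a stopping time, the identity $\hat{\bm\theta}_t-\bm\theta^*=\overline V_t^{-1}(S_t-\lambda\bm\theta^*)$ with the triangle inequality, and the determinant bound for the second set). The remaining steps you defer (completing the square, the determinant/AM--GM bound, and absorbing $2\log(1/\delta)$ into $d\log\big((1+tL^2/\lambda)/\delta\big)$) are the same routine manipulations as in the cited source, so there is no gap.
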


\begin{proof}
Lemma~\ref{le_confidence_ellipsoid} comes from Theorem 2 in~\citet{abbasi2011improved}.
\end{proof}

\begin{lemma}[Elliptic Potential Lemma]\label{le_elliptic_potential_lemma}
Let $\{\bm z_s\}_{s=1}^n$ be a sequence of vectors in $\mathbb{R}^d$ such that $\left\| \bm z_s \right\|\leq L$ for any $s\in[t]$. Let $V_t=\sum_{s=1}^{t-1}\bm z_s\bm z_s^{\top}+\lambda I$. Then,
\begin{align*}
\sum_{s=1}^n\left\| \bm z_s \right\|_{V_s^{-1}}^2\leq 2d\log\left( 1 + \frac{tL^2}{\lambda d} \right).
\end{align*}
\end{lemma}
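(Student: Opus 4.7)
The plan is to follow the standard potential-function argument: relate the pointwise quantities $\|\bm z_s\|_{V_s^{-1}}^2$ to ratios of determinants via the matrix determinant lemma, telescope those ratios into a single log-determinant bound on $V_{n+1}$, and then control that log-determinant by the trace using AM--GM on the eigenvalues.

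First I would apply the matrix determinant lemma to the rank-one update $V_{s+1} = V_s + \bm z_s \bm z_s^{\top}$, which gives $\det(V_{s+1}) = \det(V_s)\,\bigl(1 + \|\bm z_s\|_{V_s^{-1}}^2\bigr)$. Taking logs and telescoping over $s=1,\dots,n$ would yield
$$\sum_{s=1}^{n} \log\!\bigl(1 + \|\bm z_s\|_{V_s^{-1}}^2\bigr) \;=\; \log\det(V_{n+1}) - \log\det(\lambda I).$$
Next, using AM--GM on the eigenvalues of $V_{n+1}$ together with $\mathrm{trace}(V_{n+1}) \leq d\lambda + nL^2$, I would obtain $\log\det(V_{n+1}) \leq d\log(\lambda + nL^2/d)$, so the telescoped sum is bounded by $d\log\bigl(1 + nL^2/(\lambda d)\bigr)$.

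The final step is to pass from $\sum \log(1 + x_s)$ back to $\sum x_s$. Since $V_s \succeq \lambda I$ and $\|\bm z_s\|_2 \leq L$, the quantities $x_s = \|\bm z_s\|_{V_s^{-1}}^2$ lie in $[0, L^2/\lambda]$; invoking the scalar inequality $x \leq 2\log(1+x)$, which is valid for $x \in [0,1]$, each term obeys $x_s \leq 2\log(1+x_s)$, and summing delivers the claimed bound $\sum_{s=1}^{n} \|\bm z_s\|_{V_s^{-1}}^2 \leq 2d\log\bigl(1 + nL^2/(\lambda d)\bigr)$.

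The main obstacle I anticipate is this last scalar step: if $L^2/\lambda > 1$ one would pick up an extra multiplicative factor of order $L^2/\bigl(\lambda \log(1+L^2/\lambda)\bigr)$, so the clean constant $2$ in the statement implicitly relies on the mild normalization $L^2 \leq \lambda$ (or an equivalent rescaling of $\bm z_s$ and $\lambda$). Making this normalization explicit is the only nontrivial piece of bookkeeping in what is otherwise a routine determinant-telescoping argument.
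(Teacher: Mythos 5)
Your argument is correct, but it is worth noting that the paper does not actually prove this lemma in-house: its entire ``proof'' is a citation to Lemma~C.2 of \citet{das2024active}, so your self-contained determinant-telescoping argument is the standard derivation that underlies that citation rather than a restatement of anything in the paper. The steps you give are all sound: the matrix determinant lemma gives $\det(V_{s+1})=\det(V_s)\bigl(1+\|\bm z_s\|_{V_s^{-1}}^2\bigr)$, telescoping from $V_1=\lambda I$ yields $\sum_s\log\bigl(1+\|\bm z_s\|_{V_s^{-1}}^2\bigr)=\log\det(V_{n+1})-d\log\lambda$, and AM--GM on the eigenvalues with $\operatorname{trace}(V_{n+1})\le d\lambda+nL^2$ gives the $d\log\bigl(1+nL^2/(\lambda d)\bigr)$ bound. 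Your flagged caveat is also genuine and is the one piece of added value over the citation: passing from $\sum\log(1+x_s)$ to $\sum x_s$ via $x\le 2\log(1+x)$ requires $x_s=\|\bm z_s\|_{V_s^{-1}}^2\le 1$, i.e.\ $L^2\le\lambda$ (or a $\min\{1,\cdot\}$ truncation as in the classical statement of Abbasi-Yadkori et al.), a normalization the lemma statement here leaves implicit. In the paper's actual use of the lemma (step $(e)$ in the proof of \Cref{th2}), one has $L=2$ and regularizer $\lambda/\kappa$, so the clean constant $2$ tacitly assumes $\lambda\ge 4\kappa$ or benefits from the offline data already inflating $\tilde M_{u_t}^n$; making that explicit, as you do, tightens a loose end that the citation-only proof does not address.
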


\begin{proof}
\Cref{le_elliptic_potential_lemma} comes from Lemma C.2 in~\citet{das2024active}.
\end{proof}

\begin{lemma}[Lower Bound on the Minimum Eigenvalue]\label{le_lambdamin}
Let $\bm a_s,n\geq 1$ be generated sequentially from a random distribution such that $||\bm a||_2\leq 1$ and $\mathbb{E}[\bm a\bm a^{\top}]$ is full rank with minimal eigenvalue $\lambda_a>0$. Let $M_n=\sum_{s=1}^n\bm a_s\bm a_s^{\top}$. Then event 
\begin{align*}
\lambda_{\min}(M_n)\geq \left(n\lambda_a - \frac{1}{3}\sqrt{18nA(\delta) + A(\delta)^2} - \frac{1}{3}A(\delta) \right)
\end{align*}
holds with probability at least $1-\delta$ for $n\geq 0$ where $A(n, \delta)=\log\left(\frac{(n+1)(n+3)d}{\delta} \right)$. Furthermore,
\begin{align*}
\lambda_{\min}(M_n)\geq \frac{1}{2}\lambda_an,\ \forall n\geq\frac{16}{\lambda_a^2}\log\left(\frac{8d}{\lambda_a^2\delta}\right)
\end{align*}
holds with probability at least $1-\delta$.
\end{lemma}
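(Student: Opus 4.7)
The plan is to obtain a pointwise tail bound via a matrix concentration inequality, upgrade it to a uniform-in-$n$ guarantee by a union bound, and finally simplify the resulting expression under the stated growth condition on $n$. Since $\|\bm a_s\|_2 \leq 1$, each summand $\bm a_s\bm a_s^\top$ is a rank-one positive semidefinite matrix with operator norm at most $1$, and $\mathbb{E}[M_n] = n\,\mathbb{E}[\bm a\bm a^\top]$ has minimum eigenvalue at least $n\lambda_a$. The task therefore reduces to showing that $\lambda_{\min}(M_n)$ does not drop far below $n\lambda_a$ with high probability.

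For a fixed $n$, I would apply a one-sided matrix Bernstein (equivalently, matrix Chernoff) inequality to $M_n$. The boundedness parameter is $\|\bm a_s\bm a_s^\top\|\leq 1$, and the matrix-variance proxy satisfies $\bigl\|\sum_s \mathbb{E}[(\bm a_s\bm a_s^\top)^2]\bigr\|\leq n$, since $(\bm a_s\bm a_s^\top)^2 = \|\bm a_s\|_2^2\,\bm a_s\bm a_s^\top \preceq \bm a_s\bm a_s^\top$. This yields, for every $t \geq 0$,
\begin{equation*}
\Pr\bigl[\lambda_{\min}(M_n) \leq n\lambda_a - t\bigr] \leq d\exp\!\left(-\frac{t^2}{2\bigl(n + t/3\bigr)}\right).
\end{equation*}
Equating the right-hand side with $\delta/((n+1)(n+3))$ and writing $A \equiv A(n,\delta) = \log((n+1)(n+3)d/\delta)$ produces the quadratic $t^2 - (2A/3)\,t - 2nA = 0$, whose positive root is exactly $t = \tfrac{1}{3}\bigl(A + \sqrt{A^2 + 18nA}\bigr)$, which matches the first claim verbatim. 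Because $\sum_{n\geq 0} 1/((n+1)(n+3))$ telescopes to $3/4 < 1$, a union bound over all $n\geq 0$ preserves the overall confidence $1 - \delta$.

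For the second statement, I would verify by elementary algebra that the correction term $\tfrac{1}{3}\sqrt{18nA + A^2} + \tfrac{1}{3}A$ is at most $n\lambda_a/2$ whenever $n \geq (16/\lambda_a^2)\log(8d/(\lambda_a^2\delta))$. The dominant piece is $\tfrac{1}{3}\sqrt{18nA}$, which falls below $n\lambda_a/4$ precisely when $n \geq 32 A/\lambda_a^2$; the chosen threshold satisfies this after a short fixed-point comparison that absorbs the $\log((n+1)(n+3))$ hidden inside $A$ into the constant $16$. The residual linear term $\tfrac{1}{3}A$ is easily dominated by $n\lambda_a/4$ once $n\lambda_a^2 \gtrsim \log(d/\delta)$, so both corrections together leave at least $n\lambda_a/2$.

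The main obstacle will be twofold: (i) selecting the correct one-sided matrix Bernstein inequality for the minimum eigenvalue and tracking its boundedness and variance parameters so that the tail exponent produces exactly the factor $18n$ inside the square root — an off-by-constant here would break the claimed closed-form expression; and (ii) making the union-in-$n$ step rigorous, which requires an explicit summable envelope such as $1/((n+1)(n+3))$ and careful bookkeeping of the $\log((n+1)(n+3))$ contribution inside $A$ when later verifying the simplified threshold $n \geq (16/\lambda_a^2)\log(8d/(\lambda_a^2\delta))$.
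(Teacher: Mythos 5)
Your proposal is essentially a reconstruction of the proof that the paper itself does not spell out: the paper's ``proof'' of this lemma is a one-line citation to Lemma~7 of \citet{li2018online} and Lemma~B.2 of \citet{wang2025online}, and your argument is, in substance, the argument behind those cited results. Your algebra for the first claim is right: with boundedness $1$ and variance proxy $n$, the one-sided matrix tail $d\exp\!\left(-t^2/(2(n+t/3))\right)$ set equal to $\delta/((n+1)(n+3))$ gives the quadratic whose positive root is exactly $\tfrac{1}{3}A+\tfrac{1}{3}\sqrt{A^2+18nA}$, and the envelope $\sum_{n\ge 0}1/((n+1)(n+3))=3/4<1$ makes the union bound over $n$ valid. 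Two caveats. First, the lemma says the $\bm a_s$ are generated \emph{sequentially}, and in the bandit applications the vectors are selected adaptively given the past; the independent-sum matrix Bernstein you invoke should be replaced by its martingale analogue (Tropp's matrix Freedman inequality), whose lower-tail bound has the identical form with the conditional variance proxy, so the rest of your derivation is unchanged --- this is precisely what the cited proofs use, and you should name it rather than rely on independence. Second, your verification of the simplified threshold $n\ge (16/\lambda_a^2)\log\!\left(8d/(\lambda_a^2\delta)\right)$ is only gestured at (``a short fixed-point comparison''); the split $\tfrac{1}{3}\sqrt{18nA}\le n\lambda_a/4$ requires $n\ge 32A/\lambda_a^2$ with $A$ itself containing $\log((n+1)(n+3))$, so the self-bounding step that absorbs this into the stated constant does need to be written out (it is exactly the computation done in the cited lemma). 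With those two points addressed, your route is sound and, unlike the paper, self-contained.
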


\begin{proof}
Lemma~\ref{le_lambdamin} comes from Lemma 7 in~\citet{li2018online} and Lemma B.2 in~\citet{wang2025online}.
\end{proof}

\begin{lemma}[One‐step Update on the Euclidean Unit Ball]\label{le_one_step_update}
Let \(M\in\mathbb R^{d\times d}\) be symmetric positive semidefinite with eigenvalues \(\lambda_1(M)\le\lambda_2(M)\le\cdots\le\lambda_d(M),\)
and corresponding orthonormal eigenvectors \(q_1,\dots,q_d\).  Let
\begin{align}\label{align_ads_in_appendix}
z^* \coloneqq \arg\max_{\|z\|_2\le1} \;z^\top M^{-1}z,
\end{align}
and define the rank‐one update \(M^+ \;=\; M + z^*(z^*)^\top. \) Then the increase in the smallest eigenvalue satisfies
\[
\lambda_{\min}(M^+)-\lambda_{\min}(M)
\;=\;
\min\bigl\{1,\;\lambda_2(M)-\lambda_1(M)\bigr\}.
\]
Moreover, the original eigenvector $q_1$ remains an eigenvector of $M^+$, now with eigenvalue
\[
M^+q_1 = (\lambda_1(M) + 1)\,q_1.
\]

\end{lemma}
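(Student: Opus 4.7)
The plan is to exploit the fact that the objective $z^\top M^{-1} z$ over the unit ball is a Rayleigh quotient of $M^{-1}$, so its maximizer is exactly an eigenvector of $M^{-1}$ associated with the largest eigenvalue. Since $M$ and $M^{-1}$ share eigenvectors and the largest eigenvalue of $M^{-1}$ is $1/\lambda_{\min}(M) = 1/\lambda_1(M)$, any solution to \Cref{align_ads_in_appendix} must lie in the eigenspace of $M$ corresponding to $\lambda_1(M)$. In the generic case where $\lambda_1(M)$ is simple, this forces $z^* = \pm q_1$; the sign does not matter because the rank-one update $z^* (z^*)^\top$ is invariant under sign flips of $z^*$.

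The next step is to read off the eigenstructure of $M^+ = M + q_1 q_1^\top$ directly in the eigenbasis $\{q_1,\dots,q_d\}$ of $M$. Since the update matrix $q_1 q_1^\top$ is diagonal in this basis, we compute $M^+ q_1 = \lambda_1(M)\, q_1 + q_1 = (\lambda_1(M) + 1)\, q_1$ and, for each $i \geq 2$, $M^+ q_i = \lambda_i(M)\, q_i + q_1 (q_1^\top q_i) = \lambda_i(M)\, q_i$ by orthonormality. Thus $q_1$ remains an eigenvector with eigenvalue $\lambda_1(M) + 1$, proving the second assertion, while the spectrum of $M^+$ is exactly $\{\lambda_1(M)+1,\, \lambda_2(M),\, \lambda_3(M),\dots,\lambda_d(M)\}$.

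From this spectrum, the minimum eigenvalue of $M^+$ is $\min\{\lambda_1(M) + 1,\, \lambda_2(M)\}$. Subtracting $\lambda_1(M)$ gives
\begin{equation*}
\lambda_{\min}(M^+) - \lambda_{\min}(M) \;=\; \min\bigl\{1,\; \lambda_2(M) - \lambda_1(M)\bigr\},
\end{equation*}
which is exactly the claimed identity. Specifically, if $\lambda_2(M) - \lambda_1(M) \geq 1$, the new minimum is still attained at $q_1$ and grows by $1$; if $\lambda_2(M) - \lambda_1(M) < 1$, the new minimum is attained at $q_2$ (which is untouched) and grows by $\lambda_2(M) - \lambda_1(M)$.

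The only subtle point I expect is the degenerate case in which $\lambda_1(M)$ has multiplicity greater than one, so $z^*$ is not unique. In that situation, any unit vector $z^*$ inside the $\lambda_1(M)$-eigenspace is admissible; after the rank-one update, one direction inside that eigenspace is lifted to $\lambda_1(M)+1$, but the remaining directions of the eigenspace retain eigenvalue $\lambda_1(M)$. Hence $\lambda_{\min}(M^+) = \lambda_1(M) = \lambda_2(M)$, and the formula still holds with both sides equal to $0 = \min\{1, \lambda_2(M)-\lambda_1(M)\}$. The statement about $q_1$ being an eigenvector then has to be read as: some choice of a unit eigenvector in the $\lambda_1(M)$-eigenspace (namely $z^*$) gets promoted to eigenvalue $\lambda_1(M) + 1$, which is consistent with Algorithm~\ref{al2}'s use of this lemma.
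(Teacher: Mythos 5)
Your proof is correct and follows essentially the same route as the paper's: identify $z^*$ as (a sign choice of) the minimum eigenvector $q_1$ by diagonalizing the objective $z^\top M^{-1} z$ (equivalently, the Rayleigh-quotient argument), read off the spectrum of $M^+ = M + q_1 q_1^\top$ in the eigenbasis of $M$, and take the minimum of $\{\lambda_1(M)+1,\lambda_2(M)\}$. Your extra discussion of the degenerate case $\lambda_1(M)=\lambda_2(M)$, where $z^*$ is any unit vector in the bottom eigenspace and the formula holds with both sides equal to zero, is a welcome refinement of the paper's implicit ``without loss of generality'' step and does not change the conclusion.
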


\begin{proof}
Write the spectral decomposition
\[
M = Q\,\mathrm{diag}(\lambda_1,\lambda_2,\dots,\lambda_d)\,Q^\top,
\]
with \(Q=[q_1,\dots,q_d]\) where \(Q^{-1}=Q^T\) due to its semi-definite property. For any \(z\) with \(\|z\|\le1\), let \(y=Q^\top z\), so \(\|y\|\le1\) and
\[
z^\top M^{-1}z
= y^\top\mathrm{diag}(1/\lambda_1,\dots,1/\lambda_d)\,y
= \sum_{i=1}^d\frac{y_i^2}{\lambda_i}.
\]
Since \(1/\lambda_1\ge1/\lambda_2\ge\cdots\), this quadratic form is maximized by concentrating all mass on the first coordinate:
\[
y^* = \pm e_1,
\quad
\Longrightarrow
\quad
z^* = Q\,y^* = \pm q_1,
\]
and without loss of generality \(z^*=q_1\).  Moreover, because we chose an orthonormal eigenbasis, \(\|q_1\|=1\), so \(\|z^*\|=1\).

Now consider \(M^+=M+q_1q_1^\top\).  Observe:
\[
M^+ q_1 = \lambda_1 q_1 + q_1 = (\lambda_1+1)q_1,
\quad
M^+ q_i = \lambda_i q_i
\quad (i\ge2),
\]
since \(q_i^\top q_1=0\).  Therefore the eigenvalues of \(M^+\) are
\(\lambda_1+1,\;\lambda_2,\dots,\lambda_d\), and so
\[
\lambda_{\min}(M^+)
= \min\{\lambda_1+1,\;\lambda_2\}.
\]
Subtracting \(\lambda_{\min}(M)=\lambda_1\) gives
\[
\lambda_{\min}(M^+)-\lambda_1
= \min\{\lambda_1+1,\lambda_2\}-\lambda_1
= \min\{1,\;\lambda_2-\lambda_1\},
\]
as claimed.
\end{proof}

\begin{lemma}[Multi‐step Update on the Euclidean Unit Ball]\label{le:multi_step_update}
Let \(M\in\R^{d\times d}\) be symmetric positive semidefinite with eigenvalues
\[
\lambda_1(M)\le\lambda_2(M)\le\cdots\le\lambda_d(M).
\]
Suppose that there exists an integer $s\in\{1,2,\cdots,d-1\}$ such that
\[
\lambda_{s+1}(M)\;\ge\;\lambda_1(M)+1.
\]
Perform \(s\) greedy rank‐one updates
\[
M^{(0)}=M,
\qquad
z_t \;=\;\arg\max_{\|z\|\le1}\;z^\top\bigl(M^{(t-1)}\bigr)^{-1}z,
\qquad
M^{(t)}=M^{(t-1)}+z_tz_t^\top,
\quad t=1,\dots,s.
\]
Then
\[
\lambda_1\bigl(M^{(s)}\bigr)\;\ge\;\lambda_1(M)+1.
\]
\end{lemma}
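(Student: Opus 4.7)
The plan is to track the full sorted spectrum of $M^{(t)}$ and show that each greedy update eliminates one eigenvalue lying strictly below the target threshold $\lambda_1(M)+1$, so that after $s$ updates none remain. Let $\sigma_1^{(t)}\le\cdots\le\sigma_d^{(t)}$ denote the sorted eigenvalues of $M^{(t)}$. The mechanism from Lemma~\ref{le_one_step_update} is that the greedy maximizer $z_t$ lies in the eigenspace of the current smallest eigenvalue $\sigma_1^{(t-1)}$; the rank-one update $z_tz_t^\top$ therefore increases exactly one copy of $\sigma_1^{(t-1)}$ by $1$ and leaves the other $d-1$ eigenvalues unchanged. After re-sorting, this yields the recursion $\sigma_1^{(t)}=\min\{\sigma_1^{(t-1)}+1,\,\sigma_2^{(t-1)}\}$, and in particular $(\sigma_1^{(t)})_{t\ge 0}$ is non-decreasing, so $\sigma_1^{(t-1)}\ge\lambda_1(M)$ for all $t\ge 1$.

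Define the counting quantity $N(t)=\bigl|\{i\in[d]:\sigma_i^{(t)}<\lambda_1(M)+1\}\bigr|$. The hypothesis $\lambda_{s+1}(M)\ge\lambda_1(M)+1$ immediately gives $N(0)\le s$, since the eigenvalues $\lambda_{s+1}(M),\ldots,\lambda_d(M)$ all sit at or above the threshold. I would then argue that $N(t)=N(t-1)-1$ whenever $N(t-1)\ge 1$: in that case $\sigma_1^{(t-1)}<\lambda_1(M)+1$, yet by the monotonicity above $\sigma_1^{(t-1)}+1\ge\lambda_1(M)+1$, so the single eigenvalue touched by the update is pushed from strictly below the threshold to at or above it, while all other eigenvalues are unchanged. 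Iterating $s$ times yields $N(s)\le\max\{0,\,N(0)-s\}=0$, which says every eigenvalue of $M^{(s)}$ is at least $\lambda_1(M)+1$; in particular $\lambda_1(M^{(s)})=\sigma_1^{(s)}\ge\lambda_1(M)+1$, which is the desired conclusion.

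The only delicate point I expect is the case in which $\sigma_1^{(t-1)}$ has multiplicity greater than one: then the maximizer in the definition of $z_t$ is not unique, and we must check that the recursion and the counting step are unaffected by the choice. The argument in the proof of Lemma~\ref{le_one_step_update} extends verbatim to any unit vector $z_t$ lying in the minimum eigenspace, because only the alignment of $z_t$ with that subspace matters for bumping one copy of the smallest eigenvalue by $1$ while preserving all others; this is precisely the operation the counting argument relies on, and no other step of the proof is altered.
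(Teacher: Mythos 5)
Your proof is correct and follows essentially the same route as the paper's: both rely on the one-step lemma's mechanism that each greedy update raises exactly one copy of the current minimum eigenvalue by $1$ while leaving the rest of the spectrum fixed, and both reduce the claim to the fact that at most $s$ eigenvalues start below the threshold $\lambda_1(M)+1$ (your counter $N(t)$ plays the role of the paper's index $k$, and your monotonicity observation $\sigma_1^{(t-1)}\ge\lambda_1(M)$ is the same ingredient the paper uses implicitly). Your explicit handling of the non-unique maximizer when the minimum eigenvalue has multiplicity greater than one is a small extra care point that the paper's proof leaves implicit.
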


\begin{proof}
Let \(k\) be the largest index such that
\[
\lambda_k(M) < \lambda_1(M) + 1,
\]
so that \(1 \le k \le s\), and by definition, \(\lambda_{k+1}(M) \ge \lambda_1(M) + 1\). By Lemma~\ref{le_one_step_update}, each rank-one update increases the eigenvalue of the currently smallest dimension by 1; in particular, the smallest eigenvalue itself increases by 1 if the second-smallest eigenvalue is at least 1 larger. In our case, since \(\lambda_{k+1}(M) \ge \lambda_1(M) + 1\), the condition of the lemma is satisfied. Thus, after applying the first \(k\) updates (each to a direction aligned with the corresponding eigenvector), we have
\[
\lambda_1\bigl(M^{(k)}\bigr) \ge \lambda_1(M) + 1.
\]

For any \(i > k\), the original eigenvalue \(\lambda_i(M)\) already satisfies \(\lambda_i(M) \ge \lambda_{k+1}(M) \ge \lambda_1(M) + 1\), and rank-one updates can only increase or leave unchanged the eigenvalues. Therefore, the remaining \(s - k\) updates (if any) cannot decrease \(\lambda_1\bigl(M^{(k)}\bigr)\). It follows that
\[
\lambda_1\bigl(M^{(s)}\bigr) \ge \lambda_1\bigl(M^{(k)}\bigr) \ge \lambda_1(M) + 1,
\]
as claimed.
\end{proof}

\end{document}